\documentclass[twoside]{article}

%
\usepackage[accepted]{aistats2022}
%

\usepackage[titletoc]{appendix}
\usepackage{subcaption}
\usepackage{varwidth}
\usepackage{capt-of}
\usepackage{boldline}
\usepackage{epsfig}
\usepackage{latexsym}
\usepackage{url}
\usepackage{float}
\usepackage[hypertexnames=false,hyperfootnotes=false]{hyperref}
\usepackage{texnansi}
\usepackage{color}
\usepackage{tikz}
\usepackage{afterpage}
\usepackage{enumerate}
\usepackage[normalem]{ulem}
\usepackage{lmodern}
\usepackage{rotating}
\usepackage{graphicx}
\usepackage{multicol}
\usepackage{booktabs}
\usepackage{ifthen}
\usepackage{enumitem}
\setlist[enumerate]{topsep=0pt,itemsep=-1ex,partopsep=1ex,parsep=1ex}

\usepackage{amsmath}
\usepackage{amsthm}
\usepackage{amssymb}
\usepackage{amsfonts}

\usepackage{titlesec}






\makeatletter
\g@addto@macro{\UrlBreaks}{\UrlOrds}
\makeatother





\newcommand{\R}{\ensuremath{\field{R}}} 




\newcommand{\minimize}{\ensuremath{\mathop{\mathrm{minimize}}\limits}}
\newcommand{\maximize}{\ensuremath{\mathop{\mathrm{maximize}}\limits}}
\newtheorem{assumption}{Assumption}

\newtheorem{theorem}{Theorem}
\newtheorem{corollary}{Corollary}
\newtheorem{lemma}{Lemma}
\newtheorem{claim}{Claim}

\newtheorem{informal-theorem}{Theorem}
\newtheorem{informal-proposition}{Proposition}




\renewcommand{\Pr}{\mathbb{P}}


\usetikzlibrary{arrows,patterns,plotmarks}
\tikzstyle{every picture} += [>=stealth]

\makeatletter
\def\@seccntformat#1{\csname the#1\endcsname.\quad}
\makeatother



%


\def\draft{1}  

\usepackage{dsfont}
\usepackage{tcolorbox}


\usepackage{mathtools}

\usepackage{algorithm}
\usepackage[noend]{algpseudocode}
\usepackage[normalem]{ulem}

\ifnum\draft=1

\else

\fi


\def\<{\langle}
\def\>{\rangle}

\def\R{\mathbb{R}}

\newcommand{\norm}[1]{\left\lVert\mspace{1mu} #1 \mspace{1mu}\right\rVert}



\newcommand{\E}[1] {{\mathbb{E}}\left(#1\right)}

\newcommand {\Prob}[1] {{\mathbb{P}}\left(#1\right)}

\newcommand {\F } {{\mathrm{F}}}

\newcommand {\Ber} {{\rm Ber}}
\newcommand {\1}[1] {{\mathds{1}}\left\{#1\right\}}

\newcommand{\rank}{\mathrm{rank}}

\renewcommand{\O}{\mathrm{O}}

\newcommand{\dd}{\mathrm{d}}
\newcommand{\Dbalan}{\Delta_{\mathrm{balancing}}}

\newcommand{\sigmaL}{L\log(n)}
\newcommand{\noise}{\frac{\sigmaL}{\sigma_{\min}}\sqrt{\frac{n}{p}}}
\newcommand{\lognoise}{\frac{\sigmaL}{\sigma_{\min}}\sqrt{\frac{n\log(n)}{p}}}

\newcommand{\bXdT}{\bar{X}^{\dd T}}
\newcommand{\bXd}{\bar{X}^{\dd}}
\newcommand{\bYdT}{\bar{Y}^{\dd T}}
\newcommand{\bYd}{\bar{Y}^{\dd}}

\DeclareMathOperator*{\argmin}{arg\,min}

\usepackage[capitalise]{cleveref}


\usepackage{natbib}
 \bibpunct[, ]{(}{)}{,}{a}{}{,}%
 %
 %
 %
 %


\begin{document}

%
\runningtitle{Uncertainty Quantification With Heterogeneous and Sub-Exponential Noise}

%

\twocolumn[

\aistatstitle{Uncertainty Quantification For Low-Rank Matrix Completion With Heterogeneous and Sub-Exponential Noise}

\aistatsauthor{ Vivek F. Farias \And Andrew A. Li \And  Tianyi Peng }


\aistatsaddress{Massachusetts Institute of Technology \And Carnegie Mellon University \And Massachusetts Institute of Technology}]

\begin{abstract}
The problem of low-rank matrix completion with heterogeneous and sub-exponential (as opposed to homogeneous and Gaussian) noise is particularly relevant to a number of applications in modern commerce. Examples include panel sales data and data collected from web-commerce systems such as recommendation engines. An important unresolved question for this problem is characterizing the distribution of estimated matrix entries under common low-rank estimators. Such a characterization is essential to any application that requires quantification of uncertainty in these estimates and has heretofore only been available under the assumption of homogenous Gaussian noise. Here we characterize the distribution of estimated matrix entries when the observation noise is heterogeneous sub-exponential and provide, as an application, explicit formulas for this distribution when observed entries are Poisson or Binary distributed.  
\end{abstract}

\section{Introduction} \label{sec:intro}

\begin{table*}[ht!]
\begin{center}
\begin{tabular}{cc}
\toprule
{\bf Noise Model} & {\bf Entry-wise Uncertainty} \\ 
\midrule 
Gaussian   & $\sigma^2(u^{* 2}_i + v^{* 2}_j)/p$ \vspace{3pt}\\
Poisson  & $M^{*}_{ij}(u^{*}_i \|u^{*}\|^3_3 + v^{*}_j \|v^{*}\|^3_3)/p$ \vspace{3pt} \\
Binary  & $M^{*}_{ij}(u^{*}_i \|u^{*}\|^3_3 + v^{*}_j \|v^{*}\|^3_3 - M_{ij}^{*}\|u^{*}\|^4_4 - M_{ij}^{*}\|v^{*}\|_4^4)/p$ \\  
 \bottomrule 
 \end{tabular}
 \caption{A comparison of uncertainty formulas for different noise models when $r=1$, i.e., $M^{*} = \sigma_{1} u^{*} v^{*\top}.$ See details in \cref{sec:results}. 
}
 \label{tb:comparison-model}
\end{center}
\end{table*}

Consider the problem of {\em low-rank matrix completion}: there exists a low-rank matrix that we seek to recover, having observed only a subset of its entries, each perturbed by additive noise. A rich stream of research over the past two decades has essentially solved this problem -- there exist efficient algorithms  which achieve order-optimal recovery guarantees under provably-minimal assumptions \citep{candes2009exact,candes2010matrix,keshavan2010matrix}. Further advances have yielded (and continue to yield) algorithmic improvements \citep{mazumder2010spectral,jain2013low,tanner2016low,dong2021riemannian}, and a deeper understanding of the optimization landscape itself \citep{ge2016matrix,zhu2017global}. 

Naturally, these algorithms have been applied in a vast array of applications, including recommendation systems, bioinformatics, network localization, and modern commerce \citep{su2009survey,natarajan2014inductive,so2007theory,amjad2017censored}, just to name a few. Now many of these applications require, in addition to scalability and accuracy, the ability to quantify the {\em uncertainty} of an estimator -- for example, something as seemingly-straightforward as confidence intervals on the estimated entries of a matrix.

Such an {\em uncertainty quantification} procedure, analogous to existing procedures for problems like linear regression, would ideally (a) apply to a commonly-used estimator, (b) require no more additional computation than the estimator itself, and (c) be justified by a (limiting) distributional characterization. Given the volume and success of the research just described, it is perhaps surprising that this problem has been largely unsolved (see the Related Work for past progress).

Fortunately, there was a recent ``breakthrough.'' Applying newer techniques such as the \textit{leave-one-out} technique and fine-grained entry-wise analysis \citep{ma2018implicit,ding2020leave,abbe2020entrywise},  \cite{CFMY:19,chen2020noisy} proposed an uncertainty quantification technique for matrix completion, which satisfies the three ``ideal'' conditions above, in the case of {\em homogeneous Gaussian noise}. Further progress in \cite{xia2021statistical} extended this to homogeneous sub-Gaussian noise.

{\bf Toward ``Realistic'' Noise:} A gap still exists when we seek to apply these inferential results in practice, since many applications have more sophisticated noise models (namely, heterogeneous and sub-exponential noise). For example, in discrete panel sales data, the observation for sales at a location during a period of time is commonly modeled as {\em Poisson} with a certain expected sales rate \citep{amjad2017censored,shi2014production}. Similarly in web-commerce systems, data indicating clicks or purchases is often binary and modeled as {\em Bernoulli} random variables \citep{ansari2003customization,grover1987simultaneous}. 

Thus motivated, in this work we establish the first uncertainty quantification results for matrix completion with {\em heterogenous and sub-exponential noise}. Precisely, we characterize the distribution of recovered matrix entries from common estimators. An application of our results can already be seen in \cref{tb:comparison-model}, where we have derived explicit formulas under Poisson and Binary noise, which are distinctive from the homogeneous Gaussian noise case already existing in the literature. In addition, we demonstrate the quality of our procedure through experiments on real sales data. 
The proof of our main result generalizes the proof framework in \citep{CFMY:19}, leveraging recent results for sub-exponential matrix completion from \cite{mcrae2019low}, and a new high-dimension concentration bound (\cref{lem:subexponential-Bernstein}), which may be of independent interest. 
  

\textbf{Related Work:} This paper is related to at least three streams of work. The first is, naturally, {\em uncertainty quantification in matrix completion}. Besides the works described above, prior approaches to this were based on either (a) converting recovery guarantees on matrix norms to confidence regions \citep{carpentier2015uncertainty,carpentier2018adaptive}, (b) the Bayesian formulation of matrix completion \citep{salakhutdinov2008bayesian,fazayeli2014uncertainty,tanaka2021bayesian,alquier2015bayesian}, or (c) deep-learning-bsaed methods \citep{lakshminarayanan2016simple,zeldes2017deep}.
The second stream relates to {\em sub-exponential matrix completion}. \cite{mcrae2019low} established guarantees on the Frobenius error $\|\hat{M}-M^{*}\|_{\F}$; \cite{farias2021near} established entry-wise error guarantees. This work makes takes one step further with an entry-wise distributional characterization of the error. Finally, there is a line of work, in multi-variate linear regression, advocating the use of {\em heteroskedasticity-robust variance estimators} instead of homoskedasticity estimators, since the former are more robust to heterogeneous noise \citep{long2000using,hayes2007using,imbens2016robust,cribari2014new}. Our work is in the same spirit, but in the context of matrix completion.

\paragraph{Notation:} 
The sub-exponential norm of a random variable $X$ is defined as $ \norm{X}_{\psi_1} := \inf\{t > 0: \E{\exp(|X|/t)}\leq 2\}.$ For a matrix $A \in \R^{m\times n}$, we abbreviate $\sum_{(i,j)\in [m]\times [n]} A_{ij}$ as $\sum_{ij} A_{ij}$ when no ambiguity exists. We require a few matrix norms: $\norm{A}_{2,\infty}^2 := \max_{i}\sum_{j} A_{ij}^2$, $\norm{A}_{\max} = \max_{ij} |A_{ij}|$, and $\norm{A}_{\F}^2 = \sum_{ij} A_{ij}^2$. The spectral norm is denoted $\norm{A}_2.$

\section{Model} \label{sec:model}
Let $M^{*} \in \mathbb{R}^{m\times n}$ be a rank-$r$ matrix, where $m \le n$ without loss of generality. Let $O = M^{*} + E$ be the realization of $M^{*}$ corrupted by a noise matrix $E \in \R^{m\times n}$. We observe $P_{\Omega}(O)$, which is the subset of entries of $O$ restricted to an observation set $\Omega \subset [m] \times [n]$:
\begin{align*}
    P_{\Omega}(O)_{ij} = \begin{cases}O_{ij} & (i, j) \in \Omega \\ 0 & (i, j) \notin \Omega \end{cases}.
\end{align*}
The {\em matrix completion} problem is to recover $M^{*}$ from this noisy and partial observation $P_{\Omega}(O).$

Let $M^{*} = U^{*}\Sigma^{*}V^{*\top}$ be the SVD of $M^{*}$. Here, $\Sigma^{*} \in \R^{r\times r}$ is a diagonal matrix with singular values $\sigma_{\max} = \sigma_{1}^{*} \geq \sigma_{2}^{*} \geq \dotsc \geq \sigma_{r}^{*} = \sigma_{\min}$; and $U^{*} \in \R^{m \times r}, V^{*} \in \R^{n\times r}$ contain the left and right-singular vectors. Let $\kappa = \sigma_{\max}/\sigma_{\min}$ be the condition number of $M^{*}$. 

We will make three assumptions. The first two are, by this point, canonical in the matrix completion literature \citep{candes2010matrix,keshavan2010matrix,ma2018implicit,abbe2020entrywise,CFMY:19}:
\begin{assumption}[Uniform Sampling]\label{assum:random-sampling}
Each element of $[m] \times [n]$ is included in $\Omega$ independently, and with probability $p$. 
\end{assumption}

\begin{assumption}[Incoherence]\label{assum:incoherence}
\begin{align}
    \norm{U^{*}}_{2,\infty} \leq \sqrt{\frac{\mu r}{m}} \text{\quad and \quad} \norm{V^{*}}_{2,\infty} \leq \sqrt{\frac{\mu r}{n}} \label{eq:incoherence}
\end{align}
\end{assumption}

Finally, our third assumption is a {\em generalization} of the independent (and often homogeneous), sub-Gaussian noise that is typically assumed in the literature \citep{CFMY:19,xia2021statistical}. As described above, this generalization enables a host of practical applications, such as those arising in counting data and panel sales data \citep{amjad2017censored,ansari2003customization}. 

\begin{assumption}[Independent Sub-exponential Noise]\label{assum:noise}
The entries of $E$ are independent, mean-zero random variables with variances $\sigma_{ij}^2$, and are also independent from $\Omega$. Furthermore,  $\norm{E_{ij}}_{\psi_1} \leq L$ for every $(i, j)$, where $\|\cdot\|_{\psi_1}$ is the sub-exponential norm. 
\end{assumption}

\section{Algorithm}
In this section, we describe a ``de-biased'' estimator $M^{\dd}$ for $M^{*}$. This was originally proposed in \citep{CFMY:19}, where the uncertainty quantification for $M^{\dd}$ is characterized under homogeneous, Gaussian noise. Motivated by practical applications, we study new uncertainty quantification formulas for $M^{\dd}$ under heterogenous sub-exponential noise. 

To begin, consider a natural least-square estimator for $M^{*}$
\begin{align} \label{eq:original-optimization}
\hat{M} \triangleq \argmin_{M' \in \R^{m\times n}, \rank(M')=r} \frac{1}{2p} \norm{P_{\Omega}(O-M')}_{\F}^2
\end{align}
Here, $\hat{M}$ is the projection of $M$ into the set of rank-$r$ matrices in regard to Euclidean distance (restricted on the set $\Omega$).  

Directly solving \cref{eq:original-optimization} turns out to be a challenge task. A popular method is to represent $M' = XY^{\top}$ where $X \in \R^{m\times r}, Y \in \R^{n\times r}$ are low-rank factors, and solve the following non-convex regularized optimization problem
\begin{align}
    \minimize_{X \in R^{n_1\times r}, Y \in R^{n_2\times r}} f(X, Y) \label{eq:objective-function}
\end{align}
where 
\begin{align*}
f(X, Y) 
&:= \frac{1}{2p} \norm{P_{\Omega}(XY^{T} - O)}_{\F}^2 \\
&\quad + \frac{\lambda}{2p} \norm{X}_{\F}^2 + \frac{\lambda}{2p} \norm{Y}_{\F}^2. 
\end{align*}
With proper initializations, simple first-order methods are often sufficient to solve \cref{eq:objective-function} \citep{sun2016guaranteed}. The regularizer $\lambda > 0$ here is used to promote additional structure properties. For example, when gradient descent is performed, a positive $\lambda$ is critical for analyzing the convergence properties and also helps to achieve a balance between $X$ and $Y$ \citep{chen2020noisy}.  

However, the use of $\lambda$ also introduces additional bias to the estimator in \cref{eq:objective-function}, which has been a major obstacle to analyze the uncertainty quantification properties. \citep{CFMY:19} proposes a de-bias procedure to remove the bias brought by $\lambda$, based on the solution of \cref{eq:objective-function}. The algorithm is summarized below\footnote{We assume $\lambda \asymp L\log(n)\sqrt{np}, t^{\star} \asymp n^{23}, \eta \asymp 1/(n^{6}\kappa^3\sigma_{\max})$ throughout the paper, if not specified explicitly.}. 

\begin{algorithm}[H]
\caption{Gradient Descent with De-bias} \label{alg:GD}
{\bf Input:} $P_{\Omega}(O)$
\begin{algorithmic}[1]

\State{\textbf{{Spectral initialization}}: $X^{0} = U\sqrt{\Sigma}, Y^{0}=V\sqrt{\Sigma}$ where  $U\Sigma V^{\top}$ is the top-$r$ partial SVD decomposition of $\frac{1}{p}P_{\Omega}(O)$.} 

\State{\textbf{{Gradient updates}}: \textbf{for }$t=0,1,\ldots,t_{\star}-1$
\textbf{do}
 \begin{subequations}\label{subeq:gradient_update_ncvx-loo}
\begin{align*}
X^{t+1}= & X^{t}-\frac{\eta}{p}[P_{\Omega}(X^{t}Y^{t\top}-O)Y^{t}+\lambda X^{t}];\\
Y^{t+1}= & Y^{t}-\frac{\eta}{p}[P_{\Omega}(X^{t}Y^{t\top}-O)^{T}X^{t}+\lambda Y^{t}]
\end{align*}
\end{subequations}
where $\eta$ determines the learning rate. 
}

\State{\textbf{De-bias}: 
\begin{align}
    X^{\dd} &= X^{t_{\star}} \left(I_r + \frac{\lambda}{p}\left(X^{t_{\star} \top}X^{t_{\star}}\right)^{-1}\right)^{1/2} \label{eq:Xdebias}\\
    Y^{\dd} &= Y^{t_{\star}} \left(I_r + \frac{\lambda}{p}\left(Y^{t_{\star}\top}Y^{t_{\star}}\right)^{-1}\right)^{1/2}\label{eq:Ydebias}
\end{align}
}
\end{algorithmic}
{\bf Output:} $M^{\dd} = X^{\dd}Y^{\dd \top}$
\end{algorithm}
Steps 1 and 2 in \cref{alg:GD} form a typical gradient descent procedure for solving \cref{eq:objective-function}. 
The de-biasing step, i.e. \cref{eq:Xdebias,eq:Ydebias} in \cref{alg:GD}, is critical for enabling the uncertainty quantification analysis.

We will use the remainder of this section (which can be skipped without loss of continuity) to provide some intuition for the peculiar form of \cref{eq:Xdebias,eq:Ydebias} based on first-order conditions.  Consider an example with $p=1$ (no entry is missing). Since $O$ is fully observed, let $O=U_{r}\Sigma_{r}V_{r}^{\top} + U_{n-r}\Sigma_{n-r}V^{\top}_{n-r}$ be the SVD of $O$, where $\Sigma_{r}$ corresponds to the largest $r$ singular values and $\Sigma_{n-r}$ corresponds to the remaining one. Then it follows that the optimal solution of  \cref{eq:original-optimization} is $\hat{M} = U_{r} \Sigma_{r} V_{r}^{\top}$ \citep{eckart1936approximation}.

Next, consider the regularized objective \cref{eq:objective-function}. We can derive that the optimal solution $(X, Y)$ for \cref{eq:objective-function} has the form 
\begin{align*}
X = U_{r}(\Sigma_{r}-\lambda I_{r})^{1/2}, \quad Y = V_{r}(\Sigma_{r} - \lambda I_{r})^{1/2}.
\end{align*}
In fact, this can be verified from the first-order conditions, 
\begin{align*}
\frac{\partial f(X, Y)}{\partial X} 
&= (XY^{\top} - O)Y + \lambda X\\
&= (U_{r}(\Sigma_{r}-\lambda I)V_{r}^{\top} - O)Y + \lambda X\\
&= (U_{n-r}\Sigma_{n-r}V_{n-r}^{\top} - \lambda U_{r}V_{r}^{\top}) Y + \lambda X\\
&\overset{(i)}{=} -\lambda U_{r}V_{r}^{\top} V_{r} (\Sigma_{r}-\lambda I_{r})^{1/2} + \lambda X\\
&\overset{(ii)}{=}0,
\end{align*}
where in (i) we use that $V_{n-r}^{\top}Y = V_{n-r}^{\top}V_{r}(\Sigma_{r}-\lambda I_{r})^{1/2} = 0$, and in (ii) we use that $V_{r}^{\top}V_{r} = I_{r}.$ Similarly $\frac{\partial f(X, Y)}{\partial Y}=0$ also vanishes. 

Then, this justifies the particular de-biased form in \cref{eq:Xdebias,eq:Ydebias}: 
\begin{align*}
X^{\dd} 
&= X (I_{r} + \lambda (\Sigma_{r} - \lambda I_{r})^{-1})^{1/2}\\
&= X(\Sigma_{r} (\Sigma_{r} - \lambda I_{r})^{-1})^{1/2}\\
&= U_{r} (\Sigma_{r} - \lambda I_{r})^{1/2}  (\Sigma_{r} - \lambda I_{r})^{-1/2} \Sigma_{r}^{1/2}\\
&= U_{r} \Sigma_{r}^{1/2}.
\end{align*}
Similarly, $Y^{\dd} = V_{r}\Sigma_{r}^{1/2}.$ Thus $X^{\dd}Y^{\dd \top} = U_{r}\Sigma_{r}V_{r}^{\top}$ is the desired optimal solution of \cref{eq:original-optimization}.







\section{Results}\label{sec:results}
We can now state our main result: an uncertainty quantification for $M^{\dd}$ under heterogeneous, sub-exponential noise. 
\begin{theorem}\label{thm:main-theorem}
Assume $m p \gg \kappa^4 \mu^2 r^2 \log^3 n$ and $L\log(n)\sqrt{n/p} \ll \sigma_{\min}/\sqrt{\kappa^4 \mu r\log n}$. Then for every $(i,j) \in [m]\times [n]$, we have 
\begin{align*}
    &\sup_{t\in \R}\left|P\left\{\frac{M^{\dd}_{ij} - M^{*}_{ij}}{s_{ij}} \leq t\right\} - \Phi(t)\right| \lesssim s_{ij}^{-3} \frac{L^2 \mu^3 r^{3}}{m^2p}+ \\
    &\quad s_{ij}^{-1} \left(\frac{L^2\log^{3}(n)\mu r \kappa^5}{p\sigma_{\min}} + \frac{L\mu^2 r^2 \log^{2}(n)\kappa^4}{p m} \right)+ \frac{1}{m^{10}}, 
\end{align*}
where $\Phi(\cdot)$ is the CDF of the standard Gaussian, and $s_{ij}>0$ is given by 
\begin{align}
    s_{ij}^2 := \frac{\sum\limits_{l=1}^{m} \sigma_{lj}^2 \left(\sum\limits_{k=1}^{r} U^{*}_{ik}U^{*}_{lk}\right)^2 + \sum\limits_{l=1}^{n}\sigma_{il}^2  \left(\sum\limits_{k=1}^{r} V^{*}_{lk} V^{*}_{jk}\right)^2}{p}. \label{eq:variance-def}
\end{align}
\end{theorem}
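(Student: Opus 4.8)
The plan is to adapt the three-step program of \citep{CFMY:19} to the heterogeneous, sub-exponential setting: first extract from $M^{\dd}_{ij}-M^{*}_{ij}$ a term that is \emph{linear} in the noise $E$; second, show this linear term obeys a central limit theorem with variance exactly $s_{ij}^2$; third, show the remainder is negligible relative to $s_{ij}$. Writing $P_{U}:=U^{*}U^{*\top}$ and $P_{V}:=V^{*}V^{*\top}$ for the projections onto the column and row spaces of $M^{*}$, the first step is to establish the entrywise expansion
\begin{align*}
M^{\dd}_{ij}-M^{*}_{ij}=W_{ij}+Z_{ij},\qquad W_{ij}:=\tfrac{1}{p}\big(P_{U}\,P_{\Omega}(E)+P_{\Omega}(E)\,P_{V}\big)_{ij},
\end{align*}
where $Z_{ij}$ gathers the higher-order contributions. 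Since the weights $(P_{U})_{il},(P_{V})_{lj}$ are deterministic and the entries $P_{\Omega}(E)_{kl}=\1{(k,l)\in\Omega}E_{kl}$ are independent, mean-zero, with $\Var(P_{\Omega}(E)_{kl})=p\,\sigma_{kl}^{2}$, expanding $(P_{U})_{il}=\sum_{k}U^{*}_{ik}U^{*}_{lk}$ and $(P_{V})_{lj}=\sum_{k}V^{*}_{lk}V^{*}_{jk}$ gives $\Var(W_{ij})=s_{ij}^{2}$, exactly \eqref{eq:variance-def}, up to the single cross term generated by the entry $(i,j)$ shared by the two sums, which is of lower order by incoherence (\cref{assum:incoherence}) and can be folded into $Z_{ij}$. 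This is precisely why $s_{ij}^2$ is the correct normalization. I would derive this expansion from a perturbation analysis of the gradient-descent fixed point (using the first-order optimality sketched after \cref{alg:GD} and the form of the de-biasing map \eqref{eq:Xdebias}--\eqref{eq:Ydebias}), combined with a leave-one-out construction that decouples each iterate from any fixed row or column of $E$.

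The bulk of the work is the remainder bound: showing $|Z_{ij}|\lesssim s_{ij}\cdot o(1)$ with probability $1-O(m^{-10})$. Here I would invoke the sub-exponential completion guarantees of \citep{mcrae2019low} (Frobenius error) and \citep{farias2021near} (entrywise error) to control $\norm{M^{\dd}-M^{*}}$ in the relevant norms, then propagate these estimates through the linearization. The new ingredient is the high-dimensional concentration inequality \cref{lem:subexponential-Bernstein}, needed to bound sub-exponential noise aggregates such as $\norm{P_{\Omega}(E)}_{2}$, $\norm{U^{*\top}P_{\Omega}(E)}_{2,\infty}$, and various entrywise sums with sharp dependence on $(L,p,n)$ under \cref{assum:noise}; the naive matrix Bernstein inequality is too lossy for $\psi_1$ tails. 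Together with the sampling condition $mp\gg\kappa^{4}\mu^{2}r^{2}\log^{3}n$ and the signal-to-noise condition $L\log(n)\sqrt{n/p}\ll\sigma_{\min}/\sqrt{\kappa^{4}\mu r\log n}$, these estimates produce the two $s_{ij}^{-1}$ terms and the $m^{-10}$ tail in the statement.

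For the CLT, $W_{ij}$ is a weighted sum of independent, mean-zero, sub-exponential summands with total variance $s_{ij}^{2}$, so I would apply a Berry--Esseen bound for non-identically-distributed summands; its normalized (Lyapunov) third-moment ratio, controlled via the incoherence bounds $|(P_{U})_{il}|,|(P_{V})_{lj}|\le\mu r/m$ and the sub-exponential moments of $E$, yields the leading $s_{ij}^{-3}L^{2}\mu^{3}r^{3}/(m^{2}p)$ term. This term has no analogue in the homogeneous Gaussian case of \citep{CFMY:19}, where the linear part is exactly Gaussian; it is the genuine ``price of non-Gaussianity.'' Finally I would assemble the pieces through the standard smoothing lemma: if $\sup_{t}|\Pr(W_{ij}/s_{ij}\le t)-\Phi(t)|\le\eps_{1}$ and $\Pr(|Z_{ij}|>\delta s_{ij})\le\eps_{2}$, then $\sup_{t}|\Pr((W_{ij}+Z_{ij})/s_{ij}\le t)-\Phi(t)|\lesssim\eps_{1}+\eps_{2}+\delta$, using Gaussian anti-concentration to absorb the remainder scale.

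The crux is the remainder bound under the two generalizations simultaneously. Heterogeneity forces every estimate to be stated through the individual $\sigma_{kl}^{2}$ rather than a single $\sigma^{2}$, and the $\psi_{1}$ tails preclude off-the-shelf sub-Gaussian concentration; obtaining a remainder that scales like $s_{ij}\times o(1)$ — rather than an absolute $o(1)$ — requires carrying the leave-one-out analysis entrywise with the sharpened bound of \cref{lem:subexponential-Bernstein}. I expect this to be the main technical obstacle, with the sub-exponential Berry--Esseen step (and verifying it degrades gracefully as $s_{ij}$ varies across entries) a close second.
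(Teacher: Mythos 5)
Your proposal follows essentially the same route as the paper: the linear-plus-remainder decomposition $M^{\dd}_{ij}-M^{*}_{ij}=W_{ij}+Z_{ij}$ with $W_{ij}=\tfrac1p(U^{*}U^{*\top}P_{\Omega}(E)+P_{\Omega}(E)V^{*}V^{*\top})_{ij}$ obtained from the leave-one-out characterization of the de-biased factors (\cref{thm:eigen-vector}), a Lyapunov/Berry--Esseen bound on $W_{ij}$ giving the $s_{ij}^{-3}L^{2}\mu^{3}r^{3}/(m^{2}p)$ term, the remainder controlled via \cref{lem:subexponential-Bernstein} and the sub-exponential spectral bound of \citep{mcrae2019low}, and the final assembly by Gaussian smoothing. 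The only cosmetic difference is that the paper keeps the linear part a sum of genuinely independent variables by swapping the doubly-counted $(i,j)$ summand for an exogenous independent copy and pushing the difference into the remainder, whereas you fold the cross term directly into $Z_{ij}$; both are fine.
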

To quickly parse this result, note that a typical scaling of the parameters would see $m=\Theta(n)$, $np \gtrsim \log^{6}(n)$, $\mu=r=\kappa=L=O(1)$, $\sigma_{\min}=\Omega(n)$, $\sigma_{ij}=\Omega(1)$, and $\|V^{*}_{j,\cdot}\|=\|U^{*}_{i,\cdot}\|=\Omega(\sqrt{1/n})$. \cref{thm:main-theorem} would then imply that 
\begin{align}
\frac{M^{\dd}_{ij} - M^{*}_{ij}}{s_{ij}} \overset{d}{\longrightarrow} \mathcal{N}(0, 1) \label{eq:Gaussian-approximation}
\end{align}
where $s_{ij}$ is defined in \cref{eq:variance-def}. This is precisely the type of characterization we sought at the outset.
The form of $s_{ij}$, as defined in \cref{eq:variance-def}, is of course critical to the characterization, and probably best understood via a few examples:

{\em 1. Homogeneous Gaussian Noise.} First as a sanity check, when $E_{ij} \sim \mathcal{N}(0, \sigma^2)$, \cref{thm:main-theorem} reduces to the same variance formula as Theorem 2 in \citep{CFMY:19}:
\begin{align}
s_{ij}^2 = \frac{\sigma^2 (\|U_{i,\cdot}^{*}\|^2 + \|V_{j,\cdot}^{*}\|^2)}{p}. \label{eq:homogeneous-formula}
\end{align}

{\em 2. Poisson Noise.} When the observations are Poisson, i.e. $O_{ij} \sim \text{Poisson}(M_{ij}^{*})$, the variance of the noise $E_{ij}$ is $\sigma_{ij}^2 = \text{Var}(O_{ij}-M^{*}_{ij}) = M^{*}_{ij}$. Then applying  \cref{thm:main-theorem}, we have that $M^{\dd}_{ij} - M^{*}_{ij} \sim \mathcal{N}(0, s_{ij}^2)$ where  
\begin{align}
 s_{ij}^2 = \frac{\sum\limits_{l=1}^{m} M_{lj}^{*} \left(\sum\limits_{k=1}^{r} U^{*}_{ik}U^{*}_{lk}\right)^2 + \sum\limits_{l=1}^{n}M_{il}^{*}  \left(\sum\limits_{k=1}^{r} V^{*}_{lk} V^{*}_{jk}\right)^2}{p}. \label{eq:Poisson-noise}
\end{align}
A special case is when $r=1$ and $M^{*} = \sigma_{1} u^{*}v^{*\top}$, for which we have
\begin{align*}
s_{ij}^2 
&= \frac{\sum_{l=1}^{m} M_{lj}^{*} \left(u^{*}_{l}u^{*}_{i}\right)^2 + \sum_{l=1}^{n}M_{il}^{*}  \left(v^{*}_{l} v^{*}_{j}\right)^2}{p}\\
&= \frac{\sum_{l=1}^{m} \sigma_1 u_{l}^{*} v_{j}^{*} \left(u^{*}_{l}u^{*}_{i}\right)^2 + \sum_{l=1}^{n} \sigma_1 u_{i}^{*} v_{l}^{*} \left(v^{*}_{l} v^{*}_{j}\right)^2}{p}\\
&= \frac{\sigma_1 v_{j}^{*} u^{*2}_i \sum_{l=1}^{m} u_l^{*3} + \sigma_1 u_i^{*} v_j^{*2} \sum_{l=1}^{n} v^{*3}_{l}}{p}\\
&= \frac{M^{*}_{ij} (u^{*}_{i} \|u^{*}\|_3^3 + v^{*}_{j} \|v^{*}\|_3^3)}{p},
\end{align*}
which corresponds to the formula in \cref{tb:comparison-model}.

{\em 3. Binary Noise.} Finally, binary observations occur frequently in applications. For example, in a recommender system or e-commerce platform, $O_{ij} \in \{0,1\}$ can represent whether the $i$th user viewed (or purchased) the $j$th item (or product) \citep{ansari2003customization,grover1987simultaneous,farias2019learning}. A common noise model for such observations is to assume the $O_{ij}$ are Bernoulli random variables with mean $M^{*}_{ij}$, i.e., $O_{ij} \sim \text{Ber}(M^{*}_{ij})$.  

With such binary observations, the variance of the noise $E_{ij}$ is $\sigma_{ij}^2 = \text{Var}(O_{ij}-M^{*}_{ij}) = M_{ij}^{*}(1-M_{ij}^{*}).$  
Then $s_{ij}$ takes the form
\begin{align*}
 s_{ij}^2 &= \frac{\sum_{l=1}^{m} M_{lj}^{*}(1-M^{*}_{lj}) \left(\sum_{k=1}^{r} U^{*}_{ik}U^{*}_{lk}\right)^2}{p} \\
 &\quad +\frac{\sum_{l=1}^{n}M_{il}^{*}(1-M^{*}_{il})  \left(\sum_{k=1}^{r} V^{*}_{lk} V^{*}_{jk}\right)^2}{p}.
\end{align*}
When $r=1$ and $M^{*} = \sigma_{1} u^{*}v^{*\top}$, we have
\begin{align*}
s_{ij}^2 
&= \frac{\sum_{l=1}^{m} \sigma_1 u_{l}^{*} v_{j}^{*} (1-\sigma_1 u_{l}^{*} v_{j}^{*}) \left(u^{*}_{l}u^{*}_{i}\right)^2}{p} \\
&\quad + \frac{\sum_{l=1}^{n} \sigma_1 u_{i}^{*} v_{l}^{*} (1-\sigma_1 u_{i}^{*} v_{l}^{*})\left(v^{*}_{l} v^{*}_{j}\right)^2}{p}\\
&= \frac{M^{*}_{ij} (u^{*}_{i} \|u^{*}\|_3^3 + v^{*}_{j} \|v^{*}\|_3^3 - M_{ij}^{*}(\|u^{*}\|_4^{4} + \|v^{*}\|_4^{4}))}{p}.
\end{align*}

\textbf{Empirical Inference:} In practice, the underlying $U^{*}$ and $V^{*}$ are not known, and thus $s_{ij}$ cannot be computed exactly. We propose the use of the corresponding empirical estimators to estimate $s_{ij}$ for the purposes of inference. Let $M^{\dd} = U^{\dd} \Sigma^{\dd} V^{\dd \top}$ be the SVD of $M^{\dd}$. For example, in the Poisson noise scenario, we would use the following empirical estimator for $s_{ij}$:
\begin{align*}
\hat{s}_{ij}^2 = \frac{\sum\limits_{l=1}^{m} M_{lj}^{\dd} \left(\sum\limits_{k=1}^{r} U^{\dd}_{ik}U^{\dd}_{lk}\right)^2 + \sum\limits_{l=1}^{n}M_{il}^{\dd}  \left(\sum\limits_{k=1}^{r} V^{\dd}_{lk} V^{\dd}_{jk}\right)^2}{p}.
\end{align*} 

In cases where $\sigma_{kl}$ is also unknown, we let $\hat{E}_{ij} = O_{ij} - M^{\dd}_{ij}$ be the empirical estimator for the noise. This procedure (i.e. the use of empirical estimators) can be justified via the following result: 
\begin{corollary}\label{cor:empirical-inference}
Follow the settings in \cref{thm:main-theorem}. Assume that $\forall (i,l), \sigma_{il}=\Theta(L)$ and 
$$
s_{ij} \gtrsim L^{2}\mu^2r^2\kappa^5\log^{4}(n)\left(\frac{1}{\sigma_{\min}p} + \frac{1}{mp} + \frac{1}{m^{2/3}p^{1/3}}\right).
$$
Let 
\begin{align*}
\hat{s}_{ij}^2 
&= \frac{\sum\limits_{l=1, (l,j) \in \Omega}^{m} \frac{1}{p}\hat{E}_{lj}^2 \left( \sum_{k=1}^{r} U^{\dd}_{ik} U^{\dd}_{lk}\right)^2}{p} \\
&\quad + \frac{\sum\limits_{l=1, (i,l) \in \Omega}^{n} \frac{1}{p}\hat{E}_{il}^2 \left( \sum_{k=1}^{r} V^{\dd}_{lk} V^{\dd}_{jk}\right)^2}{p}
\end{align*}
be the empirical estimator of $s_{ij}.$ Then under the same assumptions made in \cref{thm:main-theorem}, we have that
\begin{align*}
\sup_{t\in \R} \left| P\left\{\frac{M^{\dd}_{ij} - M^{*}_{ij}}{\hat{s}_{ij}} \leq t\right\} - \Phi(t) \right| = o(1).
\end{align*}
\end{corollary}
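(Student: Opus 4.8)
The plan is to deduce the corollary from \cref{thm:main-theorem} by a Slutsky-type argument. Since \cref{thm:main-theorem} already gives $(M^{\dd}_{ij}-M^{*}_{ij})/s_{ij} \overset{d}{\longrightarrow}\mathcal{N}(0,1)$, it suffices to establish the ratio consistency
\[
\frac{\hat{s}_{ij}^2}{s_{ij}^2} \overset{P}{\longrightarrow} 1, \qquad \text{i.e.} \qquad |\hat{s}_{ij}^2 - s_{ij}^2| = o_P(s_{ij}^2).
\]
Writing $(M^{\dd}_{ij}-M^{*}_{ij})/\hat{s}_{ij} = \big[(M^{\dd}_{ij}-M^{*}_{ij})/s_{ij}\big]\cdot(s_{ij}/\hat{s}_{ij})$ and combining the weak convergence of the first factor with the convergence in probability of the second, Slutsky's theorem yields convergence in distribution to $\mathcal{N}(0,1)$; since the limit CDF $\Phi$ is continuous, P\'olya's theorem upgrades this to the uniform ($\sup_t$) statement claimed. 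Thus the whole task reduces to controlling $\hat{s}_{ij}^2 - s_{ij}^2$.

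To that end I would decompose the error into three sources, matching the three ways $\hat{s}_{ij}^2$ departs from $s_{ij}^2$. First, $\hat{s}_{ij}^2$ uses the empirical residual $\hat{E}_{lj} = O_{lj}-M^{\dd}_{lj}$ in place of the true noise; writing $\hat{E}_{lj} = E_{lj} + (M^{*}_{lj}-M^{\dd}_{lj})$ gives $\hat{E}_{lj}^2 = E_{lj}^2 + 2E_{lj}(M^{*}_{lj}-M^{\dd}_{lj}) + (M^{*}_{lj}-M^{\dd}_{lj})^2$, so this contributes a cross term and a quadratic term that are both controlled by the entry-wise recovery bound $\norm{M^{\dd}-M^{*}}_{\max}$ established in the proof of \cref{thm:main-theorem} (and in \citep{farias2021near}). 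Second, $\hat{s}_{ij}^2$ uses the empirical singular vectors $U^{\dd},V^{\dd}$ in the weights $\sum_k U^{\dd}_{ik}U^{\dd}_{lk}$ rather than $U^{*},V^{*}$; I would bound this using the entry-wise/leave-one-out control of $U^{\dd}-U^{*}$ and $V^{\dd}-V^{*}$ (up to an orthogonal rotation) together with incoherence, noting the weights are invariant to the rotation ambiguity since they appear as inner products $\langle U^{\dd}_{i,\cdot}, U^{\dd}_{l,\cdot}\rangle$. Third, even with the true noise and true singular vectors, $\hat{s}_{ij}^2$ replaces the population sum $\frac1p\sum_l \sigma_{lj}^2 w_l^2$ by the subsampled, inverse-probability-weighted empirical sum $\frac{1}{p^2}\sum_{l:(l,j)\in\Omega} E_{lj}^2 w_l^2$ (with $w_l = \langle U^{*}_{i,\cdot},U^{*}_{l,\cdot}\rangle$), which must concentrate around its mean.

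I expect this third, concentration, step to be the main obstacle, and it is precisely where the new bound \cref{lem:subexponential-Bernstein} is needed. The summands $\frac1p \mathds{1}\{(l,j)\in\Omega\}E_{lj}^2 w_l^2$ are products of a Bernoulli$(p)$ indicator and the \emph{square} of a sub-exponential variable; since $E_{lj}^2$ has only sub-Weibull (heavier-than-sub-exponential) tails, the classical Bernstein inequality does not apply, and a tailored high-dimensional Bernstein bound is required to control the deviation in terms of a variance proxy and a truncation/max term, both kept small by incoherence (which bounds $\max_l |w_l|$) and the scaling assumptions. Finally, I would collect the three bounds and verify, using $\sigma_{il}=\Theta(L)$ (so that $s_{ij}^2 \asymp \frac{L^2}{p}(\norm{U^{*}_{i,\cdot}}^2+\norm{V^{*}_{j,\cdot}}^2)$) and the stated lower bound on $s_{ij}$ --- whose three terms $\tfrac{1}{\sigma_{\min}p},\tfrac{1}{mp},\tfrac{1}{m^{2/3}p^{1/3}}$ are expected to match the three error sources, the cube-root term being the signature of the sub-Weibull concentration --- that each error is $o(s_{ij}^2)$, completing the ratio consistency and hence the corollary.
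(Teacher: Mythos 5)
Your proposal follows essentially the same route as the paper's proof: reduce to the ratio consistency $|\hat s_{ij}^2 - s_{ij}^2| = o(s_{ij}^2)$ (the paper carries out the Slutsky step by hand, via a shift $\Delta$ inside the CDF), and then bound three error sources --- residual substitution $\hat E$ vs.\ $E$ via the entry-wise bound, singular-vector substitution $U^{\dd},V^{\dd}$ vs.\ $U^{*},V^{*}$ via the $2,\infty$ leave-one-out bounds, and concentration of the inverse-probability-weighted subsampled sum --- exactly matching the paper's three displayed estimates. The only divergence is that you flag the sub-Weibull tails of $E_{il}^2$ and call for a tailored Bernstein bound where the paper simply invokes ``concentration bounds,'' so if anything your plan is slightly more careful at that step.
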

Additional justification for this procedure is given as experiments later on.

\textbf{Aside: When $s_{ij} \approx 0.$} Curious readers may note that $s_{ij}$ may be too small for \cref{thm:main-theorem} and \cref{cor:empirical-inference} to apply. In this case, although the Gaussian approximation in \cref{thm:main-theorem} does not hold, an entry-wise error bound still holds, and may be sufficient for many applications (see the Appendix for details): 
\begin{align*}
|M_{ij}^{\dd} - M^{*}_{ij}| \lesssim \kappa \mu r L \sqrt{\frac{\log(n)}{mp}}.
\end{align*}
An uncertainty characterization when $s_{ij} \approx 0$ involves a second-order error analysis and remains an open question. 

%

\section{Proof Overview}
In this section, we present the proof framework of \cref{thm:main-theorem} (see details in \cref{sec:append-proof-main-theorem}). In order to extend to heterogeneous sub-exponential noise from homogeneous Gaussian, we generalize the proof of \citep{CFMY:19} with the help of recent sub-exponential matrix completion results \citep{mcrae2019low} and a sub-exponential variant of matrix Bernstein inequality (\cref{lem:subexponential-Bernstein}).

Similar to \cite{CFMY:19}, our proof is based on the leave-one-out technique that has been recently used for providing breakthrough bounds for entry-wise analysis in matrix completion problems (dated back to \cite{ma2018implicit}, also see \cite{ding2020leave,abbe2020entrywise,chen2020noisy}). 

We establish the following key results to characterize the decomposition of low-rank factors $(X^{\dd}, Y^{\dd})$, as a heterogeneous sub-exponential generalization of Theorem 5 in \cite{CFMY:19}. 
\begin{theorem}\label{thm:eigen-vector}
Assume $m p \gg \kappa^4 \mu^2 r^2 \log^3 n$ and $L\log(n)\sqrt{\frac{n}{p}} \ll \frac{\sigma_{\min}}{\sqrt{\kappa^4 \mu r\log n}}$. There exists a rotation matrix $H^{\dd} \in \O^{r\times r}$ and $\Phi_X \in \R^{m\times r}, \Phi_Y \in \R^{n\times r}$ such that the following holds with probability $1-O(n^{-10})$,
\begin{align*}
    X^{\dd}H^{\dd} - X^{*} &= \frac{1}{p}P_{\Omega}(E)Y^{*}(Y^{*T}Y^{*})^{-1} + \Phi_{X}\\
    Y^{\dd}H^{\dd} - Y^{*} &= \frac{1}{p}P_{\Omega}(E)^{T}X^{*}(X^{*T}X^{*})^{-1} + \Phi_{Y}
\end{align*}
where 
\begin{align*}
&\max\left\{\norm{\Phi_X}_{2,\infty}, \norm{\Phi_Y}_{2,\infty}\right\} 
\lesssim \\
&\frac{L \log n}{\sqrt{p\sigma_{\min}}}\left(  \frac{L \log n }{\sigma_{\min}} \sqrt{\frac{\kappa^9 \mu r n\log n}{p}} + \sqrt{\frac{\kappa^{7}\mu^3 r^{3}\log^2 n}{mp}}\right).
\end{align*}
\end{theorem}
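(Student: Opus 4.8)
The plan is to generalize the leave-one-out argument behind Theorem~5 of \citep{CFMY:19} to the heterogeneous sub-exponential setting, the only structural change being that every bounded/sub-Gaussian tail bound is replaced by the sub-exponential concentration inequality of \cref{lem:subexponential-Bernstein}. The central object is the terminal gradient-descent iterate $(X^{t_\star},Y^{t_\star})$; since $t_\star$ is taken large enough that the iterates have effectively converged, this iterate satisfies the first-order stationarity condition of \cref{eq:objective-function} up to negligible error, and the whole expansion is extracted from that condition after the de-biasing transformation. The rotation $H^{\dd}$ will be the orthogonal alignment between $(X^{t_\star},Y^{t_\star})$ and $(X^*,Y^*)$.

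First I would establish a battery of a priori estimates on $(X^{t_\star},Y^{t_\star})$: closeness to $(X^*,Y^*)$ (up to the common rotation $H^{\dd}$) in spectral, Frobenius, and $\norm{\cdot}_{2,\infty}$ norm, balancedness $X^{t_\star\top}X^{t_\star}\approx Y^{t_\star\top}Y^{t_\star}$, and incoherence of the iterates. The Frobenius-level guarantee I would import from the sub-exponential matrix completion analysis of \citep{mcrae2019low}; the entry-wise and $2,\infty$ guarantees I would prove by the usual leave-one-out induction on $t$, introducing the auxiliary sequences $(X^{t,(l)},Y^{t,(l)})$ that discard the $l$-th row (resp.\ column) of the observations so as to decouple the $l$-th row of the estimator from the $l$-th row of the noise. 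The recurring quantities to control along the induction are $\norm{\tfrac1p P_\Omega(E)}_2$, $\norm{\tfrac1p P_\Omega(E)Y^*}_{2,\infty}$, and $\norm{(\tfrac1p P_\Omega-\mathcal I)(\Delta)}$ for various incoherent $\Delta$ (where $\mathcal I$ is the identity operator), and it is precisely here that \cref{lem:subexponential-Bernstein} is invoked in place of the matrix-Bernstein / Gaussian-concentration steps used for bounded noise.

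Next I would derive the expansion itself. Writing the stationarity condition as $\tfrac1p P_\Omega(X^{t_\star}Y^{t_\star\top}-O)Y^{t_\star}+\tfrac{\lambda}{p}X^{t_\star}\approx 0$, substituting $O=X^*Y^{*\top}+E$, and splitting $\tfrac1p P_\Omega=\mathcal I+(\tfrac1p P_\Omega-\mathcal I)$, I would solve for $X^{t_\star}$ by right-multiplying with $(Y^{t_\star\top}Y^{t_\star}+\tfrac{\lambda}{p}I)^{-1}$. The de-biasing factor $(I+\tfrac{\lambda}{p}(X^{t_\star\top}X^{t_\star})^{-1})^{1/2}$ in \cref{eq:Xdebias} is engineered precisely so that, using balancedness, the $\lambda/p$ correction cancels to leading order, converting the regularized Gram matrices into $Y^{*\top}Y^*$; after also replacing $Y^{t_\star}$ by $Y^*$ in the surviving noise term, one is left with
\begin{align*}
X^{\dd}H^{\dd}-X^* = \tfrac1p P_\Omega(E)Y^*(Y^{*\top}Y^*)^{-1}+\Phi_X,
\end{align*}
and symmetrically for $Y^{\dd}$. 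By construction $\Phi_X$ collects four families of residuals: the fluctuation terms $(\tfrac1p P_\Omega-\mathcal I)(\cdot)$, the second-order products of $(X^{t_\star}H^{\dd}-X^*)$ with $(Y^{t_\star}H^{\dd}-Y^*)$, the error from swapping $Y^{t_\star}$ (and the associated Gram inverse) for $Y^*$ in the linear term, and the higher-order de-biasing/rotation residual.

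The bulk of the work, and the step I expect to be the main obstacle, is bounding $\norm{\Phi_X}_{2,\infty}$ and $\norm{\Phi_Y}_{2,\infty}$ to match the stated $\tfrac{L\log n}{\sqrt{p\sigma_{\min}}}\big(\tfrac{L\log n}{\sigma_{\min}}\sqrt{\kappa^9\mu r n\log n/p}+\sqrt{\kappa^7\mu^3 r^3\log^2 n/(mp)}\big)$ rate. Each family of residuals is estimated row by row and then union-bounded over the $m$ rows using the a priori estimates above; the genuinely new difficulty is the fluctuation terms involving the noise, e.g. $(\tfrac1p P_\Omega-\mathcal I)(E)$ and $(\tfrac1p P_\Omega-\mathcal I)$ applied to the error matrices, because sub-exponential (rather than bounded or sub-Gaussian) tails rule out a direct matrix-Bernstein argument. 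I would handle these through the truncation-plus-Bernstein structure encoded in \cref{lem:subexponential-Bernstein}, with the leave-one-out decoupling ensuring the relevant summands are independent of the row being bounded, and with careful bookkeeping of the $\log n$ factors and $\kappa$-powers so that the final $2,\infty$ bound comes out at the claimed order.
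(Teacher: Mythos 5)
Your proposal is correct and follows the same overall architecture as the paper: the same algebraic decomposition of $X^{\dd}H^{\dd}-X^{*}$ into the linear noise term plus residuals $\Phi_X$ (extracted, as you say, from the near-stationarity of the terminal iterate together with the de-biasing transformation; the paper imports this verbatim as Eq.~(5.11) of \citep{CFMY:19}), the same fine-grained control of the residuals via the sub-exponential matrix Bernstein inequality (\cref{lem:subexponential-Bernstein}) combined with leave-one-out decoupling, and the same appeal to \citep{mcrae2019low} (via \cref{lem:operator-norm-bound}) for $\norm{\tfrac{1}{p}P_{\Omega}(E)}$. The one place where you take a genuinely different route is the battery of coarse-grained a priori estimates (\cref{lem:norm-bounds}): you propose to re-run the entire leave-one-out induction on $t$ with sub-exponential concentration substituted at every step, whereas the paper sidesteps this by a truncation/coupling argument (\cref{lem:reduction-ExptoGaussian}) that replaces each sub-exponential entry $E_{ij}$ by a mean-zero bounded variable of magnitude $O(L\log n)$ agreeing with it except on an event of probability $1/\mathrm{poly}(n)$, and then cites the existing sub-Gaussian bounds of \citep{CFMY:19} wholesale at the cost of one extra $\log n$ factor. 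Your route is self-contained and could in principle shave that logarithmic factor from the coarse bounds, but it is far more laborious; the paper's coupling trick buys all of \cref{lem:norm-bounds} essentially for free and confines the genuinely new sub-exponential analysis to the few places (the bounds on $\Phi_1$ and $\Phi_2$, in particular $\norm{X^{*\top}P_{\Omega}(E)Y^{*}}$ and $e_j^{\top}\tfrac{1}{p}P_{\Omega}(E)\Delta^{(j)}$) where the Gaussianity of $E$ was used in an essential way. Either way the stated $2,\infty$ rate is reached, so this is a difference of economy rather than of substance.
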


\begin{proof}
At a high level, the proof of \cref{thm:eigen-vector} follows a similar proof of Theorem 5 in \citep{CFMY:19}, but with replacements that employ more fine-grained analyses of $E$ for whenever the Gaussianity of $E$ is used in \citep{CFMY:19}. These analyses aim to address the sub-exponentiality and heterogeneity of $E$, with the help of the following two lemmas.

\begin{lemma}\label{lem:subexponential-Bernstein}
Given $k$ independent random $m_1 \times m_2$ matrices $X_1, X_2, \dotsc, X_{k}$ with $E[X_i] = 0.$ Let 
\begin{align*}
    V := \max\left(\norm{\sum_{i=1}^{k} E[X_i X_i^{T}]}, \norm{\sum_{i=1}^{k} E[X_i^{T} X_i]}\right).
\end{align*}
Suppose $\norm{\norm{X_i}}_{\psi_1} \leq B$ for $i \in [k].$ Then, 
\begin{align*}
    &\norm{X_1+X_2+\dotsc + X_{k}}\lesssim  \\ 
    &\sqrt{V\log(k(m_1+m_2))} + B\log(k(m_1+m_2))\log(k)
\end{align*}
with probability $1-O(k^{-c})$ for any constant $c$.
\end{lemma}

\begin{lemma}\label{lem:operator-norm-bound}
Suppose $E \in R^{m\times n}$ ($m\leq n$) whose entries are independent and centered. Suppose $\norm{E_{ij}}_{\psi_1} \leq L$ for any $(i, j) \in [m] \times [n].$ Let $\Omega \in [m] \times [n]$ be the subset of indices where each index $(i, j)$ is included in $\Omega$ independently with probability $p$. Suppose $n p \geq c_0 \log^3 n$ for some sufficient large constant $c_0$. then, with probability $1-O(n^{-11})$,
\begin{align*}
    \norm{\frac{1}{p} P_{\Omega}(E)} \leq C L\sqrt{\frac{n}{p}}.
\end{align*}
\end{lemma}

Here, \cref{lem:subexponential-Bernstein} is a generalization of matrix Bernstein inequality in Theorem 6.1.1 of \citep{tropp2015introduction}. \cref{lem:operator-norm-bound} is an implication of Lemma 4 in \citep{mcrae2019low}. 

Equipped with \cref{lem:subexponential-Bernstein,lem:operator-norm-bound}, the desired bounds for sub-exponential $E$ can be established.  Following we provide an example of using \cref{lem:subexponential-Bernstein} to bound $\|X^{*T} P_{\Omega}(E) Y^{*}\|$, which is critical for obtaining the bounds in \cref{thm:eigen-vector}.

To begin, note that 
\begin{align*}
    X^{*T} P_{\Omega}(E) Y^{*} = \sum_{k=1}^{m} \sum_{l=1}^{n} X^{*}_{k,\cdot} Y_{l,\cdot}^{*\top} \delta_{k,l} E_{k, l}
\end{align*}
where $\delta_{k,l} \sim \Ber(p)$ indicates whether $(k, l) \in \Omega.$ Let $A_{k,l} := X^{*}_{k,\cdot} Y^{*\top}_{l,\cdot} \delta_{k,l} E_{k, l}$ for $k\in[m], l\in[n]$. Then,
\begin{align*}
\norm{X^{*T} P_{\Omega}(E) Y^{*}} = \norm{\sum_{k,l} A_{k,l}}.
\end{align*}
Note that $A_{k,l} \in \R^{r\times r}$ are independent zero-mean random matrices and we aim to invoke \cref{lem:subexponential-Bernstein} to bound $ \|\sum_{k,l} A_{k,l}\|.$ Let
\begin{align*}
V &:= \max\left(\norm{\sum_{k,l} E[A_{k,l}A_{k,l}^{\top}]}, \norm{\sum_{k,l} E[A_{k,l}^{\top}A_{k,l}]}\right) \\
B &:= \max_{k,l}\norm{\norm{A_{k,l}}}_{\psi_1}.
\end{align*}
Note that
\begin{align*}
\norm{\sum_{k,l} E[A_{k,l}A_{k,l}^{\top}]} 
&\leq \sum_{k,l} \norm{E[A_{k,l}A_{k,l}^{\top}]}\\
&= \sum_{k,l} \sigma_{k,l}^2 p \|X_{k,\cdot}^{*}\|^2 \|Y_{l,\cdot}^{*}\|^2\\
&\overset{(i)}{\leq} 2L^2 p \|X^{*}\|_{\F}^2 \|Y^{*}\|_{\F}^2
\end{align*}
where in (i) we use the fact that $E(x^2) \leq 2\|x\|^2_{\psi_1}$ for an sub-exponential zero-mean random variable $x$. Similarly, the bounds can be established for $ \|\sum_{k,l} E[A_{k,l}^{\top}A_{k,l}]\|.$ Hence $V \leq  2L^2 p \|X^{*}\|_{\F}^2 \|Y^{*}\|_{\F}^2.$ 

Then, consider
\begin{align*}
B &:= \max_{k,l}\norm{\norm{A_{k,l}}}_{\psi_1} \\
&\leq \max_{k,l} \norm{E_{k,l} \delta_{k,l}}_{\psi_1} \norm{X^{*}}_{2,\infty} \norm{Y^{*}}_{2,\infty}\\
&\overset{(i)}{\leq} L \norm{X^{*}}_{2,\infty} \norm{Y^{*}}_{2,\infty}
\end{align*}
where (i) we use that $\|E_{k,l} \delta_{k,l}\|_{\psi_1} \leq \|E_{k,l}\|_{\psi_1} \leq L.$ Then apply \cref{lem:subexponential-Bernstein}, with probability $1-O(n^{-11})$, we obtain the desired bound for $\|X^{*\top} P_{\Omega}(E) Y^{*}\|$
\begin{align*}
\norm{\sum_{k,l} A_{k,l}} 
&\lesssim \sqrt{V \log(n)} + B\log^{2}(n)\\
&\overset{(i)}{\lesssim} \sqrt{p}L \sigma_{\max} r \sqrt{\log(n)}  + \frac{\mu r}{n} L \sigma_{\max} \log^{2}(n)\\
&\overset{(ii)}{\lesssim} \sqrt{p}L \sigma_{\max} r \sqrt{\log(n)}
\end{align*}
where in (i) we use that $\|X^{*}\|_{\F}^2=\|Y^{*}\|_{\F}^2 \leq \sigma_{\max} r$ and the incoherence condition \cref{eq:incoherence}, in (ii) we use that $m p \gg \kappa^4 \mu^2 r^2 \log^3 n.$

To establish a similar bound for $\|X^{*\top} P_{\Omega}(E) Y^{*}\|$, \cite{CFMY:19} uses the Gaussianity of $E$, which is not applicable here. 

Similar to this example, we apply \cref{lem:subexponential-Bernstein} and \cref{lem:operator-norm-bound} with more fine grained analyses to address the sub-exponentiality and heterogeneity of $E$. See \cref{sec:eigen-vector} for full details.

\end{proof}

Note that the error of $M^{\dd}-M^{*}$ is closely related to the errors of low-rank factors $X^{\dd}H^{\dd}-X^{*}, Y^{\dd}H^{\dd} - Y^{*}$ through the following 
\begin{align}
M^{\dd} - M^{*}
&= X^{\dd}H^{\dd}H^{\dd \top} Y^{\dd \top} - X^{*}Y^{*\top}  \nonumber \\
&= (X^{\dd}H^{\dd} - X^{*})Y^{*\top} + X^{*}(Y^{\dd}H^{\dd} - Y^{*})^{\top} \nonumber \\
&\quad - (X^{\dd}H^{\dd} - X^{*})(Y^{\dd}H^{\dd} - Y^{*})^{\top} \nonumber\\
&\overset{(i)}{\approx} (X^{\dd}H^{\dd} - X^{*})Y^{*\top} + X^{*}(Y^{\dd}H^{\dd} - Y^{*})^{\top} \nonumber
\end{align}
where in (i) we ignore the second-order error term $(X^{\dd}H^{\dd} - X^{*})(Y^{\dd}H^{\dd} - Y^{*})^{\top}.$ Note that \cref{thm:eigen-vector} implies that
\begin{align*}
X^{\dd}H^{\dd} - X^{*} &\approx \frac{1}{p}P_{\Omega}(E)Y^{*}(Y^{*T}Y^{*})^{-1} \\
    Y^{\dd}H^{\dd} - Y^{*} &\approx \frac{1}{p}P_{\Omega}(E)^{\top}X^{*}(X^{*T}X^{*})^{-1}
\end{align*}
Plug this into the decomposition of $M^{\dd} - M^{*}$, we have
\begin{align*}
M^{\dd} - M^{*} 
&\approx \frac{1}{p}P_{\Omega}(E)Y^{*}(Y^{*T}Y^{*})^{-1}Y^{*\top} \\
&\quad + \frac{1}{p}X^{*}(X^{*T}X^{*})^{-1}X^{*\top}P_{\Omega}(E)\\
&= \frac{1}{p}P_{\Omega}(E)V^{*}V^{*\top} + \frac{1}{p}U^{*}U^{*\top} P_{\Omega}(E).
\end{align*}
The results of \cref{thm:main-theorem} then follow from the above approximation and the use of Berry-Esseen type of inequalities. See \cref{sec:proof-main-theorem} for full details. 

\section{Experiments}
We evaluate the results in \cref{thm:main-theorem} for synthetic data under multiple settings. We then compare the performances of various uncertainty quantification formulas in real data.

\textbf{Synthetic Data.} We generate an ensemble of instances. Each instance consists of a few parameters: (i) $(m, n)$: the size of $M^{*}$; (ii) $r$: the rank of $M^{*}$; (iii) $p$: the probability of an entry being observed; (iv) $\bar{M}^{*}:$ the entry-wise mean of $M^{*}$ ($\bar{M}^{*} = \frac{1}{m n}\sum_{ij} M^{*}_{ij}$).

Given $(m, n, r, p, \bar{M}^{*})$, we follow the typical procedures of generating random non-negative low-rank matrices in \citep{cemgil2008bayesian,farias2021near}. Each instance is generated in two steps: (i) Generate $M^{*}$: let $U^{*} \in \R^{m\times r}, V^{*} \in \R^{n \times r}$ be random matrices with independent entries from $\text{Gamma}(2,1)$. Set $M^{*} = k U^{*}V^{*\top}$ where $k \in \R$ is picked such that $\frac{1}{m n}\sum_{ij} M^{*}_{ij} = \bar{M}^{*}.$ (ii) Generate $P_{\Omega}(O)$: then $O_{ij} = \text{Poisson}(M^{*}_{ij})$ and entries in $\Omega$ is sampled independently with probability $p$.\footnote{Here, we focus on the results of Poisson noise, where the results under the binary noise are similar in the experiments.}  

We first verify the entry-wise distributional characterization $M^{\dd}_{ij} - M^{*}_{ij} \sim \mathcal{N}(0, s_{ij}^2)$ where $s_{ij}$ is specified in \cref{eq:variance-def}. See a demonstration of the Gaussian approximality of the empirical distribution $(M^{\dd}_{ij} - M^{*}_{ij})/s_{ij}$ in \cref{fig:distribution}. Given an instance, we compute the coverage rate (the percentage of coverage of entries) that correponds to the 95\% confidence interval, where an ``coverage'' of an entry $(i, j)$ occurs if 
\begin{align*}
M^{\dd}_{ij} \in [M^{*}_{ij} - 1.96 s_{ij}, M^{*}_{ij} + 1.96 s_{ij}].
\end{align*}
The average coverage rates under different settings are shown in \cref{tb:coverage-rate}. The closeness of the results (ranging from $91\% - 95\%$) to the ``true'' coverage rate $95\%$ suggests the applicability of inference based on our variance formula. The trends in \cref{tb:coverage-rate} are also consistent with the intuition: the performance starts to degrade when $r$ increases, $p$ decreases, and the noise to signal ratio increases (decrease of $\bar{M}^{*}$).

\begin{figure}[h!]
\center
    \includegraphics[scale=0.4]{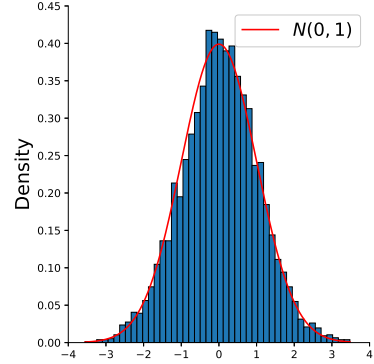}
    \caption{Empirical distribution of $(M^{\dd}_{11}-M^{*}_{11}) / s_{11}$ with $m=n=300, r=2, p=0.6$, and $\bar{M}^{*}=20.$ }
        \label{fig:distribution}
\end{figure}

\begin{table}[h!]
\begin{center}
\begin{tabular}{@{}cc@{}}
\toprule
$(r, p, \bar{M}^{*})$ & {\bf Coverage Rate} \\ 
\midrule 
(3, 0.3, ~5) & 0.936 ($\pm$ 0.003)\\
(3, 0.3, 20) & 0.945 ($\pm$ 0.004)\\
(3, 0.6, ~5) & 0.947 ($\pm$ 0.003)  \\
(3, 0.6, 20) & 0.949 ($\pm$ 0.003)\\
(6, 0.3, ~5) & 0.910 ($\pm$ 0.002)\\
(6, 0.3, 20) & 0.934 ($\pm$ 0.002)\\
(6, 0.6, ~5) & 0.934 ($\pm$ 0.003)\\
(6, 0.6, 20) & 0.943 ($\pm$ 0.003)\\
 \bottomrule 
 \end{tabular}
 \caption{Coverage rates for different $(r, p, \bar{M}^{*})$ with $m=n=500.$ The empirical mean and empirical standard deviation are reported over 100 instances.}
 \label{tb:coverage-rate}
\end{center}
\end{table}

\textbf{Real Data.} Next, we study a real dataset consisting of daily sales for 1115 units with 942 days \citep{Rossmannsales2021}. To compare different uncertainty quantification formulas, we consider the \textit{coverage rate maximization} task that aims to maximize the coverage rate given the total interval length constraint. 

\textit{Coverage Rate Maximization.} In particular, given a uncertainty quantification formula, suppose one can provide an interval predictor $[a_{ij}, b_{ij}]$ for each entry $(i, j)$ in a set $\tilde{\Omega}$. The ``coverage'' of $(i, j)$ occurs if $M_{ij} \in [a_{ij}, b_{ij}]$ where $M_{ij}$ is the true value of entry $(i, j).$ The task aims to maximize the coverage rate given that the total length of intervals $\sum_{(i,j)\in \tilde{\Omega}} b_{ij}-a_{ij}$ is constrained by a budget threshold $\alpha$:
\begin{align}
\maximize \quad &\dfrac{  \sum_{(i,j) \in \tilde{\Omega}} \mathds{1}(M_{ij} \in [a_{ij}, b_{ij}]) }{|\tilde{\Omega}|} \nonumber\\
\text{subject to}\quad & \sum_{(i,j)\in \tilde{\Omega}} b_{ij}-a_{ij} \leq \alpha \label{eq:maximize-coverage-task}  
\end{align} 

We are interested in comparing the performances of the above task using different variance predictors $s_{ij}$, either provided by \cref{eq:homogeneous-formula} with the homogeneous Gaussian noise assumption (Theorem 2 in \cite{CFMY:19}), or by our \cref{thm:main-theorem}, capable of addressing the heterogeneous sub-exponential noise. Note that both results in \cite{CFMY:19} and our \cref{thm:main-theorem} predict that $M^{\dd}_{ij} \sim \mathcal{N}(M^{*}_{ij}, s_{ij}^2)$. With this distributional assumption, we tackle \cref{eq:maximize-coverage-task} by a greedy algorithm that achieves the maximal expected coverage rate with the budget constraint. Specifically, with given $\{M^{\dd}_{ij}, s_{ij}\}$, we provide the interval predictors $\{[a_{ij}, b_{ij}]\}$ by solving the following problem:
\begin{align}
\maximize_{a_{ij}, b_{ij}} \quad &\frac{  \sum\limits_{(i,j) \in \tilde{\Omega}} \mathds{E}_{M_{ij} \sim \mathcal{N}(M^{\dd}_{ij}, s_{ij}^2)}(M_{ij} \in [a_{ij}, b_{ij}]) }{|\tilde{\Omega}|} \nonumber\\
\text{subject to}\quad & \sum_{(i,j)\in \tilde{\Omega}} b_{ij}-a_{ij} \leq \alpha \nonumber
\end{align}

\textit{Experiment Results.} In the experiment, the low-rankness of the dataset is verified and the ``true'' rank, as well as the ``true'' underlying matrix $M$, is pre-determined through the spectrum of singular value decomposition. 

We split the entries uniformly into a training set $\Omega$ (with probability $p$) and a test set $\tilde{\Omega}$. We use the observations in $\Omega$ to learn $M^{\dd}$ with \cref{alg:GD}. Let  $M^{\dd} = U^{\dd} \Sigma^{\dd} V^{\dd \top}$ be the SVD of $M^{\dd}$. 

The empirical variance $s_{ij}^{\text{Gaussian}}$ for homogeneous Gaussian noise is computed by \citep{CFMY:19}
\begin{align*}
(s_{ij}^{\text{Gaussian}})^2 = \frac{\hat{\sigma}^2 (\|U^{\dd}_{i,\cdot}\|^2 + \|V^{\dd}_{j,\cdot}\|^2)}{p}
\end{align*}
where $\hat{\sigma}^2 := \sum_{(i,j) \in \Omega} (O_{ij}-M^{\dd}_{ij})^2 / |\Omega|$ is the  empirical estimator for the noise variance. 

We then compute the empirical variance $s_{ij}^{\text{Poisson}}$ for Poisson noise
\begin{align*}
&(s_{ij}^{\text{Poisson}})^2 = \\
&\frac{\sum\limits_{l=1}^{m} M^{\dd}_{lj} \left(\sum\limits_{k=1}^{r} U^{\dd}_{ik}U^{\dd}_{lk}\right)^2 + \sum\limits_{l=1}^{n}M^{\dd}_{il}  \left(\sum\limits_{k=1}^{r} V^{\dd}_{lk} V^{\dd}_{jk}\right)^2}{p}. 
\end{align*}

Given $M^{\dd}, s_{ij}^{\text{Poisson}}$ and $s_{ij}^{\text{Gaussian}}$, the coverage rate maximization task is evaluated in the test set $\tilde{\Omega}.$ The results for various budgets $\alpha$ are reported in \cref{fig:real-data}. The Poisson noise formula shows a higher coverage rate than the homogeneous Gaussian formula, as the former is more robust to addressing heterogeneous noises in sales data. This improvement tends to vanish with more presences of missing entries, which might be due to the degrading accuracy of matrix completion and variance estimation when $p$ decreases.

\begin{figure}[h!]
\center
    \includegraphics[scale=0.5]{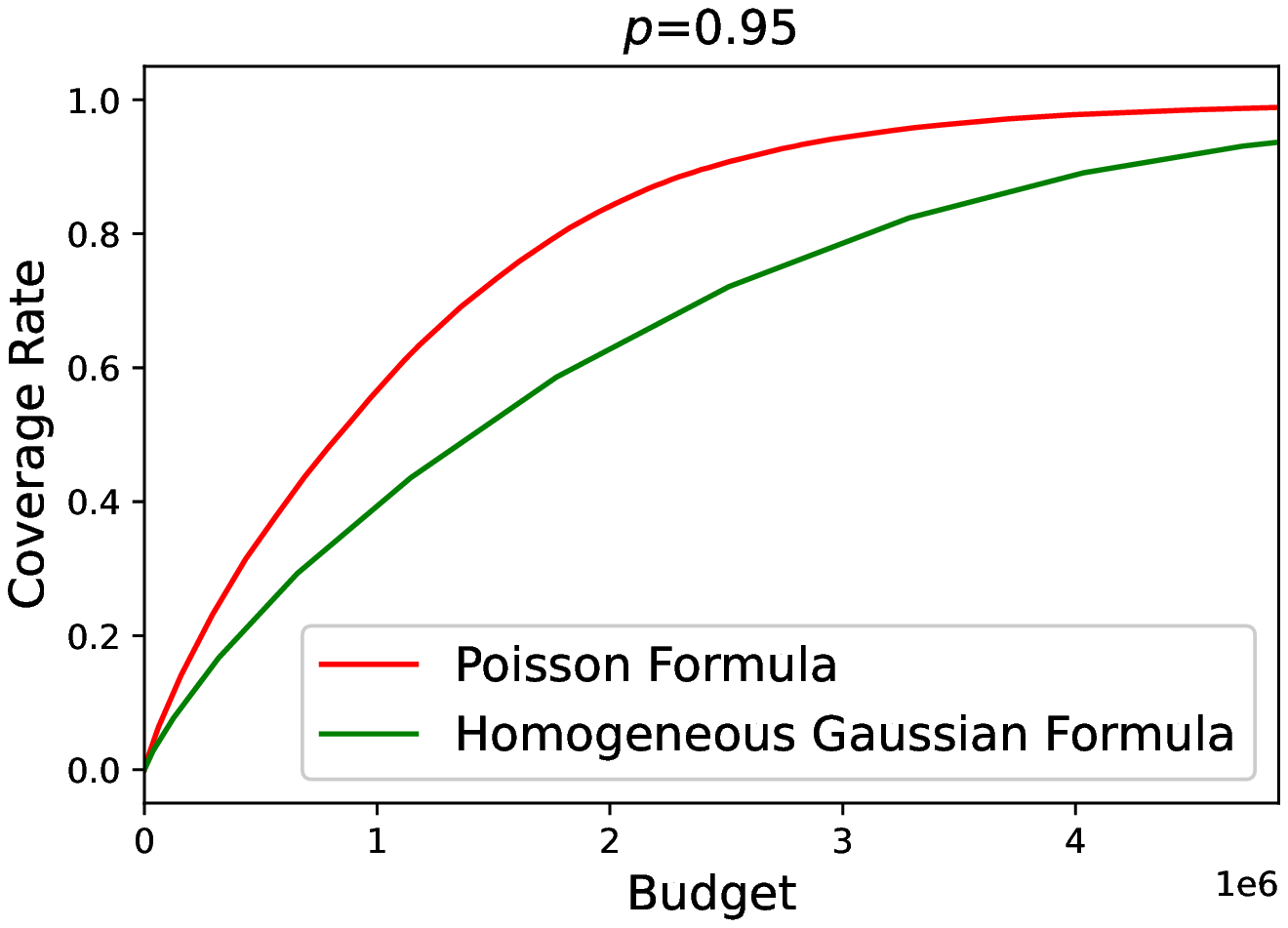}
     \includegraphics[scale=0.5]{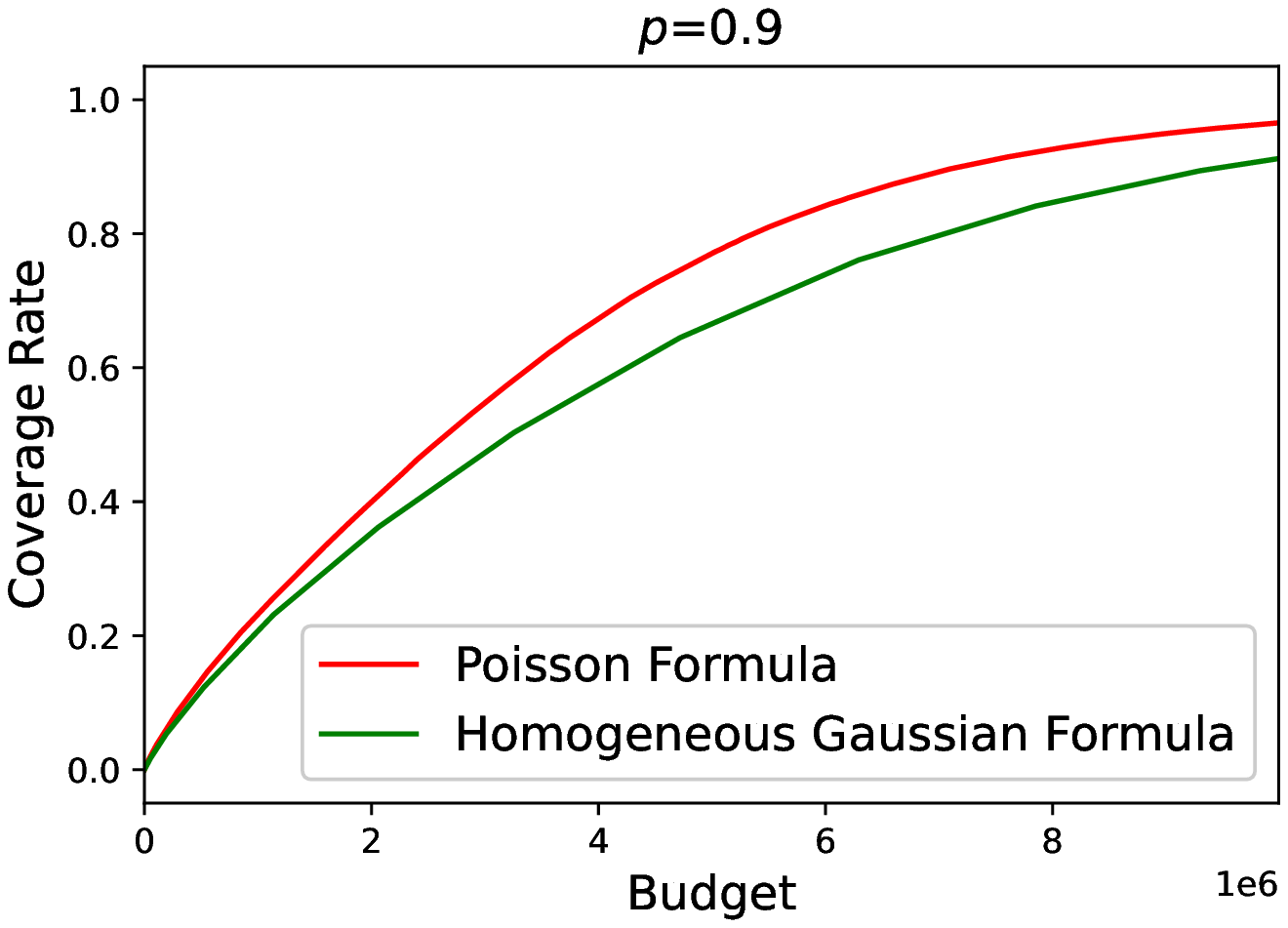}
    \caption{Coverage rates for difference variance formulas correspond to the total-interval-length budget.}
        \label{fig:real-data}
\end{figure}

\section{Conclusion}
We solved the uncertainty quantification problem for matrix completion with heterogeneous and sub-exponential noise. The error variance of a common estimator wsa determined and the asymptotical normality with inference results were established. The explicit formulas for various scenarios such as Poisson noise and Binary noise were analyzed. Experimental results showed significant improvements of our new uncertainty quantification formulas over existing ones.
%

One exciting direction for further work is in assuming less restrictive $\Omega$. As in most of the matrix completion literature, we made the uniform sampling assumption for $\Omega$, which may not be applicable in some practical applications. The study of uncertainty quantification for matrix completion with non-uniform sampling patterns is especially valuable, given the recent progress on deterministic matrix completion, e.g., \citep{chatterjee2020deterministic}.

\bibliographystyle{informs2014}
\bibliography{reference.bib}

\begin{thebibliography}{46}
\providecommand{\natexlab}[1]{#1}
\providecommand{\url}[1]{\texttt{#1}}
\providecommand{\urlprefix}{URL }

\bibitem[{Abbe et~al.(2020)Abbe, Fan, Wang, Zhong et~al.}]{abbe2020entrywise}
Abbe E, Fan J, Wang K, Zhong Y, et~al. (2020) Entrywise eigenvector analysis of
  random matrices with low expected rank. \emph{Annals of Statistics}
  48(3):1452--1474.

\bibitem[{Alquier et~al.(2015)}]{alquier2015bayesian}
Alquier P, et~al. (2015) A bayesian approach for noisy matrix completion:
  Optimal rate under general sampling distribution. \emph{Electronic Journal of
  Statistics} 9(1):823--841.

\bibitem[{Amjad \protect\BIBand{} Shah(2017)}]{amjad2017censored}
Amjad MJ, Shah D (2017) Censored demand estimation in retail. \emph{Proceedings
  of the ACM on Measurement and Analysis of Computing Systems} 1(2):1--28.

\bibitem[{Ansari \protect\BIBand{} Mela(2003)}]{ansari2003customization}
Ansari A, Mela CF (2003) E-customization. \emph{Journal of marketing research}
  40(2):131--145.

\bibitem[{Candes \protect\BIBand{} Plan(2010)}]{candes2010matrix}
Candes EJ, Plan Y (2010) Matrix completion with noise. \emph{Proceedings of the
  IEEE} 98(6):925--936.

\bibitem[{Cand{\`e}s \protect\BIBand{} Recht(2009)}]{candes2009exact}
Cand{\`e}s EJ, Recht B (2009) Exact matrix completion via convex optimization.
  \emph{Foundations of Computational mathematics} 9(6):717--772.

\bibitem[{Carpentier et~al.(2015)Carpentier, Eisert, Gross, \protect\BIBand{}
  Nickl}]{carpentier2015uncertainty}
Carpentier A, Eisert J, Gross D, Nickl R (2015) Uncertainty quantification for
  matrix compressed sensing and quantum tomography problems. \emph{arXiv
  preprint arXiv:1504.03234} .

\bibitem[{Carpentier et~al.(2018)Carpentier, Klopp, L{\"o}ffler,
  \protect\BIBand{} Nickl}]{carpentier2018adaptive}
Carpentier A, Klopp O, L{\"o}ffler M, Nickl R (2018) Adaptive confidence sets
  for matrix completion. \emph{Bernoulli} 24(4A):2429--2460.

\bibitem[{Cemgil(2008)}]{cemgil2008bayesian}
Cemgil AT (2008) Bayesian inference for nonnegative matrix factorisation
  models. \emph{Computational intelligence and neuroscience} 2009.

\bibitem[{Chatterjee(2020)}]{chatterjee2020deterministic}
Chatterjee S (2020) A deterministic theory of low rank matrix completion.
  \emph{IEEE Transactions on Information Theory} 66(12):8046--8055.

\bibitem[{Chen et~al.(2020)Chen, Chi, Fan, Ma, \protect\BIBand{}
  Yan}]{chen2020noisy}
Chen Y, Chi Y, Fan J, Ma C, Yan Y (2020) Noisy matrix completion: Understanding
  statistical guarantees for convex relaxation via nonconvex optimization.
  \emph{SIAM journal on optimization} 30(4):3098--3121.

\bibitem[{Chen et~al.(2019)Chen, Fan, Ma, \protect\BIBand{} Yan}]{CFMY:19}
Chen Y, Fan J, Ma C, Yan Y (2019) Inference and uncertainty quantification for
  noisy matrix completion. \emph{Proceedings of the National Academy of
  Sciences} 116(46):22931--22937.

\bibitem[{Cribari-Neto \protect\BIBand{} Maria~da
  Gl{\'o}ria(2014)}]{cribari2014new}
Cribari-Neto F, Maria~da Gl{\'o}ria AL (2014) New heteroskedasticity-robust
  standard errors for the linear regression model. \emph{Brazilian Journal of
  Probability and Statistics} 28(1):83--95.

\bibitem[{Ding \protect\BIBand{} Chen(2020)}]{ding2020leave}
Ding L, Chen Y (2020) Leave-one-out approach for matrix completion: Primal and
  dual analysis. \emph{IEEE Transactions on Information Theory}
  66(11):7274--7301.

\bibitem[{Dong et~al.(2021)Dong, Absil, \protect\BIBand{}
  Gallivan}]{dong2021riemannian}
Dong S, Absil PA, Gallivan K (2021) Riemannian gradient descent methods for
  graph-regularized matrix completion. \emph{Linear Algebra and its
  Applications} 623:193--235.

\bibitem[{Eckart \protect\BIBand{} Young(1936)}]{eckart1936approximation}
Eckart C, Young G (1936) The approximation of one matrix by another of lower
  rank. \emph{Psychometrika} 1(3):211--218.

\bibitem[{Esseen(1942)}]{esseen1942liapunov}
Esseen CG (1942) On the liapunov limit error in the theory of probability.
  \emph{Ark. Mat. Astr. Fys.} 28:1--19.

\bibitem[{Farias et~al.(2021)Farias, Li, \protect\BIBand{}
  Peng}]{farias2021near}
Farias V, Li AA, Peng T (2021) Near-optimal entrywise anomaly detection for
  low-rank matrices with sub-exponential noise. \emph{International Conference
  on Machine Learning}, 3154--3163 (PMLR).

\bibitem[{Farias \protect\BIBand{} Li(2019)}]{farias2019learning}
Farias VF, Li AA (2019) Learning preferences with side information.
  \emph{Management Science} 65(7):3131--3149.

\bibitem[{Fazayeli et~al.(2014)Fazayeli, Banerjee, Kattge, Schrodt,
  \protect\BIBand{} Reich}]{fazayeli2014uncertainty}
Fazayeli F, Banerjee A, Kattge J, Schrodt F, Reich PB (2014) Uncertainty
  quantified matrix completion using bayesian hierarchical matrix
  factorization. \emph{2014 13th International Conference on Machine Learning
  and Applications}, 312--317 (IEEE).

\bibitem[{Ge et~al.(2016)Ge, Lee, \protect\BIBand{} Ma}]{ge2016matrix}
Ge R, Lee JD, Ma T (2016) Matrix completion has no spurious local minimum.
  \emph{arXiv preprint arXiv:1605.07272} .

\bibitem[{Grover \protect\BIBand{} Srinivasan(1987)}]{grover1987simultaneous}
Grover R, Srinivasan V (1987) A simultaneous approach to market segmentation
  and market structuring. \emph{Journal of Marketing Research} 24(2):139--153.

\bibitem[{Hayes \protect\BIBand{} Cai(2007)}]{hayes2007using}
Hayes AF, Cai L (2007) Using heteroskedasticity-consistent standard error
  estimators in ols regression: An introduction and software implementation.
  \emph{Behavior research methods} 39(4):709--722.

\bibitem[{Imbens \protect\BIBand{} Kolesar(2016)}]{imbens2016robust}
Imbens GW, Kolesar M (2016) Robust standard errors in small samples: Some
  practical advice. \emph{Review of Economics and Statistics} 98(4):701--712.

\bibitem[{Jain et~al.(2013)Jain, Netrapalli, \protect\BIBand{}
  Sanghavi}]{jain2013low}
Jain P, Netrapalli P, Sanghavi S (2013) Low-rank matrix completion using
  alternating minimization. \emph{Proceedings of the forty-fifth annual ACM
  symposium on Theory of computing}, 665--674.

\bibitem[{Keshavan et~al.(2010)Keshavan, Montanari, \protect\BIBand{}
  Oh}]{keshavan2010matrix}
Keshavan RH, Montanari A, Oh S (2010) Matrix completion from a few entries.
  \emph{IEEE transactions on information theory} 56(6):2980--2998.

\bibitem[{Lakshminarayanan et~al.(2016)Lakshminarayanan, Pritzel,
  \protect\BIBand{} Blundell}]{lakshminarayanan2016simple}
Lakshminarayanan B, Pritzel A, Blundell C (2016) Simple and scalable predictive
  uncertainty estimation using deep ensembles. \emph{arXiv preprint
  arXiv:1612.01474} .

\bibitem[{Long \protect\BIBand{} Ervin(2000)}]{long2000using}
Long JS, Ervin LH (2000) Using heteroscedasticity consistent standard errors in
  the linear regression model. \emph{The American Statistician} 54(3):217--224.

\bibitem[{Ma et~al.(2018)Ma, Wang, Chi, \protect\BIBand{}
  Chen}]{ma2018implicit}
Ma C, Wang K, Chi Y, Chen Y (2018) Implicit regularization in nonconvex
  statistical estimation: Gradient descent converges linearly for phase
  retrieval, matrix completion, and blind deconvolution. \emph{Foundations of
  Computational Mathematics} 1--182.

\bibitem[{Mazumder et~al.(2010)Mazumder, Hastie, \protect\BIBand{}
  Tibshirani}]{mazumder2010spectral}
Mazumder R, Hastie T, Tibshirani R (2010) Spectral regularization algorithms
  for learning large incomplete matrices. \emph{The Journal of Machine Learning
  Research} 11:2287--2322.

\bibitem[{McRae \protect\BIBand{} Davenport(2019)}]{mcrae2019low}
McRae AD, Davenport MA (2019) Low-rank matrix completion and denoising under
  poisson noise. \emph{arXiv preprint arXiv:1907.05325} .

\bibitem[{Natarajan \protect\BIBand{} Dhillon(2014)}]{natarajan2014inductive}
Natarajan N, Dhillon IS (2014) Inductive matrix completion for predicting
  gene--disease associations. \emph{Bioinformatics} 30(12):i60--i68.

\bibitem[{Rossmann(2021)}]{Rossmannsales2021}
Rossmann (2021) Rossmann store sales.
  \url{https://www.kaggle.com/c/rossmann-store-sales}, accessed: 2021-05-26.

\bibitem[{Salakhutdinov \protect\BIBand{}
  Mnih(2008)}]{salakhutdinov2008bayesian}
Salakhutdinov R, Mnih A (2008) Bayesian probabilistic matrix factorization
  using markov chain monte carlo. \emph{Proceedings of the 25th international
  conference on Machine learning}, 880--887.

\bibitem[{Shi et~al.(2014)Shi, Katehakis, Melamed, \protect\BIBand{}
  Xia}]{shi2014production}
Shi J, Katehakis MN, Melamed B, Xia Y (2014) Production-inventory systems with
  lost sales and compound poisson demands. \emph{Operations Research}
  62(5):1048--1063.

\bibitem[{So \protect\BIBand{} Ye(2007)}]{so2007theory}
So AMC, Ye Y (2007) Theory of semidefinite programming for sensor network
  localization. \emph{Mathematical Programming} 109(2):367--384.

\bibitem[{Stewart(1977)}]{stewart1977perturbation}
Stewart GW (1977) On the perturbation of pseudo-inverses, projections and
  linear least squares problems. \emph{SIAM review} 19(4):634--662.

\bibitem[{Su \protect\BIBand{} Khoshgoftaar(2009)}]{su2009survey}
Su X, Khoshgoftaar TM (2009) A survey of collaborative filtering techniques.
  \emph{Advances in artificial intelligence} 2009.

\bibitem[{Sun \protect\BIBand{} Luo(2016)}]{sun2016guaranteed}
Sun R, Luo ZQ (2016) Guaranteed matrix completion via non-convex factorization.
  \emph{IEEE Transactions on Information Theory} 62(11):6535--6579.

\bibitem[{Tanaka(2021)}]{tanaka2021bayesian}
Tanaka M (2021) Bayesian matrix completion approach to causal inference with
  panel data. \emph{Journal of Statistical Theory and Practice} 15(2):1--22.

\bibitem[{Tanner \protect\BIBand{} Wei(2016)}]{tanner2016low}
Tanner J, Wei K (2016) Low rank matrix completion by alternating steepest
  descent methods. \emph{Applied and Computational Harmonic Analysis}
  40(2):417--429.

\bibitem[{Tropp et~al.(2015)}]{tropp2015introduction}
Tropp JA, et~al. (2015) An introduction to matrix concentration inequalities.
  \emph{Foundations and Trends{\textregistered} in Machine Learning}
  8(1-2):1--230.

\bibitem[{Vershynin(2018)}]{vershynin2018high}
Vershynin R (2018) \emph{High-dimensional probability: An introduction with
  applications in data science}, volume~47 (Cambridge university press).

\bibitem[{Xia \protect\BIBand{} Yuan(2021)}]{xia2021statistical}
Xia D, Yuan M (2021) Statistical inferences of linear forms for noisy matrix
  completion. \emph{Journal of the Royal Statistical Society: Series B
  (Statistical Methodology)} 83(1):58--77.

\bibitem[{Zeldes et~al.(2017)Zeldes, Theodorakis, Solodnik, Rotman, Chamiel,
  \protect\BIBand{} Friedman}]{zeldes2017deep}
Zeldes Y, Theodorakis S, Solodnik E, Rotman A, Chamiel G, Friedman D (2017)
  Deep density networks and uncertainty in recommender systems. \emph{arXiv
  preprint arXiv:1711.02487} .

\bibitem[{Zhu et~al.(2017)Zhu, Li, Tang, \protect\BIBand{}
  Wakin}]{zhu2017global}
Zhu Z, Li Q, Tang G, Wakin MB (2017) The global optimization geometry of
  low-rank matrix optimization. \emph{arXiv preprint arXiv:1703.01256} .

\end{thebibliography}

\appendix 
\onecolumn
\begin{appendices}

\section{Proof}\label{sec:append-proof-main-theorem}
\subsection{Leave-one-out sequences}
Similar to \cite{abbe2020entrywise,chen2020noisy,CFMY:19,ma2018implicit}, we employ the leave-one-out techniques. In particular, let $O^{(j)} \in \R^{m\times n}, l = 1, 2, \dotsc, m$ be
\begin{align*}
    O^{(j)}_{ik} := \begin{cases}
    O_{ik} & i\neq j \\
    M^{*}_{ik} & i = j.
    \end{cases}
\end{align*} 
The $O^{(j)}$ is obtained from $O$ by replacing the $j$-th row with corresponding entries of $M^{*}.$ Let $\Omega_{(j)} = \Omega \cup \{(j, k), k=1,2,\dotsc, n\}.$ Consider the observation $P_{\Omega_{(j)}}(O^{(j)})$, i.e., one observes the $j$-th row of $M^{*}$, in additional to $P_{\Omega}(O).$ Consider the non-convex objective function $f^{(j)}$ associated with $P_{\Omega_{(j)}}(O^{(j)})$, denoted by
\begin{align*}
f^{(j)}(X, Y) := \frac{1}{2p} \norm{P_{\Omega_{(j)}}(XY^{\top} - O^{(j)})}_{F}^2 + \frac{\lambda}{2p} \norm{X}_{F}^2 + \frac{\lambda}{2p} \norm{Y}_{F}^2.
\end{align*} 
The gradient descent procedure associated with $f^{(j)}$, similar to \cref{alg:GD}, is denoted below.
\begin{algorithm}[H]
\caption{Gradient Descent with Leave-one-out} \label{alg:GD-leave-one-out}
{\bf Input:} $P_{\Omega_{(j)}}(O^{(j)})$
\begin{algorithmic}[1]

\State{\textbf{{Spectral initialization}}: $X^{0,(j)} = U\sqrt{\Sigma}, Y^{0,(j)}=V\sqrt{\Sigma}$ where  $U\Sigma V$ is the top-$r$ partial SVD decomposition of $\frac{1}{p}P_{\Omega_{(j)}}(O^{(j)})$.} 

\State{\textbf{{Gradient updates}}: \textbf{for }$t=0,1,\ldots,t_{\star}-1$
\textbf{do}
 \begin{subequations}\label{subeq:gradient_update_ncvx-loo}
\begin{align*}
X^{t+1,(j)}= & X^{t,(j)}-\frac{\eta}{p}[P_{\Omega_{(j)}}(X^{t,(j)}Y^{t,(j) \top}-O^{(j)})Y^{t,(j)}+\lambda X^{t,(j)}];\\
Y^{t+1,(j)}= & Y^{t,(j)}-\frac{\eta}{p}[P_{\Omega_{(j)}}(X^{t,(j)}Y^{t,(j) \top}-O^{(j)})^{\top}X^{t,(j)}+\lambda Y^{t,(j)}]
\end{align*}
\end{subequations}
where $\eta$ determines the learning rate. 

\State{\textbf{De-bias}: 
\begin{align*}
    X^{\dd, (j)} &= X^{t_{\star}, (j)} \left(I_r + \frac{\lambda}{p}\left(X^{t_{\star}, (j) \top}X^{t_{\star},(j)}\right)^{-1}\right)^{1/2} \\
    Y^{\dd, (j)} &= Y^{t_{\star}, (j)} \left(I_r + \frac{\lambda}{p}\left(Y^{t_{\star}, (j) \top}Y^{t_{\star},(j)}\right)^{-1}\right)^{1/2}
\end{align*}}
}

\end{algorithmic}
\end{algorithm}

Similar to the definition associated with rows, for $j=m+1, m+2, \dotsc, m+n$, let $O^{(j)}$ be 
\begin{align*}
    O^{(j)}_{ik} := \begin{cases}
    O_{ik} & k\neq j-m \\
    M^{*}_{ik} & k = j-m.
    \end{cases}
\end{align*} 
The $O^{(j)}$ is obtained from $O$ by replacing the $(j-m)$-th column with corresponding entries of $M^{*}.$ Then $f^{(j)}$, $X^{t,(j)}, Y^{t, (j)}$, for $j=m+1, m+2, \dotsc, m+n$, can be denoted accordingly. 

\subsection{Preliminaries for coarse-grained error guarantees}
\footnote{We assume $n=m$ in the proof. The generalization is straightforward and we omit it for ease of presentation.}Before proceeding, we introduce a set of results that control the estimation error of various variables in a ``coarse-grained'' sense. This enables us to further provide finer bounds in the next sections. The proof of the following results are based on a reduction from centered sub-exponential random variables to centered sub-Gaussian random variables (see \cref{lem:reduction-ExptoGaussian}) and then an invoke of results in \citep{chen2020noisy} (also see Section A.2 in \citep{CFMY:19}). 

For ease of notation, when there is no ambiguity, let $(X, Y) = (X^{t_\star}, Y^{t_\star}), (X^{(j)}, Y^{(j)}) = (X^{t_\star, (j)}, Y^{t_\star, (j)})$ and
\begin{align*}
F := \begin{bmatrix} X\\ Y\end{bmatrix}, F^{\dd} := \begin{bmatrix} X^{\dd}\\ Y^{\dd}\end{bmatrix}, F^{(j)} := \begin{bmatrix} X^{(j)}\\ Y^{(j)}\end{bmatrix}, F^{\dd, (j)} := \begin{bmatrix} X^{\dd, (j)}\\ Y^{\dd, (j)}\end{bmatrix}.
\end{align*}
We also denote the associated rotations accordingly below. 
\begin{align*}
H &:= \arg\min_{R \in \O^{r\times r}} \norm{FR - F^{*}}_{\F}^2\\
H^{(j)} &:= \arg\min_{R \in \O^{r\times r}} \norm{F^{(j)}R - F^{*}}_{\F}^2\\
R^{(j)} &:= \arg\min_{R \in \O^{r\times r}} \norm{F^{(j)}R - FH}_{\F}^2\\
H^{\dd, (j)} &:= \arg\min_{R \in \O^{r\times r}} \norm{F^{\dd, (j)}R - F^{*}}_{\F}^2
\end{align*}
Then, we have the following claims.
\begin{lemma}\label{lem:norm-bounds}Suppose 
\begin{align*}
    n^2 p \gg \kappa^4 \mu^2 r^2 n \log^3 n \text{\quad and \quad } L\log(n)\sqrt{\frac{n}{p}} \ll \frac{\sigma_{\min}}{\sqrt{\kappa^4 \mu r\log n}}.
\end{align*} 
Suppose $\lambda = \Theta(L\log(n)\sqrt{np})$. With probability at least $1-1/O(n^{10})$, we have the following set of results simultaneously. 
\begin{enumerate}
    \item The bounds for $(X, Y)$.
    \begin{subequations}
\begin{align*}
    \norm{FH - F^{*}}_{F} &\lesssim \noise \norm{X^{*}}_{F}  \\
    \norm{FH - F^{*}} &\lesssim \noise \norm{X^{*}} \\
    \norm{FH - F^{*}}_{2,\infty} &\lesssim \kappa \lognoise \norm{F^{*}}_{2,\infty}.
\end{align*}
\end{subequations}
Furthermore, 
\begin{subequations}
\begin{align*}
    \norm{\nabla f(X, Y)}_{F} &\lesssim \frac{1}{n^5} L\log(n) \sqrt{\frac{n}{p}}\sqrt{\sigma_{\min}}\\
    \norm{X^{\top}X - Y^{\top}Y}_{F} &\lesssim \frac{1}{n^5} \noise \sigma_{\max}\\
    \norm{\frac{1}{p}P_{\Omega}(XY^{\top} - X^{*}Y^{*\top}) - (XY^{\top} - X^{*}Y^{*\top})} &\lesssim L \sqrt{\frac{n}{p}} \sqrt{\frac{\kappa^4 \mu^2 r^2\log^3(n)}{np}}. 
\end{align*}
\end{subequations}

\item The bounds for debiased estimator $(X^{\dd}, Y^{\dd})$.
\begin{subequations}
\begin{align}
    \norm{F^{\dd}H - F^{*}} &\lesssim \noise \norm{X^{*}} \nonumber\\
    \norm{F^{\dd}H^{\dd} - F^{*}} &\lesssim \kappa \noise \norm{X^{*}} \label{eq:FdHd-Fstar-opernorm}\\
    \norm{F^{\dd}H^{\dd} - F^{*}}_{F} &\lesssim \noise \norm{X^{*}}_{\F} \nonumber\\
    \norm{F^{\dd}H^{\dd} - F^{*}}_{2,\infty} &\lesssim \kappa \lognoise \norm{F^{*}}_{2,\infty} \label{eq:FdHd-Fstar-rownorm}\\
    \norm{X^{\dd \top}X^{\dd} - Y^{\dd \top}Y^{\dd}} &\lesssim \frac{\kappa}{n^5} \noise \sigma_{\max} \nonumber.
\end{align}
\end{subequations}

\item The bounds for the leave-one-out estimator $(X^{(j)}, Y^{(j)})$.
\begin{subequations}
\begin{align*}
    \norm{F^{(j)}R^{(j)} - FH}_{F} & \lesssim \lognoise \norm{F^{*}}_{2,\infty}\\
    \norm{F^{(j)}H^{(j)} - FH}_{F} & \lesssim \kappa \lognoise \norm{F^{*}}_{2,\infty}\\
    \norm{F^{(j)}H^{(j)} - F^{*}}_{F} & \lesssim \noise \norm{X^{*}}\\
    \norm{F^{(j)}R^{(j)} - F^{*}}_{2,\infty} & \lesssim \kappa \lognoise \norm{F^{*}}_{2,\infty}.
\end{align*}
\end{subequations}

\item The bounds for the leave-one-out version of the debiased estimator $(X^{\dd,(j)}, Y^{\dd,(j)})$. 
\begin{subequations}
\begin{align}
    \norm{F^{\dd,(j)}H^{\dd,(j)} - F^{*}} &\lesssim \kappa \noise \norm{X^{*}} \nonumber\\
    \norm{F^{\dd,(j)}H^{\dd,(j)} - F^{*}}_{2,\infty} &\lesssim \kappa \lognoise \norm{F^{*}}_{2,\infty} \nonumber\\
    \norm{F^{\dd,(j)}H^{\dd,(j)} - F^{\dd}H^{\dd}}_{2,\infty} &\lesssim \kappa \lognoise \norm{F^{*}}_{2,\infty} \label{eq:FdjHdj-FdHd-rownorm}.
\end{align}
\end{subequations}

\item Additional bounds.
\begin{subequations}
\begin{align}
     &\sigma_r(F) \geq 0.5\sqrt{\sigma_{\min}}, \quad \norm{F} \leq 2\norm{X^{*}}, \quad \norm{F}_{\F}  \leq 2\norm{X^{*}}_{\F}, \quad \norm{F}_{2,\infty} \leq 2\norm{F^{*}}_{2,\infty} \nonumber\\
     &\sigma_r(F^{\dd}) \geq 0.5\sqrt{\sigma_{\min}}, \quad \norm{F^{\dd}} \leq 2\norm{X^{*}}, \quad \norm{F^{\dd}}_{\F}  \leq 2\norm{X^{*}}_{\F}, \quad \norm{F^{\dd}}_{2,\infty} \leq 2\norm{F^{*}}_{2,\infty} \label{eq:sigma-r-Fd-bound}\\
     &\sigma_r(F^{(j)}) \geq 0.5\sqrt{\sigma_{\min}}, \quad \norm{F^{(j)}} \leq 2\norm{X^{*}}, \quad \norm{F^{(j)}}_{\F}  \leq 2\norm{X^{*}}_{\F}, \quad \norm{F^{(j)}}_{2,\infty} \leq 2\norm{F^{*}}_{2,\infty}\nonumber \\
     &\sigma_r(F^{\dd,(j)}) \geq 0.5\sqrt{\sigma_{\min}}, \quad \norm{F^{\dd,(j)}} \leq 2\norm{X^{*}}, \quad \norm{F^{\dd,(j)}}_{\F}  \leq 2\norm{X^{*}}_{\F}, \quad \norm{F^{\dd,(j)}}_{2,\infty} \leq 2\norm{F^{*}}_{2,\infty} \label{eq:sigma-r-Fdj-bound}
\end{align}
\end{subequations}
\end{enumerate}
\end{lemma}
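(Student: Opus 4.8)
The plan is to \emph{reduce} the heterogeneous sub-exponential problem to the homogeneous sub-Gaussian problem already resolved in \cite{chen2020noisy} (and in Section~A.2 of \cite{CFMY:19}), and then to import their coarse-grained estimates essentially verbatim. The key observation is that every object named in \cref{lem:norm-bounds} --- the iterates $(X^{t_\star},Y^{t_\star})$, the debiased pair $(X^{\dd},Y^{\dd})$, and all of their leave-one-out counterparts produced by \cref{alg:GD} and \cref{alg:GD-leave-one-out} --- is a \emph{deterministic} function of the noise matrix $E$. I would therefore couple $E$ with a bounded, mean-zero surrogate $\tilde E$ that (i) coincides with $E$ on a high-probability event and (ii) is $O(\sigmaL)$-sub-Gaussian, run the identical dynamics on $\tilde E$, invoke the sub-Gaussian bounds of \cite{chen2020noisy} with effective noise level $\sigma \asymp \sigmaL$, and transfer the conclusions back to $E$ on the coupling event.

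The first step is to construct the surrogate by truncation, which is the content of \cref{lem:reduction-ExptoGaussian}: set $\tilde E_{ij} = E_{ij}\1{|E_{ij}|\le \tau}$, re-centered to be mean-zero, at level $\tau \asymp \sigmaL$. Since $\norm{E_{ij}}_{\psi_1}\le L$, a sub-exponential tail bound gives $\Pr(|E_{ij}|>\tau)\le n^{-12}$, so a union bound over the at most $n^2$ entries yields $\Pr(E\ne \tilde E)=O(n^{-10})$; off this event the dynamics driven by $E$ and by $\tilde E$ are identical. The re-centering bias $\E{E_{ij}\1{|E_{ij}|>\tau}}$ is exponentially small in $\tau/L$ and is therefore dominated by every right-hand side appearing in the lemma, while each $\tilde E_{ij}\in[-\tau,\tau]$ is $O(\sigmaL)$-sub-Gaussian. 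Heterogeneity is harmless at this resolution: the coarse bounds depend only on the uniform sub-Gaussian parameter $O(\sigmaL)$ of the truncated entries, not on the individual variances $\sigma_{ij}^2$, so the heterogeneous model collapses to a single homogeneous sub-Gaussian one for the purposes of this lemma.

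With the surrogate in hand, each of the five groups of estimates is the $\tilde E$-analogue of an inequality established in \cite{chen2020noisy}. I would verify that the hypotheses $n^2 p \gg \kappa^4\mu^2 r^2 n\log^3 n$ and $\sigmaL\sqrt{n/p}\ll \sigma_{\min}/\sqrt{\kappa^4\mu r\log n}$ are exactly the translations of their sample-complexity and signal-to-noise conditions under the substitution $\sigma\mapsto\sigmaL$, and that $\lambda=\Theta(\sigmaL\sqrt{np})$ matches their regularization scaling. The bounds then emerge with $\sigmaL$ wherever the original statements carry $\sigma$ --- that is, with the quantities $\noise$ and $\lognoise$ --- yielding the Frobenius, spectral, and $\norm{\cdot}_{2,\infty}$ control of $FH-F^{*}$, its debiased and leave-one-out versions, the balancing and gradient estimates, and the auxiliary singular-value and norm bounds of the final group. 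A union bound over the $m+n$ leave-one-out indices (each failing with probability $O(n^{-11})$) together with the coupling event keeps the total failure probability at $O(n^{-10})$.

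The delicate part is the reduction itself rather than the invocation. One must confirm that passing to the \emph{bounded} surrogate costs only the logarithmic factor recorded in $\sigmaL=L\log n$, and that the truncation bias vanishes into the slack of \emph{every} displayed bound uniformly over all $O(n)$ leave-one-out copies at once; in particular the estimates whose targets are themselves as small as $n^{-5}$ --- the gradient norm $\norm{\nabla f(X,Y)}_{\F}$ and the balancing terms $\norm{X^\top X - Y^\top Y}_{\F}$ --- must be checked to dominate the bias, which forces the truncation constant in $\tau$ to be taken large enough. A secondary point is that \cite{chen2020noisy} be applicable to \emph{independent, non-identically distributed} sub-Gaussian entries rather than i.i.d.\ ones; since their concentration steps use only independence and a common sub-Gaussian parameter this is immediate, but it is precisely where the heterogeneity of $E$ must be tracked explicitly.
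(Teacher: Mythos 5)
Your proposal matches the paper's proof essentially exactly: the paper also reduces to the homogeneous sub-Gaussian case of \cite{CFMY:19} (Section A.2) by replacing $E$ with a bounded mean-zero surrogate of sub-Gaussian norm $O(L\log n)$ that coincides with $E$ except with probability $1/\mathrm{poly}(n)$ (its Lemma~\ref{lem:reduction-ExptoGaussian}), and then imports the coarse-grained bounds with $\sigma \mapsto L\log n$. The only cosmetic difference is the re-centering mechanism: the paper preserves the exact coupling $\{|E_{ij}|\le k_1\}\subseteq\{E_{ij}=\tilde E_{ij}\}$ by placing a small compensating point mass on the tail event rather than subtracting the (nonzero) truncation bias, which your subtraction-based re-centering would technically break, though the bias is exponentially small either way.
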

\begin{proof}
The above bounds for sub-Gaussian noise have been shown in \cite{CFMY:19}. To generalize these bounds to sub-exponential noise, we observe that for any sub-exponential zero-mean random variable $X$ with $\|X\|_{\phi_1} \leq L$, one can construct a sub-Gaussian zero-mean random variable $Y$ with $\|Y\|_{\phi_2} \lesssim L\log(n)$, and $Y$ is extremely close to $X$ where $\Pr(X\neq Y) = 1/\text{ploy}(n)$ (see \cref{lem:reduction-ExptoGaussian}). 

Then, for a noise matrix $E \in R^{m\times n}$ with independent sub-exponential entries $\|E_{ij}\|_{\psi_1}\leq L$, we can construct $E' \in \R^{m\times n}$ with independent sub-Gaussian entries $\|E'_{ij}\|_{\psi_2} \lesssim L\log(n)$ and $\Pr(E\neq E') = 1/\text{poly}(n).$ Then one can employ the results in Section A.2 of \cite{CFMY:19} for $E'$ (with an $O(\log(n))$ increase of sub-Gaussian norm) to provide bounds for $E$, which completes the proof. 
\end{proof}

\subsection{Characterization of low-rank factors and proof of  \cref{thm:eigen-vector}}\label{sec:eigen-vector}

We will prove \cref{thm:eigen-vector} based on a decomposition of $X^{\dd} H^{\dd} - X^{*}$ and $Y^{\dd}H^{\dd} - Y^{*}$. Due to the symmetry, we focus on $X^{\dd}H^{\dd} - X^{*}$. From \cite{CFMY:19}, we have the following characterization by direct algebra.
\begin{lemma}[Eq. (5.11) \cite{CFMY:19}] 
Let $\bar{X}^{\dd} = X^{\dd}H^{\dd}, \bYd = Y^{\dd}H^{\dd}.$ Then 
\begin{align*}
X^{\dd}H^{\dd} - X^{*} = \frac{1}{p}P_{\Omega}(E)Y^{*}(Y^{*T}Y^{*})^{-1} + \Phi_{X}
\end{align*}
where 
\begin{align*}
\Phi_{X} 
&:= 
\underbrace{\frac{1}{p}P_{\Omega}(E)\left[\bYd(\bYdT\bYd)^{-1}-Y^{*}(Y^{*T}Y^{*})^{-1}\right]}_{\Phi_1} + \underbrace{X^{*}\left[Y^{*T}\bYd(\bYdT\bYd)^{-1}-I_{r}\right]}_{\Phi_2} + \underbrace{A\bYd(\bYdT\bYd)^{-1}}_{\Phi_3}\nonumber\\
&\quad  + \underbrace{\nabla_{X} f(X, Y) (Y^{\top}Y)^{-1} \left(I_r + \frac{\lambda}{p} (Y^{\top}Y)^{-1}\right)^{1/2}(Y^{\dd T}Y^{\dd})^{-1} H^{\dd} + X\Dbalan H^{\dd}}_{\Phi_4}
\end{align*}
with
\begin{align}
    A &:= \frac{1}{p} P_{\Omega} (XY^{\top} - X^{*}Y^{*T}) - (XY^{\top} - X^{*}Y^{*T})\\
    \Dbalan &:= \left(I_r + \frac{\lambda}{p} (X^{\top}X)^{-1}\right)^{1/2} - \left(I_r + \frac{\lambda}{p}(Y^{\top}Y)^{-1}\right)^{1/2}.
\end{align}
\end{lemma}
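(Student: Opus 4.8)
The plan is to derive the stated identity directly from the first-order (stationarity) structure of the gradient-descent iterate together with the de-biasing formula, treating it as an exact algebraic rearrangement rather than an estimate. By symmetry I focus on the $X$-equation; the $Y$-equation is identical with the roles of $X$ and $Y$ (and of $U^*, V^*$) interchanged.

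First I would peel off the ``balancing'' correction. Writing $G := (I_r + \frac{\lambda}{p}(Y^\top Y)^{-1})^{1/2}$ and recalling $X^{\dd} = X(I_r + \frac{\lambda}{p}(X^\top X)^{-1})^{1/2}$, I split $X^{\dd} = XG + X\Dbalan$, so that $X^{\dd}H^{\dd} = XGH^{\dd} + X\Dbalan H^{\dd}$, and the second summand is already one of the two pieces of $\Phi_4$. The algebraic fact I would use repeatedly is that $Y^{\dd\top}Y^{\dd} = Y^\top Y + \frac{\lambda}{p}I_r$ (since $Y^{\dd} = YG$ and $G$ is a function of $Y^\top Y$), and more generally that $G$, $Y^\top Y$, and $(Y^\top Y + \frac{\lambda}{p}I_r)^{-1}$ all mutually commute, being functions of $Y^\top Y$.

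Next I would invoke the gradient identity $\nabla_X f(X,Y) = \frac{1}{p}P_{\Omega}(XY^\top - O)Y + \frac{\lambda}{p}X$. Substituting $O = X^*Y^{*\top} + E$ and the definition of $A$ gives $\nabla_X f(X,Y) = AY + XY^\top Y - X^*Y^{*\top}Y - \frac{1}{p}P_{\Omega}(E)Y + \frac{\lambda}{p}X$, which is a linear equation in $X$. Solving it expresses $X(Y^\top Y + \frac{\lambda}{p}I_r)$ as a sum of $\nabla_X f(X,Y)$, $X^*Y^{*\top}Y$, $\frac{1}{p}P_{\Omega}(E)Y$, and a term proportional to $AY$. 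I would then right-multiply the solved expression for $X$ by $GH^{\dd}$ (to form $XGH^{\dd}$): each term carrying a trailing factor $Y$ collapses via $Y(Y^\top Y + \frac{\lambda}{p}I_r)^{-1}G = Y^{\dd}(Y^{\dd\top}Y^{\dd})^{-1}$ followed by $Y^{\dd}(Y^{\dd\top}Y^{\dd})^{-1}H^{\dd} = \bYd(\bYdT\bYd)^{-1}$, producing the noise term $\frac{1}{p}P_{\Omega}(E)\bYd(\bYdT\bYd)^{-1}$, the term $X^*Y^{*\top}\bYd(\bYdT\bYd)^{-1}$, and the $A$-term, whereas the $\nabla_X f(X,Y)$ factor (which does not carry a trailing $Y$) becomes the first piece of $\Phi_4$ after recombining the accompanying functions of $Y^\top Y$.

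Finally I would collect everything into the target form. Subtracting $X^*$ and splitting the noise term as $\frac{1}{p}P_{\Omega}(E)Y^*(Y^{*\top}Y^*)^{-1} + \frac{1}{p}P_{\Omega}(E)[\bYd(\bYdT\bYd)^{-1} - Y^*(Y^{*\top}Y^*)^{-1}]$ isolates the leading term and identifies the bracket as $\Phi_1$; the difference $X^*Y^{*\top}\bYd(\bYdT\bYd)^{-1} - X^* = X^*[Y^{*\top}\bYd(\bYdT\bYd)^{-1} - I_r]$ is $\Phi_2$; the $A$-term is $\Phi_3$; and the gradient contribution together with $X\Dbalan H^{\dd}$ is $\Phi_4$. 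The main obstacle is purely bookkeeping: one must track which factors commute (only functions of $Y^\top Y$ do, whereas $X$, $Y^*$, $P_{\Omega}(E)$, and $H^{\dd}$ do not), carry the rotation $H^{\dd}$ through consistently, and retain $\nabla_X f(X,Y)$ as an explicit (small but generally nonzero) term because the iterate $X^{t_\star}$ is only approximately stationary. No inequality is needed at this stage—the statement is an exact identity—so the magnitudes of $\Phi_1,\dots,\Phi_4$ are left to the subsequent bounds.
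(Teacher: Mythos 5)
The paper offers no proof of this lemma at all---it is imported verbatim from CFMY:19 with the remark ``by direct algebra''---so your reconstruction is the relevant object, and it follows exactly the route that remark alludes to: isolate $X(Y^{\top}Y+\tfrac{\lambda}{p}I_r)$ from the gradient identity, right-multiply by $(I_r+\tfrac{\lambda}{p}(Y^{\top}Y)^{-1})^{1/2}H^{\dd}$, collapse every trailing $Y$ via $Y^{\dd\top}Y^{\dd}=Y^{\top}Y+\tfrac{\lambda}{p}I_r$ and the commutativity of functions of $Y^{\top}Y$, and absorb the de-biasing through the split $X^{\dd}=XG+X\Dbalan$. Your plan is correct and complete as a derivation strategy. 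Two cosmetic discrepancies with the displayed formula are worth recording, though. First, with $A$ defined as in the statement, the algebra produces $-A\bYd(\bYdT\bYd)^{-1}$, not $+A\bYd(\bYdT\bYd)^{-1}$, for $\Phi_3$. Second, the prefactor your route attaches to $\nabla_X f(X,Y)$ is $(Y^{\dd\top}Y^{\dd})^{-1}\left(I_r+\tfrac{\lambda}{p}(Y^{\top}Y)^{-1}\right)^{1/2}H^{\dd}=(Y^{\top}Y)^{-1}\left(I_r+\tfrac{\lambda}{p}(Y^{\top}Y)^{-1}\right)^{-1/2}H^{\dd}$, which differs from the first piece of $\Phi_4$ as displayed by a commuting factor of $(Y^{\top}Y)^{-1}$; your claim that the gradient term ``becomes the first piece of $\Phi_4$'' therefore does not literally match the stated expression. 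Neither issue propagates: the subsequent bounds use only $\norm{A}$ and $\norm{\nabla f(X,Y)}_{\F}$, and the extra factor only shrinks the (already negligible) gradient contribution. But if the identity is to be quoted as an exact equality, the sign of $\Phi_3$ and the $\Phi_4$ prefactor should be reconciled with the source.
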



Then, to show \cref{thm:eigen-vector}, it boils down to bound $\|\Phi_{X}\|_{2,\infty}$. Since $\Phi_{X}=\Phi_1+\Phi_2+\Phi_3+\Phi_4$, it is sufficient to prove the following lemma. 
\begin{lemma}\label{lem:bound-of-Phi}
With probability $1-O(n^{-10})$, we have
\begin{align}
    \norm{\Phi_1}_{2,\infty} &\lesssim \frac{\sigmaL}{\sqrt{p\sigma_{\min}}}\frac{\sigmaL}{\sigma_{\min}} \sqrt{\frac{\kappa^3 \mu r n\log n}{p}} \label{eq:Phi1-rownorm}\\
    \norm{\Phi_2}_{2,\infty} &\lesssim \frac{\sigmaL}{\sqrt{p\sigma_{\min}}}\left(\kappa \frac{\sigmaL}{\sigma_{\min}} \sqrt{\frac{\kappa^7 \mu r n}{p}} + \sqrt{\frac{\kappa^{7}\mu^3 r^{3}\log n}{np}}\right) \label{eq:Phi2-rownorm}\\
    \norm{\Phi_3}_{2, \infty} &\lesssim \frac{\sigmaL}{\sqrt{p\sigma_{\min}}} \sqrt{\frac{\kappa^5 \mu^3 r^3 \log^2 n}{np}} \label{eq:Phi3-rownorm}\\
    \norm{\Phi_4}_{2,\infty} &\lesssim \frac{\sigmaL}{\sqrt{p\sigma_{\min}}} \frac{1}{n^4} \label{eq:Phi4-rownorm}.
\end{align}
\end{lemma}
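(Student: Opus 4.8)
The plan is to bound each of the four terms $\Phi_1, \Phi_2, \Phi_3, \Phi_4$ in the $\|\cdot\|_{2,\infty}$ norm separately, leveraging the coarse-grained bounds from \cref{lem:norm-bounds} together with the two concentration tools \cref{lem:subexponential-Bernstein} and \cref{lem:operator-norm-bound}. The overarching principle is that each $\Phi_i$ factors into a product of a ``noise-dependent'' piece (typically $P_\Omega(E)$ or a related quantity) and a ``perturbation'' piece measuring how far the estimated factors lie from $(X^*, Y^*)$. For the latter I would feed in the deviation bounds from \cref{lem:norm-bounds}, such as $\|F^\dd H^\dd - F^*\|$, $\|F^\dd H^\dd - F^*\|_{2,\infty}$, and the gradient/balancing bounds; for the former I would deploy the sub-exponential concentration lemmas to control the operator norm of $\frac{1}{p}P_\Omega(E)$ and, crucially, the row-wise action of $P_\Omega(E)$ on fixed incoherent vectors.

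First I would handle $\Phi_2$ and $\Phi_4$, which are essentially deterministic given the coarse-grained estimates: $\Phi_2 = X^*[Y^{*\top}\bar Y^\dd(\bar Y^{\dd\top}\bar Y^\dd)^{-1} - I_r]$ is controlled by writing $Y^{*\top}\bar Y^\dd - \bar Y^{\dd\top}\bar Y^\dd = -(\bar Y^\dd - Y^*)^\top \bar Y^\dd$ and bounding it via $\|F^\dd H^\dd - F^*\|$ and incoherence, while $\Phi_4$ is negligible because of the tiny gradient bound $\|\nabla f(X,Y)\|_F \lesssim n^{-5}(\cdots)$ and the balancing bound $\|X^{\dd\top}X^\dd - Y^{\dd\top}Y^\dd\| \lesssim \kappa n^{-5}(\cdots)$, which directly give the $n^{-4}$ factor. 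For $\Phi_1 = \frac{1}{p}P_\Omega(E)[\bar Y^\dd(\bar Y^{\dd\top}\bar Y^\dd)^{-1} - Y^*(Y^{*\top}Y^*)^{-1}]$, I would use the operator-norm bound $\|\frac{1}{p}P_\Omega(E)\| \lesssim L\sqrt{n/p}$ from \cref{lem:operator-norm-bound} times the perturbation of the pseudo-inverse, reducing it again to $\|F^\dd H^\dd - F^*\|$ and $\sigma_r(F^\dd) \gtrsim \sqrt{\sigma_{\min}}$.

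The main obstacle is $\Phi_3 = A\bar Y^\dd(\bar Y^{\dd\top}\bar Y^\dd)^{-1}$, where $A = \frac{1}{p}P_\Omega(XY^\top - X^*Y^{*\top}) - (XY^\top - X^*Y^{*\top})$. Bounding $\|\Phi_3\|_{2,\infty}$ requires controlling the $i$-th row of $A$ acting on the (now random, and statistically dependent on $A$) factor $\bar Y^\dd$. This is exactly the point where the leave-one-out machinery is indispensable: I would replace $\bar Y^\dd$ by its leave-one-out surrogate $Y^{\dd,(j)}$ (which is independent of the $j$-th row's randomness), use the closeness bound \cref{eq:FdjHdj-FdHd-rownorm} to absorb the substitution error, and then apply \cref{lem:subexponential-Bernstein} to the resulting sum of independent sub-exponential terms in each row. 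The delicate bookkeeping lies in tracking the sub-exponential norm $B$ and variance proxy $V$ through the incoherence conditions so that the final bound scales as $\sqrt{\kappa^5 \mu^3 r^3 \log^2 n/(np)}$, and in verifying that the leave-one-out decoupling is tight enough that the substitution error does not dominate. Once all four bounds are assembled, summing them and simplifying under the stated sample-complexity assumption $mp \gg \kappa^4\mu^2 r^2\log^3 n$ yields the claimed row-norm bound on $\Phi_X$, completing the proof of \cref{thm:eigen-vector}.
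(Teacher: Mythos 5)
Your overall architecture (bound each $\Phi_i$ separately from the coarse-grained estimates plus the two concentration lemmas) matches the paper, but you have misallocated where the real difficulty lies, and the routes you propose for $\Phi_1$ and $\Phi_2$ would not deliver the stated bounds. For $\Phi_1$, bounding $\|e_j^{\top}\tfrac{1}{p}P_{\Omega}(E)\Delta\|$ by $\|\tfrac{1}{p}P_{\Omega}(E)\|\cdot\|\Delta\|\lesssim L\sqrt{n/p}\cdot\tfrac{1}{\sqrt{\sigma_{\min}}}\tfrac{L\log n}{\sigma_{\min}}\sqrt{\kappa^3 n/p}$ overshoots \cref{eq:Phi1-rownorm} by a factor of order $\sqrt{n/(\mu r)}/\log^{3/2}n$: the operator norm sees the whole matrix, whereas the claimed bound reflects the variance of a single row. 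The paper's proof therefore applies the leave-one-out decoupling precisely here — it substitutes $\bar Y^{\dd,(j)}$ for $\bar Y^{\dd}$ so that $\Delta^{(j)}$ is independent of row $j$ of $E$ and $\Omega$, applies \cref{lem:subexponential-Bernstein} to the sum $\sum_k \delta_{jk}E_{jk}\Delta^{(j)}_{k,\cdot}$ (gaining $\sqrt{V}\asymp\sqrt{pr}\,\|\Delta^{(j)}\|$ in place of $\sqrt{np}\,\|\Delta^{(j)}\|$), and only uses the operator-norm bound of \cref{lem:operator-norm-bound} for the small substitution error controlled by \cref{eq:FdjHdj-FdHd-rownorm}. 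Similarly, $\Phi_2$ is not ``essentially deterministic'': the naive bound $\|(Y^*-\bar Y^{\dd})^{\top}\bar Y^{\dd}\|\le\|Y^*-\bar Y^{\dd}\|\,\|\bar Y^{\dd}\|$ is linear in the error $\tfrac{L\log n}{\sigma_{\min}}\sqrt{n/p}$ and leaves a term of order $\kappa^{2.5}\sqrt{\mu r}$ inside the parenthesis of \cref{eq:Phi2-rownorm}, which does not vanish. The paper instead exploits a cancellation that reduces the problem to $\|X^{*\top}\tfrac{1}{p}P_{\Omega}(E)Y^*\|$ (plus terms quadratic in the error), and this quantity must be bounded by $\tfrac{r}{\sqrt p}L\sigma_{\max}\sqrt{\log n}$ via the sub-exponential matrix Bernstein inequality — the worked example in the Proof Overview — rather than by the much larger $\sigma_{\max}L\sqrt{n/p}$ from the operator-norm bound.

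Conversely, the leave-one-out-plus-Bernstein effort you direct at $\Phi_3$ is misplaced for this paper: $A=\tfrac{1}{p}P_{\Omega}(XY^{\top}-X^*Y^{*\top})-(XY^{\top}-X^*Y^{*\top})$ involves the sampling indicators and the fitted residual but not $E$ directly, so no new sub-exponential analysis is needed there; the paper controls $\Phi_3$ (and $\Phi_4$, where your reasoning via the gradient and balancing bounds is correct) by importing the corresponding arguments of \citet{CFMY:19} essentially unchanged, using the coarse-grained bound on $\|A\|$ from \cref{lem:norm-bounds}. As written, your proposal would fail on \cref{eq:Phi1-rownorm} and \cref{eq:Phi2-rownorm}, which are exactly the two places where the heterogeneous sub-exponential noise forces a new argument.
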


\subsection{Proof of \cref{lem:bound-of-Phi}}
Equipped with \cref{lem:norm-bounds}, the proof of \cref{lem:bound-of-Phi} follows a similar framework for the proof of Lemma 5 to Lemma 8 in \citep{CFMY:19}, where the differences are highlighted below. We extend the homogeneous Gaussian noise in \citep{CFMY:19} to the heteroskedastic and subexponential noise, based on the results from sub-exponential matrix completion \citep{mcrae2019low} and a subexponential variant of Matrix Bernstein inequality (see \cref{lem:subexponential-Bernstein}). 

\subsubsection{Proof of \cref{eq:Phi1-rownorm}}
By triangle inequality, for any row of $\Phi_1$, we have 
\begin{align*}
\norm{e_j^{\top} \Phi_1}_2 &\leq \underbrace{\norm{e_j^{\top}\frac{1}{p} P_{\Omega}(E)\left[\bar{Y}^{\dd,(j)}(\bar{Y}^{\dd,(j) T}\bar{Y}^{\dd,(j)})^{-1}-Y^{*}(Y^{*T}Y^{*})^{-1}\right]}_2}_{\alpha_1}  \\
&\quad + \underbrace{\norm{e_j^{\top} \frac{1}{p} P_{\Omega}(E)\left[\bYd(\bYdT\bYd)^{-1} - \bar{Y}^{\dd,(j)}(\bar{Y}^{\dd,(j) T}\bar{Y}^{\dd,(j)})^{-1}\right]}_2}_{\alpha_2}
\end{align*}
where $\bar{Y}^{\dd,(j)} = Y^{\dd, (j)} H^{\dd, (j)}, j=1,2,\dotsc, n.$

\begin{enumerate}
    \item Next, we control $\alpha_1$. Let $\Delta^{(j)} = \bar{Y}^{\dd,(j)}(\bar{Y}^{\dd,(j) T}\bar{Y}^{\dd,(j)})^{-1}-Y^{*}(Y^{*T}Y^{*})^{-1}$ ($\Delta^{(j)} \in R^{n\times r}$). Then
    \begin{align*}
        \alpha_1 = \norm{e_j^{\top}\frac{1}{p}P_{\Omega}(E) \Delta^{(j)}}_2.  
    \end{align*}
    
    Furthermore, we have the following claim. 
    \begin{claim} With probability $1-O(n^{-11})$, 
    \begin{align*}
    \norm{\Delta^{(j)}} &\lesssim \frac{1}{\sqrt{\sigma_{\min}}} \cdot \frac{\sigmaL}{\sigma_{\min}} \sqrt{\frac{\kappa^3 n}{p}}\\
    \norm{\Delta^{(j)}}_{2,\infty}  &\lesssim \frac{1}{\sqrt{\sigma_{\min}}} \cdot \frac{\sigmaL}{\sigma_{\min}} \sqrt{\frac{\kappa^5 \mu r \log(n)}{p}}.
    \end{align*}
    \end{claim}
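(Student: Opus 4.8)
The plan is to treat $\Delta^{(j)}$ as a first-order perturbation of the map $Y \mapsto Y(Y^{\top}Y)^{-1}$ about $Y=Y^*$, with perturbation matrix $\Delta_Y := \bar{Y}^{\dd,(j)} - Y^*$. First I would transfer the coarse bounds already available onto $\Delta_Y$: since $\bar{Y}^{\dd,(j)}-Y^*$ is precisely the bottom block of $F^{\dd,(j)}H^{\dd,(j)}-F^*$, item 4 of \cref{lem:norm-bounds} gives $\norm{\Delta_Y}\lesssim \kappa\,\noise\,\norm{X^*}$ and $\norm{\Delta_Y}_{2,\infty}\lesssim \kappa\,\lognoise\,\norm{F^*}_{2,\infty}$. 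I would then substitute $\norm{X^*}=\sqrt{\sigma_{\max}}=\sqrt{\kappa\,\sigma_{\min}}$ and, via incoherence, $\norm{F^*}_{2,\infty}\lesssim\sqrt{\mu r\sigma_{\max}/n}$ (recall $m=n$ in the proof). Because the standing assumption $L\log(n)\sqrt{n/p}\ll\sigma_{\min}/\sqrt{\kappa^4\mu r\log n}$ forces $\kappa^2\,\noise\ll 1$, Weyl's inequality yields $\sigma_r(\bar{Y}^{\dd,(j)})\geq\sigma_r(Y^*)-\norm{\Delta_Y}\gtrsim\sqrt{\sigma_{\min}}$, so that $\norm{(A^{\top}A)^{-1}}\lesssim 1/\sigma_{\min}$ for $A:=\bar{Y}^{\dd,(j)}$.

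Next I would use the algebraic identity
\[ \Delta^{(j)} = \Delta_Y\,(A^{\top}A)^{-1} + Y^*\big[(A^{\top}A)^{-1}-(Y^{*\top}Y^*)^{-1}\big], \qquad A:=\bar{Y}^{\dd,(j)}, \]
together with $(A^{\top}A)^{-1}-(Y^{*\top}Y^*)^{-1} = (A^{\top}A)^{-1}\big(Y^{*\top}Y^*-A^{\top}A\big)(Y^{*\top}Y^*)^{-1}$ and the expansion $Y^{*\top}Y^*-A^{\top}A = -(Y^{*\top}\Delta_Y+\Delta_Y^{\top}Y^*+\Delta_Y^{\top}\Delta_Y)$. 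The first summand is immediately bounded by $\norm{\Delta_Y}/\sigma_{\min}$ in spectral norm and by $\norm{\Delta_Y}_{2,\infty}/\sigma_{\min}$ in $\ell_{2,\infty}$ norm, both of which fall within the claimed bounds after substituting the estimates above. For the $\ell_{2,\infty}$ norm of the second summand I would factor $\norm{Y^*[\cdots]}_{2,\infty}\leq\norm{Y^*}_{2,\infty}\,\norm{(A^{\top}A)^{-1}-(Y^{*\top}Y^*)^{-1}}$, bound the spectral factor crudely by $\sqrt{\sigma_{\max}}\,\norm{\Delta_Y}/\sigma_{\min}^2\lesssim\kappa^2\,\noise/\sigma_{\min}$, and insert the incoherence bound on $\norm{Y^*}_{2,\infty}$; simplifying reproduces the stated $\kappa^{5/2}$ row-norm bound with room to spare.

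The delicate step is the spectral-norm bound of the second summand, where the naive estimate $\norm{Y^*}^2\norm{(A^{\top}A)^{-1}}=\sigma_{\max}/\sigma_{\min}=\kappa$ overshoots the target by one power of $\kappa$. I would avoid this by exploiting that $Y^*(A^{\top}A)^{-1}Y^{*\top}$ is a small perturbation of the orthogonal projection $Y^*(Y^{*\top}Y^*)^{-1}Y^{*\top}$: writing $(A^{\top}A)^{-1/2}(Y^{*\top}Y^*)(A^{\top}A)^{-1/2}=I_r+(A^{\top}A)^{-1/2}(Y^{*\top}Y^*-A^{\top}A)(A^{\top}A)^{-1/2}$ and using $\norm{Y^{*\top}Y^*-A^{\top}A}\lesssim\sqrt{\sigma_{\max}}\,\norm{\Delta_Y}$ with $\kappa^2\,\noise\ll1$ shows the correction has norm $o(1)$, hence $\norm{Y^*(A^{\top}A)^{-1}Y^{*\top}}\lesssim1$ and, more usefully, $\norm{Y^*(A^{\top}A)^{-1}}\lesssim1/\sqrt{\sigma_{\min}}$. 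With these sharpened factors every piece of $Y^*(A^{\top}A)^{-1}(Y^{*\top}\Delta_Y+\Delta_Y^{\top}Y^*+\Delta_Y^{\top}\Delta_Y)(Y^{*\top}Y^*)^{-1}$ is controlled by $\norm{\Delta_Y}/\sigma_{\min}$ (the second-order piece being negligible), giving $\norm{\Delta^{(j)}}\lesssim\norm{\Delta_Y}/\sigma_{\min}\lesssim\kappa^{3/2}\,\noise/\sqrt{\sigma_{\min}}$, which is exactly the asserted bound once $\noise=\frac{\sigmaL}{\sigma_{\min}}\sqrt{n/p}$ is substituted. I would close by noting that all invoked estimates hold on the high-probability event underlying \cref{lem:norm-bounds}, so the claim holds at the advertised probability level.
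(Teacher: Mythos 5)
Your derivation is correct and is essentially the paper's own approach: the paper proves this claim by a one-line deferral to the coarse bounds of \cref{lem:norm-bounds} (``see the similar derivation for Claim 2 in CFMY:19''), and your explicit perturbation expansion of $Y\mapsto Y(Y^{\top}Y)^{-1}$ about $Y^{*}$, fed by item 4 of \cref{lem:norm-bounds}, is precisely that derivation written out, with the $\kappa$ bookkeeping checking out for both the spectral and the $2,\infty$ bounds. The only simplification worth noting is that your ``delicate step'' for the spectral norm can be bypassed entirely: since $Y^{*}(Y^{*\top}Y^{*})^{-1}$ and $\bar{Y}^{\dd,(j)}(\bar{Y}^{\dd,(j)\top}\bar{Y}^{\dd,(j)})^{-1}$ are transposed pseudo-inverses of matrices whose $r$-th singular values are $\gtrsim\sqrt{\sigma_{\min}}$, \cref{lem:pseudo-inverse} gives $\norm{\Delta^{(j)}}\lesssim\norm{\Delta_Y}/\sigma_{\min}$ in one line, without the spurious factor of $\kappa$ that your projection-based sharpening of $\norm{Y^{*}(A^{\top}A)^{-1}}$ was introduced to remove.
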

    \begin{proof}
    The result can be derived from \cref{lem:norm-bounds} (see the similar derivation for Claim 2 in \cite{CFMY:19}). 
    \end{proof}

In order to control $\alpha_1$, let $z = e_j^{\top}\frac{1}{p}P_{\Omega}(E) \Delta^{(j)}$. Hence $\norm{z}_2 = \alpha_1$. Note that in \cite{CFMY:19}, $z$ is a \textit{Gaussian random vector} due to the Gaussianity of $E$ (conditioned on $\Omega$ and $\Delta^{(j)}$), therefore $\norm{z}_2$ can be easily bounded. Instead, here we need to fine-tune the bound for $z$ using the \textit{subexponential} property of $E$. In particular, for $l \in [r]$, we have 
\begin{align*}
    z_l = \frac{1}{p}\sum_{k=1}^{n} \delta_{j,k} E_{j, k} \Delta_{k, l}^{(j)}
\end{align*}
where $\delta_{j, k} \in \{0, 1\}$ indicates whether $(j, k) \in \Omega.$ Note that $E_{j,k}$ and $\delta_{j,k}$ is independent from $\Delta^{(j)}$ by the construction of $Y^{\dd, (j)}$. Next, construct $A_{k} \in \R^{r}, k=1, 2, \dotsc, n$ where $A_{k,l} = E_{j,k} \delta_{j,k} \Delta_{k,l}$ (note that $z = \frac{1}{p}\sum_{k=1}^{n} A_{k}$). 

Then, condition on $\Delta^{(j)}$, note that $\delta_{j,k}$ and $E_{j,k}$, $k=1,2,\dotsc, n$, are independent Bernoulli and sub-exponential random variables respectively. We are able to apply a variant of Matrix Bernstein inequality (\cref{lem:subexponential-Bernstein}) to $A_{k}, k=1,2,\dotsc,n.$ In particular, note that 
\begin{align*}
V &:= \max\left(\norm{\sum_{k=1}^{n} E[A_{k}A_{k}^{\top}]}, \norm{\sum_{k=1}^{n} E[A_{k}^{\top}A_{k}]}\right) \leq p L^2 \norm{\Delta^{(j)}}_{\F}^2 \leq p L^2 r \norm{\Delta^{(j)}}^2\\
B &:= \max_{1\leq k\leq n}\norm{\norm{A_{k}}}_{\psi_1} \leq L \norm{\Delta^{(j)}}_{2,\infty}
\end{align*}
Apply \cref{lem:subexponential-Bernstein} to $A_k$, with probability $1-O(n^{-11}),$ we have
\begin{align*}
    \norm{\sum_{k=1}^{n} A_{k}}
    &\lesssim \sqrt{V\log(n)} + B\log^2(n)\\
    &\lesssim L \sqrt{rp} \norm{\Delta^{(j)}} \sqrt{\log(n)} + L \norm{\Delta^{(j)}}_{2,\infty} \log^2(n) \\ 
    &\lesssim L \sqrt{rp} \frac{1}{\sqrt{\sigma_{\min}}} \cdot \frac{\sigmaL}{\sigma_{\min}} \sqrt{\frac{\kappa^3 n\log(n)}{p}} + \frac{L}{\sqrt{\sigma_{\min}}} \cdot \frac{\sigmaL}{\sigma_{\min}} \sqrt{\frac{\kappa^5 \mu r \log(n)}{p}} \log^2(n) \\
    &\lesssim \frac{L}{\sqrt{\sigma_{\min}}} \cdot \frac{\sigmaL}{\sigma_{\min}} \sqrt{\kappa^3 n r \log^2(n)}
\end{align*}
where we have utilized $n^2 p \gg \kappa^4 \mu^2 r^2 n \log^3 n$ in the last inequality. This provides a desired bound for $\alpha_1$. 
\begin{align}
    \alpha_1 = \norm{z}_2 = \frac{1}{p}  \norm{\sum_{k=1}^{n} A_{k}} \lesssim \frac{L}{p} \frac{1}{\sqrt{\sigma_{\min}}} \cdot \frac{\sigmaL}{\sigma_{\min}} \sqrt{\kappa^3 n r \log^2(n)}
\end{align}

\item Next, we control $\alpha_2.$ Note that
\begin{align*}
    \alpha_2 
    &= \norm{e_j^{\top} \frac{1}{p} P_{\Omega}(E)\left[\bYd(\bYdT\bYd)^{-1} - \bar{Y}^{\dd,(j)}(\bar{Y}^{\dd,(j) T}\bar{Y}^{\dd,(j)})^{-1}\right]}_2\\
    &\leq \frac{1}{p}\norm{P_{\Omega}(E)} \norm{\bYd(\bYdT\bYd)^{-1} - \bar{Y}^{\dd,(j)}(\bar{Y}^{\dd,(j) T}\bar{Y}^{\dd,(j)})^{-1}}
\end{align*}
Different from \cite{CFMY:19}, we need to bound $\norm{P_{\Omega}(E)}$ where $E$ is a sub-exponential instead of Gaussian matrix. We obtain the bound by invoking a recent result on sub-exponential matrices from \cite{mcrae2019low}. In particular, by \cref{lem:operator-norm-bound}, we have, with probability $1-O(n^{-11})$,
\begin{align*}
    \frac{1}{p}\norm{P_{\Omega}(E)} \lesssim L\sqrt{\frac{n}{p}}.
\end{align*}

Then, we have
\begin{align*}
    \alpha_2 
    &\leq L\sqrt{\frac{n}{p}}\norm{\bYd(\bYdT\bYd)^{-1} - \bar{Y}^{\dd,(j)}(\bar{Y}^{\dd,(j) T}\bar{Y}^{\dd,(j)})^{-1}} \\
    &\overset{(i)}{\lesssim} L\sqrt{\frac{n}{p}}\max\left(\norm{\bYd(\bYdT\bYd)^{-1}}^2, \norm{\bar{Y}^{\dd,(j)}(\bar{Y}^{\dd,(j) T}\bar{Y}^{\dd,(j)})^{-1}}^2\right)\norm{\bYd-\bar{Y}^{\dd, (j)}}\\
    &\overset{(ii)}{\lesssim} L\sqrt{\frac{n}{p}}\frac{1}{\sigma_{\min}} \norm{\bYd-\bar{Y}^{\dd, (j)}}\\
    &\overset{(iii)}{\lesssim} L\sqrt{\frac{n}{p}}\frac{1}{\sigma_{\min}} \kappa \lognoise \sqrt{\frac{\mu r \sigma_{\max}}{n}}\\
    &\lesssim L\sqrt{\frac{n}{p}}\frac{1}{\sqrt{\sigma_{\min}}} \frac{L\log(n)}{\sigma_{\min}} \sqrt{\frac{\kappa^3 \mu r \log(n)}{p}}.
\end{align*}
where, similarly as \cite{CFMY:19}, (i) is due to \cref{lem:pseudo-inverse} (the perturbation bound for pseudo-inverse), (ii) is due to \cref{eq:sigma-r-Fdj-bound,eq:sigma-r-Fd-bound}, and (iii) is due to \cref{eq:FdjHdj-FdHd-rownorm} and $\norm{F^{*}}_{2,\infty} \leq \sqrt{\frac{\mu r \sigma_{\max}}{n}}.$

\end{enumerate}

Then, combining the control of $\alpha_1$ and $\alpha_2$, we have
\begin{align*}
\norm{e_j^{\top} \Phi_1}_2 \leq \alpha_1 + \alpha_2
&\lesssim \frac{1}{p}\frac{L\log(n)}{\sqrt{\sigma_{\min}}} \cdot \frac{\sigmaL}{\sigma_{\min}} \sqrt{\kappa^3 nr} + L\sqrt{\frac{n}{p}}\frac{1}{\sqrt{\sigma_{\min}}} \frac{L\log(n)}{\sigma_{\min}} \sqrt{\frac{\kappa^3 \mu r \log(n)}{p}} \\
&\lesssim \frac{L\log(n)}{\sqrt{p\sigma_{\min}}} \frac{L\log(n)}{\sigma_{\min}} \sqrt{\frac{\kappa^3 \mu r n\log(n)}{p}}.
\end{align*}
This establishes the bound \cref{eq:Phi1-rownorm} by taking the maximum over $j \in [n]$ and the union bound.

\subsubsection{Proof of \cref{eq:Phi2-rownorm}}
Note that $I_{r} = \bYdT\bYd(\bYdT\bYd)^{-1}.$ We have
\begin{align}
    \norm{e_j^{\top}\Phi_2}
    &= \norm{e_j^{\top} X^{*} (Y^{*T}-\bYdT)\bYd(\bYdT\bYd)^{-1}} \\
    &\leq \norm{X^{*}}_{2,\infty} \norm{ (Y^{*} - \bYd)^{\top}\bYd} \norm{(\bYdT\bYd)^{-1}} \\
    &\overset{(i)}{\lesssim} \sqrt{\frac{\kappa \mu r}{n}} \frac{1}{\sqrt{\sigma_{\min}}} \norm{ (Y^{*} - \bYd)^{\top}\bYd} \label{eq:ejT-Phi2}
\end{align}
where (i) is due to $\norm{X^{*}}_{2,\infty} \leq \sqrt{\frac{\mu r \sigma_{\max}}{n}}$ and \cref{eq:sigma-r-Fd-bound}. To control $\norm{ (Y^{*} - \bYd)^{\top}\bYd}$, we have the following claim.
\begin{claim}
\begin{align}
    \norm{ (Y^{*} - \bYd)^{\top}\bYd} \lesssim \frac{1}{\sigma_{\min}} \underbrace{\norm{\bXdT \frac{1}{p}P_{\Omega}(E)Y^{*}}}_{\alpha_1} + \frac{1}{\sigma_{\min}} \alpha_2 + \kappa \alpha_3 + \frac{1}{\sigma_{\min}} \alpha_4 \label{eq:Ystar-bYd}
\end{align}
where 
\begin{align*}
    \alpha_2 &\lesssim \sigma_{\max} \sigmaL \sqrt{\frac{n}{p}} \sqrt{\frac{\kappa^{4}\mu^2r^2\log(n)}{np}} \\
    \alpha_3 &\lesssim \left(\kappa \noise \right)^2 \sigma_{\max}\\
    \alpha_4 &\lesssim \frac{\kappa}{n^5} \noise \sigma_{\max}^2.
\end{align*}
\end{claim}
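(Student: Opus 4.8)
\emph{Plan.} I would prove the claim from the first-order optimality condition of \cref{eq:objective-function} together with the algebraic identity built into the debiasing step, reserving \cref{lem:norm-bounds} for the final numerical bounds on $\alpha_2,\alpha_3,\alpha_4$. The starting point is the exact split
\begin{align*}
(Y^{*}-\bYd)^{\top}\bYd = (Y^{*}-\bYd)^{\top}(\bYd-Y^{*}) + \left(\Sigma^{*} - \bYd^{\top}Y^{*}\right),
\end{align*}
using $Y^{*\top}Y^{*}=\Sigma^{*}$; the first summand is manifestly quadratic in the recovery error and will be absorbed into $\alpha_3$ via \cref{eq:FdHd-Fstar-opernorm}, so the whole difficulty is the second, first-order-looking summand.

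To handle $\Sigma^{*}-\bYd^{\top}Y^{*}$ I would expand the stationarity condition for the $Y$-block. Writing $O=X^{*}Y^{*\top}+E$ and inserting the definition of $A$ gives
\begin{align*}
\nabla_{Y} f(X,Y) = Y\left(X^{\top}X+\frac{\lambda}{p}I_r\right) - Y^{*}(X^{*\top}X) + A^{\top}X - \frac{1}{p}P_{\Omega}(E)^{\top}X .
\end{align*}
The debiasing step is tailored so that $X^{\dd\top}X^{\dd}=X^{\top}X+\frac{\lambda}{p}I_r$ (both are the same function of $X^{\top}X$) and $\bYd=Y\,(I_r+\frac{\lambda}{p}(Y^{\top}Y)^{-1})^{1/2}H^{\dd}$: the $\frac{\lambda}{p}$-bias carried by the factor $X^{\top}X+\frac{\lambda}{p}I_r$ on the left is precisely the bias that the debiasing multiplier reinstates on the right. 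Rearranging into the normal equation $Y(X^{\dd\top}X^{\dd})=Y^{*}(X^{*\top}X)+(\text{errors})$, transposing, multiplying on the right by $Y^{*}$ and on the left by $(X^{\dd\top}X^{\dd})^{-1}$, and carrying the rotation $H^{\dd}$ and the debiasing multiplier through, the leading $\Sigma^{*}$- and $\frac{\lambda}{p}$-terms cancel and leave
\begin{align*}
\Sigma^{*}-\bYd^{\top}Y^{*} \approx (X^{\dd\top}X^{\dd})^{-1}\left(\bXdT\tfrac{1}{p}P_{\Omega}(E)Y^{*} - X^{\top}AY^{*} + \nabla_{Y}f(X,Y)^{\top}Y^{*}\right) + (\text{balancing})
\end{align*}
up to second-order remainders. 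Because $X^{\dd\top}X^{\dd}\approx\Sigma^{*}$ is well conditioned, $\norm{(X^{\dd\top}X^{\dd})^{-1}}\lesssim1/\sigma_{\min}$, which supplies every $1/\sigma_{\min}$ prefactor in \cref{eq:Ystar-bYd}; the three groups become $\alpha_1,\alpha_2,\alpha_4$, while the substitution errors ($X\to\bXd$, $\bYd\to Y^{*}$, and the debiasing multiplier $\to I_r$) are second order and join $\alpha_3$. The decisive point is that the noise enters only through the \emph{bilinear} form $\bXdT\frac{1}{p}P_{\Omega}(E)Y^{*}=\alpha_1$, which is far smaller than the crude bound $\norm{\frac{1}{p}P_{\Omega}(E)}\norm{\bXd}\norm{Y^{*}}$ and is controlled by the sub-exponential matrix Bernstein inequality (\cref{lem:subexponential-Bernstein}) exactly as in the worked $\norm{X^{*\top}P_{\Omega}(E)Y^{*}}$ example in the proof of \cref{thm:eigen-vector}.

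It then remains to substitute \cref{lem:norm-bounds} into the three remainders. For $\alpha_2=\norm{X^{\top}AY^{*}}$ I would use $\norm{A}\lesssim L\sqrt{n/p}\,\sqrt{\kappa^{4}\mu^{2}r^{2}\log^{3}(n)/(np)}$ from part~1 of \cref{lem:norm-bounds}, with $\norm{X}\lesssim\norm{X^{*}}$ and $\norm{Y^{*}}\lesssim\norm{X^{*}}$ (together a factor $\sigma_{\max}$). For the quadratic group $\alpha_3$ I would use $\norm{F^{\dd}H^{\dd}-F^{*}}\lesssim\kappa\,\noise\,\norm{X^{*}}$ (\cref{eq:FdHd-Fstar-opernorm}) and $\norm{X}\lesssim\norm{X^{*}}$, together with the operator-norm control $\frac{1}{p}\norm{P_{\Omega}(E)}\lesssim L\sqrt{n/p}$ from \cref{lem:operator-norm-bound} for the piece where $\frac{1}{p}P_{\Omega}(E)$ meets $X-\bXd$. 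For $\alpha_4=\norm{\nabla_{Y}f(X,Y)^{\top}Y^{*}}+(\text{balancing})$ I would combine $\norm{\nabla f(X,Y)}_{\F}\lesssim\frac{1}{n^{5}}L\log(n)\sqrt{n/p}\sqrt{\sigma_{\min}}$ and $\norm{X^{\top}X-Y^{\top}Y}_{\F}\lesssim\frac{1}{n^{5}}\noise\,\sigma_{\max}$ (both in part~1 of \cref{lem:norm-bounds}) with $\norm{Y^{*}}\lesssim\sqrt{\sigma_{\max}}$.

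\emph{Main obstacle.} The hard part is not the concentration estimates but the exact cancellation in the second display: one must verify that, after reinstating the rotation $H^{\dd}$ and the debiasing multiplier, the order-$\sqrt{\sigma_{\max}}$ ``signal'' contributions to $\Sigma^{*}-\bYd^{\top}Y^{*}$ annihilate, so that only the much smaller bilinear/missingness/gradient residuals survive. A naive estimate would give $\norm{(\bYd-Y^{*})^{\top}Y^{*}}\sim\noise\,\sigma_{\max}$, which is far too large; the gain comes precisely from replacing that quadratic estimate by the bilinear form $\alpha_1$ and from the $\frac{\lambda}{p}$-bias cancellation built into the debiasing. This mirrors the corresponding step of \cite{CFMY:19}, with every appeal to Gaussianity of $E$ replaced by \cref{lem:subexponential-Bernstein} and \cref{lem:operator-norm-bound}, and with the extra $\log n$ factors from the sub-exponential reduction carried through.
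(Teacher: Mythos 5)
Your proposal is correct and follows essentially the same route as the paper: the paper's own proof simply defers to the derivation in Section D.3 of \cite{CFMY:19}, which is precisely the argument you reconstruct --- split off the quadratic term $(Y^{*}-\bYd)^{\top}(\bYd-Y^{*})$ into $\alpha_3$, use the stationarity condition $\nabla_Y f(X,Y)\approx 0$ together with the debiasing identity $X^{\dd\top}X^{\dd}=X^{\top}X+\tfrac{\lambda}{p}I_r$ to reduce $\Sigma^{*}-\bYd^{\top}Y^{*}$ to the bilinear noise term $\alpha_1$, the $A$-term $\alpha_2$, and the gradient/balancing residual $\alpha_4$, each premultiplied by $(X^{\dd\top}X^{\dd})^{-1}$ of norm $\lesssim 1/\sigma_{\min}$ --- and your substitutions from \cref{lem:norm-bounds} reproduce the stated bounds on $\alpha_2,\alpha_3,\alpha_4$. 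The only part left at sketch level (the exact cancellation of the leading $\Sigma^{*}$ and $\lambda/p$ terms through the rotation and debiasing multipliers, with the $X^{\top}X$ versus $Y^{\top}Y$ mismatch absorbed into the balancing piece of $\alpha_4$) is exactly the algebra the paper also omits by citation.
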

\begin{proof}[Proof]
This can be derived based on \cref{lem:norm-bounds}, following a similar derivation as Section D.3 in \citep{CFMY:19}, 
\end{proof}

What remains to control is $\norm{\bXdT \frac{1}{p}P_{\Omega}(E)Y^{*}}$, where the subexponential property of $E$ needs to be addressed. Note that
\begin{align*}
    \norm{\bXdT \frac{1}{p}P_{\Omega}(E)Y^{*}} 
    &\leq \norm{X^{* T} \frac{1}{p}P_{\Omega}(E) Y^{*}} + \norm{\bXd-X^{*}} \norm{ \frac{1}{p}P_{\Omega}(E)}\norm{Y^{*}}\\
    &\overset{(i)}{\lesssim} \norm{X^{*} \frac{1}{p}P_{\Omega}(E) Y^{*}} + \kappa \noise \sigma_{\max} \norm{ \frac{1}{p}P_{\Omega}(E)}\\
    &\overset{(ii)}{\lesssim} \norm{X^{*} \frac{1}{p}P_{\Omega}(E) Y^{*}} + L^2\log(n)\frac{n}{p}\kappa^2. \label{eq:bxdT-bound}
\end{align*}
where (i) is due to \cref{eq:FdHd-Fstar-opernorm} and $\norm{X^{*}} = \sqrt{\sigma_{\max}}, \norm{Y^{*}} = \sqrt{\sigma_{\max}}$, and (ii) is due to \cref{lem:operator-norm-bound}.

In order to bound $\norm{X^{*T} \frac{1}{p}P_{\Omega}(E) Y^{*}}$, we will invoke the subexponential version of Matrix Bernstein inequality (\cref{lem:subexponential-Bernstein}). To begin, note that 
\begin{align*}
    X^{*T} \frac{1}{p}P_{\Omega}(E) Y^{*} = \frac{1}{p}\sum_{k=1}^{n} \sum_{l=1}^{n} X^{*}_{k,\cdot} Y_{l,\cdot}^{*\top} \delta_{k,l} E_{k, l}
\end{align*}
where $\delta_{k,l} \sim \Ber(p)$ indicates whether $(k, l) \in \Omega.$ Then, let $A_{k,l} := X^{*}_{k,\cdot} Y^{*\top}_{l,\cdot} \delta_{k,l} E_{k, l}$ for $k\in[n], l\in[n]$ ($A_{k,l} \in \R^{r\times r}$). Note that
\begin{align*}
V &:= \max\left(\norm{\sum_{k,l} E[A_{k,l}A_{k,l}^{\top}]}, \norm{\sum_{k,l} E[A_{k,l}^{\top}A_{k,l}]}\right) \leq p L^2 \norm{X^{*}}_{\F}^2\norm{Y^{*}}_{\F}^2\\
B &:= \max_{k,l}\norm{\norm{A_{k,l}}}_{\psi_1} \leq L \norm{X^{*}}_{2,\infty} \norm{Y^{*}}_{2,\infty}
\end{align*}
Then apply \cref{lem:subexponential-Bernstein}, with probability $1-O(n^{-11})$, we have
\begin{align*}
\norm{\sum_{k,l} A_{k,l}} 
&\lesssim \sqrt{V \log(n)} + B\log^{2}(n)\\
&\leq \sqrt{p}L \sigma_{\max} r \sqrt{\log(n)} + \frac{\mu r}{n} L \sigma_{\max} \log^{2}(n)\\
&\overset{(i)}{\lesssim} \sqrt{p}L \sigma_{\max} r \sqrt{\log(n)}
\end{align*}
where in (i) we use that $n^2 p \gg \kappa^4 \mu^2 r^2 n \log^3 n.$ This then implies 
\begin{align}
     \norm{X^{*} \frac{1}{p}P_{\Omega}(E) Y^{*}} 
     &\leq  \frac{1}{p}\norm{\sum_{k,l} A_{k,l}}  \\
     &\lesssim \frac{r}{\sqrt{p}}  L \sigma_{\max} \sqrt{\log(n)} \label{eq:Xstar-Ystar-E}.
\end{align}
Plug \cref{eq:Xstar-Ystar-E} into \cref{eq:bxdT-bound} and combine with \cref{eq:Ystar-bYd}, we arrive at
\begin{align*}
    \norm{ (Y^{*} - \bYd)^{\top}\bYd} 
    &\lesssim \frac{1}{\sigma_{\min}} \alpha_1 + \frac{1}{\sigma_{\min}} \alpha_2 + \kappa \alpha_3 + \frac{1}{\sigma_{\min}} \alpha_4\\
    &\lesssim \frac{1}{\sigma_{\min}} \left(\frac{r}{\sqrt{p}}  L \sigma_{\max} \sqrt{\log(n)} +  L^2\log(n)\frac{n}{p}\kappa^2 \right) \\
    &\quad + \frac{1}{\sigma_{\min}} \sigma_{\max} \sigmaL \sqrt{\frac{n}{p}} \sqrt{\frac{\kappa^{4}\mu^2r^2\log(n)}{np}} \\
    &\quad + \kappa \left(\kappa \noise \right)^2 \sigma_{\max} + \frac{1}{\sigma_{\min}}  \frac{\kappa}{n^5} \noise \sigma_{\max}^2\\
    &\lesssim \kappa \left(\kappa \noise \right)^2 \sigma_{\max} + \kappa \frac{L}{p} \sqrt{\kappa^4 \mu^2 r^2 \log^3(n)}.
\end{align*}
Plug this back into \cref{eq:ejT-Phi2} and take the maximum over $1\leq j \leq n$, we finish the proof.
\begin{align*}
   \norm{\Phi_2}_{2,\infty} 
   &\lesssim \sqrt{\frac{\kappa \mu r}{n}} \frac{1}{\sqrt{\sigma_{\min}}} \left( \kappa \left(\kappa \noise \right)^2 \sigma_{\max} + \kappa \frac{L}{p} \sqrt{\kappa^4 \mu^2 r^2 \log^3(n)}\right)\\ 
   &\lesssim \frac{L\log(n)}{\sqrt{p\sigma_{\min}}} \left(\kappa \frac{L\log(n)}{\sigma_{\min}} \sqrt{\frac{\kappa^7 \mu r n}{p}} + \sqrt{\frac{\kappa^7\mu^3r^3\log(n)}{np}}\right).
\end{align*}

The proof of \cref{eq:Phi3-rownorm} and \cref{eq:Phi4-rownorm} follow the similar derivations of section D.4 and D.5 in \cite{CFMY:19}. We omit the proof for brevity. 

\subsection{Proof of \cref{thm:main-theorem}}\label{sec:proof-main-theorem}
Next, we provide a proof of \cref{thm:main-theorem} based on \cref{thm:eigen-vector}. Note that
\begin{align*}
    M^{\dd}_{ij} - M^{*}_{ij}
    &= (X^{\dd}H^{\dd}H^{\dd T}Y^{\dd T})_{ij} - (X^{*}Y^{*T})_{ij} \\
    &= e_{i}^{\top}X^{*}(Y^{\dd}H^{\dd}-Y^{*})^{\top}e_{j} + e_i^{\top}(X^{\dd}H^{\dd} - X^{*})Y^{*T}e_{j} + e_{i}^{\top}(X^{\dd}H^{\dd} - X^{*}) (Y^{\dd}H^{\dd}-Y^{*})^{\top}e_{j}\\
    &\overset{(i)}{=} e_i^{\top} \frac{1}{p}P_{\Omega}(E)V^{*}V^{*T}e_j + e_i^{\top}\frac{1}{p}U^{*}U^{*T}P_{\Omega}(E)e_j + e_i^{\top}\Phi_{X} Y^{*T}e_{j} + e_i^{\top}X^{*}\Phi_{Y}^{\top}e_j + e_i^{\top}\Delta_{X} \Delta_{Y}^{\top}e_j\\
    &\overset{(ii)}{=} \underbrace{\frac{1}{p}\sum_{l=1}^{n} \delta_{il}E_{il}  \left(\sum_{k=1}^{r} V^{*}_{lk} V^{*}_{jk}\right) + \frac{1}{p}\sum_{l=1, l\neq i}^{n} \delta_{lj} E_{lj} \left(\sum_{k=1}^{r} U^{*}_{ik}U^{*}_{lk}\right) + \delta_{ij}'E'_{ij} \left(\sum_{k=1}^{r} U^{*}_{ik}U^{*}_{lk}\right)}_{\epsilon^{(1)}_{ij}}\\
    &\quad + \underbrace{(\delta_{ij}E_{ij} -\delta_{ij}'E'_{ij})\left(\sum_{k=1}^{r} U^{*}_{ik}U^{*}_{lk}\right) + e_i^{\top}\Phi_{X} Y^{*T}e_{j} + e_i^{\top}X^{*}\Phi_{Y}^{\top}e_j + e_i^{\top}\Delta_{X} \Delta_{Y}^{\top}e_j}_{\epsilon^{(2)}_{ij}}
\end{align*}
where in (i) we use the notation in \cref{thm:eigen-vector} and $\Delta_{X} := X^{\dd}H^{\dd} - X^{*}$ and $\Delta_{Y} := Y^{\dd}H^{\dd}-Y^{*}$, in (ii) we introduce the exogenous variables $\delta_{ij}' \sim \Ber(p), E'_{ij} \overset{d}{=}E_{ij}$.

By \cref{thm:eigen-vector}, we have
\begin{align*}
|e_i^{\top}\Phi_{X} Y^{*T}e_{j}|  
&\lesssim \norm{\Phi_{X}}_{2,\infty} \norm{Y^{*}_{j,\cdot}} \\
&\lesssim \norm{V^{*}_{j,\cdot}}\sqrt{\kappa}\frac{\sigmaL}{\sqrt{p}}\left( \kappa \frac{\sigmaL}{\sigma_{\min}} \sqrt{\frac{\kappa^7 \mu r n\log(n)}{p}} + \sqrt{\frac{\kappa^{7}\mu^3 r^{3}\log^2 n}{np}}\right).
\end{align*}
By symmetry, we can obtain the similar bound for $e_i^{\top}X^{*}\Phi_{Y}^{\top}e_j.$ Also note that, by \cref{eq:FdHd-Fstar-rownorm}, we have
\begin{align*}
|e_i^{\top} \Delta_{X} \Delta_{Y}^{\top} e_j| 
&\leq \norm{\Delta_{X}}_{2,\infty} \norm{\Delta_{Y}}_{2,\infty}\\
&\lesssim  \left(\kappa \lognoise \norm{F^{*}}_{2,\infty}\right)^2\\
&= \left( \frac{L\log(n)}{\sqrt{\sigma_{\min}}} \sqrt{\frac{\kappa^3 \mu r \log(n)}{p}}\right)^2.
\end{align*}
This then implies, with probability $1-O(n^{-10})$, 
\begin{align*}
|\epsilon_{ij}^{(2)}| 
&\lesssim  (\norm{V^{*}_{j,\cdot}} + \norm{U^{*}_{i,\cdot}})\frac{\sigmaL}{\sqrt{p}}\left( \frac{\sigmaL}{\sigma_{\min}} \sqrt{\frac{\kappa^{10} \mu r n\log(n)}{p}} + \sqrt{\frac{\kappa^{8}\mu^3 r^{3}\log^2 n}{np}}\right) \\
&\quad + \left( \frac{L\log(n)}{\sqrt{\sigma_{\min}}} \sqrt{\frac{\kappa^3 \mu r \log(n)}{p}}\right)^2 + L\log(n) \frac{\mu r}{n}\\
&\lesssim \underbrace{\frac{L^2\log^{3}(n)\mu r \kappa^5}{p\sigma_{\min}} + \frac{L\mu^2 r^2 \log^{2}(n)\kappa^4}{pn}}_{D_{ij}}
\end{align*}

Next, we analyze $\epsilon_{ij}^{(1)}$. Note that one can view $\epsilon^{(1)}_{ij}=\sum_{l=1}^{2n} x_l$ as the sum of independent zero-mean random variables $x_l$. Hence, one can apply the Berry-Esseen type of bounds. Consider $s_{ij}^2 = \sum_{l} E[x_l^2], \rho = \sum_{l} E[x_l^3]$. Then
\begin{align*}
s_{ij}^2 
&:= \frac{1}{p} \sum_{l=1}^{n}\sigma_{il}^2  \left(\sum_{k=1}^{r} V^{*}_{lk} V^{*}_{jk}\right)^2 + \frac{1}{p}\sum_{l=1}^{n} \sigma_{lj}^2 \left(\sum_{k=1}^{r} U^{*}_{ik}U^{*}_{lk}\right)^2 \\
\rho &:= \frac{1}{p} \sum_{l=1}^{n} \E{E_{il}^3}  \left(\sum_{k=1}^{r} V^{*}_{lk} V^{*}_{jk}\right)^3 + \frac{1}{p}\sum_{l=1}^{n} \E{E_{lj}^3} \left(\sum_{k=1}^{r} U^{*}_{ik}U^{*}_{lk}\right)^3 \\
&\lesssim \frac{L^2}{p} n\left(\frac{\mu r}{n}\right)^3. 
\end{align*}
By \cite{esseen1942liapunov}, we have
\begin{align}
\sup_{t \in R}\left|P\left\{\frac{\epsilon^{(1)}_{ij}}{s_{ij}} \leq t\right\} - \Phi(t)\right| \lesssim s_{ij}^{-3} \rho \lesssim s_{ij}^{-3} \frac{L^2 \mu^3 r^{3}}{n^2p}. \label{eq:berry-esseen-bound}
\end{align}
Next, for any $t \in \R$, we want to bound $P\left\{\frac{\epsilon_{ij}^{(1)} + \epsilon_{ij}^{(2)}}{s_{ij}} \leq t\right\}.$ It is easy to check that
\begin{align*}
P\left\{\frac{\epsilon_{ij}^{(1)} + D_{ij}}{s_{ij}} \leq t\right\} - P(|\epsilon_{ij}^{(2)}| > D_{ij}) \leq P\left\{\frac{\epsilon_{ij}^{(1)} + \epsilon_{ij}^{(2)}}{s_{ij}} \leq t\right\} \leq P\left\{\frac{\epsilon_{ij}^{(1)} - D_{ij}}{s_{ij}} \leq t\right\} + P(|\epsilon_{ij}^{(2)}| > D_{ij}).
\end{align*}
Hence, 
\begin{align*}
&\left|P\left\{\frac{\epsilon_{ij}^{(1)} + \epsilon_{ij}^{(2)}}{s_{ij}} \leq t\right\} - \Phi(t)\right|  \\
&\leq \left|P\left\{\frac{\epsilon_{ij}^{(1)} \pm D_{ij}}{s_{ij}} \leq t\right\} - \Phi(t)\right|  + O(1/n^{10})\\
&\leq  \left|P\left\{\frac{\epsilon_{ij}^{(1)} \pm D_{ij}}{s_{ij}} \leq t\right\} - \Phi(t \pm D_{ij} / s_{ij})\right| + \left| \Phi(t + D_{ij} / s_{ij}) - \Phi(t - D_{ij}/s_{ij}) \right| + O(1/n^{10})\\
&\overset{(i)}{\lesssim } s_{ij}^{-3} \frac{L^2 \mu^3 r^{3}}{n^2p} +  \left| \Phi(t + D_{ij} / s_{ij}) - \Phi(t - D_{ij}/s_{ij})\right|  + O(1/n^{10})\\
&\overset{(ii)}{\lesssim}  s_{ij}^{-3} \frac{L^2 \mu^3 r^{3}}{n^2p} + \frac{D_{ij}}{s_{ij}} + O(1/n^{10})
\end{align*}
where (i) is due to \cref{eq:berry-esseen-bound}, and (ii) is due to the property of the standard Gaussian distribution $\Phi(\cdot).$ This completes the proof. 

Note that from the Bernstein inequality, one can also verify that $|\epsilon^{(1)}_{ij}| \lesssim \frac{\mu r L\log^{0.5}(n)}{\sqrt{np}}$ with probability $1-O(n^{-c}).$ This also implies an entry-wise error bound for $M^{\dd} - M^{*}.$
\begin{align}
|M^{\dd}_{ij} - M^{*}_{ij}| 
&\leq |\epsilon^{(1)}_{ij}| + |\epsilon^{(2)}_{ij}|\\
&\lesssim \frac{\mu r L\log^{0.5}(n)}{\sqrt{np}} + \frac{L^2\log^{3}(n)\mu r \kappa^5}{p\sigma_{\min}} + \frac{L\mu^2 r^2 \log^{2}(n)\kappa^4}{pn}\\
&\lesssim \frac{\mu r L\log^{0.5}(n)}{\sqrt{np}} \label{eq:entry-wise-bound}
\end{align}

\subsection{Proof of \cref{cor:empirical-inference}}
To begin, since $\sigma_{il}=\Theta(1)$, one can verify that
\begin{align*}
s_{ij}^2 \gtrsim \frac{\|U^{*}_{i,\cdot}\|^2+\|V^{*}_{j,\cdot}\|^2}{p}.
\end{align*}

We want to provide a bound for $|s_{ij}^2 - \hat{s}_{ij}^2|.$ To begin, with probability $1-O(n^{-c})$, consider
\begin{align*}
\left|\sum_{l\in [n]} \frac{1}{p} (E_{il}^2 \delta_{il} - \sigma_{il}^2p) \left(\sum_{k=1}^{r} V^{*}_{jk}V^{*}_{lk}\right)^2\right| &\lesssim \frac{1}{p}\sqrt{\sum_{l \in [n]}L^4 p  \|V^{*}_{j,\cdot}\|^4 \frac{\mu^2 r^2}{n^2}} \log(n)\\
&\lesssim L^2 \frac{1}{\sqrt{p}} \|V^{*}_{j,\cdot}\|^2 \frac{\mu r}{\sqrt{n}} \log(n)\\
&\lesssim s_{ij}^2 \sqrt{p} L^2 \frac{\mu r}{\sqrt{n}} \log(n).
\end{align*}
by the independence of $E, \delta$ and concentration bounds. Next, with probability $1-O(n^{-c})$, note that
\begin{align*}
\left|\sum_{l\in [n]} \frac{1}{p} (E_{il}^2 \delta_{il} - \hat{E}_{il}^2\delta_{il}) \left(\sum_{k=1}^{r} V^{*}_{jk}V^{*}_{lk}\right)^2\right| 
&\lesssim \left|\sum_{l\in [n]} \frac{1}{p} (E_{il}^2 \delta_{il} - \hat{E}_{il}^2\delta_{il}) \frac{\mu r}{n}\|V_{j,\cdot}\|^2\right|\\
&\overset{(i)}{\lesssim} \left|\sum_{l\in [n]} \frac{\delta_{il}}{p}  \frac{\mu r L^2\log^{0.5}(n)}{\sqrt{np}}  \frac{\mu r}{n}\|V_{j,\cdot}\|^2\right|\\
&\lesssim np\log(n) \frac{\mu r L^2\log^{0.5}(n)}{\sqrt{np}} \frac{\mu r}{n}\|V_{j,\cdot}\|^2\\
&\lesssim \sqrt{p} \frac{\mu^2 r^2 L^2 \log^{1.5}(n)}{\sqrt{n}}s_{ij}^2
\end{align*}
where (i) is due to \cref{eq:entry-wise-bound}. Then, consider, 
\begin{align*}
\left|\sum_{l\in [n]} \frac{1}{p} E_{il}^2 \delta_{il}\left(\sum_{k=1}^{r} V^{\dd}_{jk}V^{\dd}_{lk} - V^{*}_{jk}V^{*}_{lk}\right)^2\right| 
&\lesssim np\log^2(n) \frac{1}{p} L^2 \left(\norm{U^{\dd}-U^{*}}_{2,\infty} \norm{U^{*}}_{2,\infty}\right)^2\\
&\lesssim n\log^2(n) L^2 \left(\frac{\kappa L\log(n)}{\sigma_{\min}} \sqrt{\frac{n\log(n)}{p}} \frac{\mu r}{n}\right)^2\\
&\lesssim \frac{\kappa^2 L^4 \log^{5}(n) \mu^2 r^2}{\sigma_{\min}^2 p} \lesssim s_{ij}^2 \lesssim \frac{s_{ij}^2}{\log^3 n}.
\end{align*}
The above together implies that
\begin{align*}
|s_{ij}^2 - \hat{s}_{ij}^2| \lesssim  \frac{s_{ij}^2}{\log^3 n}.
\end{align*}
Let $\Delta = \frac{M^{\dd}_{ij} - M^{*}_{ij}}{s_{ij}} - \frac{M^{\dd}_{ij} - M^{*}_{ij}}{\hat{s}_{ij}} .$ Then, we can obtain
\begin{align*}
\left|\Delta\right| \lesssim \left|\frac{M^{\dd}_{ij} - M^{*}_{ij}}{s_{ij}}\right| \left|\frac{s^2_{ij} - \hat{s}_{ij}^2}{s_{ij}^2}\right| = o(1).
\end{align*}
We the finish the proof by the following.
\begin{align*}
P\left(\frac{M^{\dd}_{ij} - M^{*}_{ij}}{\hat{s}_{ij}} \leq t\right) 
&= P\left(\frac{M^{\dd}_{ij} - M^{*}_{ij}}{s_{ij}} \leq t - \Delta\right)\\
&= \Phi (t - \Delta) + o(1)\\
&\overset{(i)}{=} \Phi(t) + \Delta' + o(1)\\
&=\Phi(t) + o(1)
\end{align*}
where in (i), $|\Delta'| \lesssim |\Delta|.$

\section{Technical Lemmas}
\begin{lemma}\label{lem:reduction-ExptoGaussian}
Suppose $X$ is a zero-mean subexponential random variable with $\norm{X}_{\psi_1} \leq L.$ Then there exists a zero-mean bounded random variable $Y$ such that
\begin{align*}
    P(X \neq Y) \leq c_1/n^{10}
\end{align*}
and $|Y| \leq c_2 L\log n$ where $c_1,c_2$ are two constants.
\end{lemma}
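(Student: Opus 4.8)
The plan is to \emph{truncate} $X$ at a threshold $T$ of order $L\log n$ and then make a tiny, carefully-placed correction to restore mean-zero. Concretely, fix $T := cL\log n$ with $c$ a constant large enough that $2e^{-T/L}\le n^{-11}$. The naive candidate $\tilde Y := X\,\mathbf 1\{|X|\le T\}$ is bounded by $T$ and agrees with $X$ except on the rare event $\{|X|>T\}$, but it is \emph{not} centered: writing $\mu := \mathbb{E}[X\,\mathbf 1\{|X|>T\}]$, one has $\mathbb{E}[\tilde Y]=-\mu$. One cannot repair this by subtracting the constant $\mu$, since that would force $Y\neq X$ on the whole space; the trick is to offload the mean deficit onto an event of negligible probability.

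\textbf{Steps.} First I would record two tail estimates, both immediate from the sub-exponential definition $\mathbb{E}[\exp(|X|/L)]\le 2$ together with Markov's inequality. The first is $q:=\Pr(|X|>T)=\Pr\!\big(e^{|X|/L}>e^{T/L}\big)\le 2e^{-T/L}\le n^{-11}$. The second controls the residual mean: integrating the tail gives $\mathbb{E}\big[|X|\,\mathbf 1\{|X|>T\}\big]=Tq+\int_T^\infty\Pr(|X|>s)\,ds\le 2(T+L)e^{-T/L}$, so that $|\mu|\le 2(T+L)e^{-T/L}\le L\,n^{-10}$ for $n$ large. The key structural fact is that $|\mu|$ is \emph{exponentially} small, whereas the value $Y$ is permitted to reach, $c_2L\log n$, is only logarithmically large. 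I would then enlarge the probability space by an independent $U\sim\mathrm{Uniform}[0,1]$ and set $Y:=\tilde Y$ on $\{U\ge p_0\}$ and $Y:=w$ on $\{U<p_0\}$, where $w:=\mathrm{sign}(\mu)\,c_2L\log n$ and $p_0:=|\mu|/(|w|+|\mu|)$. Then $\mathbb{E}[Y]=(1-p_0)(-\mu)+p_0\,w=0$ by the choice of $p_0$, so $Y$ is mean-zero, and $|Y|\le\max(T,|w|)\le c_2L\log n$ once $c_2\ge c$. (In the degenerate case $\mu=0$, which includes $X$ already bounded, one has $p_0=0$ and $Y=\tilde Y$.)

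\textbf{Closing and obstacle.} It then remains to bound $\Pr(X\neq Y)$. Since $Y=X$ on $\{|X|\le T\}\cap\{U\ge p_0\}$, a union bound gives $\Pr(X\neq Y)\le q+p_0$, and because $p_0\le |\mu|/|w|\le (L\,n^{-10})/(c_2L\log n)\ll n^{-10}$, the total is at most $c_1n^{-10}$ for a suitable constant $c_1$, completing the argument. The only genuine subtlety is the one already flagged: a plain additive recentering is illegal because it perturbs $Y$ on a set of non-negligible probability. The randomization resolves this precisely because the mean deficit $|\mu|$ decays like a power of $n$ far smaller than $n^{-10}$ while the admissible magnitude of $Y$ is merely logarithmic, so the probability mass $p_0$ needed to absorb the deficit is $o(n^{-10})$ and does not spoil the agreement bound.
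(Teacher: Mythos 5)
Your proof is correct and follows essentially the same route as the paper's: truncate at a threshold of order $L\log n$, bound the resulting mean deficit by integrating the sub-exponential tail, and absorb that deficit by placing a value of magnitude $O(L\log n)$ on an auxiliary event of probability $o(n^{-10})$. The only (harmless) difference is that you randomize with an independent uniform variable rather than, as the paper does, with a Bernoulli coin restricted to the truncation-failure event $\{|X|>T\}$; this costs you an extra additive $p_0 \ll n^{-10}$ in $\Pr(X\neq Y)$ but lets you avoid the paper's ``without loss of generality $X$ is continuous'' step.
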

\begin{proof}
Our idea for proving this lemma has two steps: (i) Truncation: we construct a random variable $Y' = X \cdot \1{|X| \leq k_1}$ for some $k_1$ and show that $E[Y'] \approx 0$. (ii) Slight modification: we then construct $Y$ from $Y'$ by slightly modifying the distribution to guarantee $E[Y]=0$.  

Without loss of generality, suppose $X$ is continuous. Let the density function of $X$ be $f_{X}.$ By continuity, there exists $k_1$ such that
\begin{align}
    \Pr(|X| > k_1) = \frac{2}{n^{10}}. \label{eq:control-k1-Prob}
\end{align}
Let $Y' = X \cdot \1{X \leq k_1}.$ We aim to provide a bound on $E[Y'].$

Since $X$ is centered and subexponential, $\Pr(|X| > t) \leq 2\exp(-tC/L)$ for some constant $C$. Therefore, 
\begin{align*}
    \frac{2}{n^{10}} \leq 2\exp(-k_1 C/L) \implies k_1 \leq \frac{10L\log(n)}{C}.
\end{align*}
Let $k_2 \triangleq \frac{10L \log(n)}{C}.$ Note that
\begin{align*}
    E[X] 
    &= \int_{-\infty}^{\infty} x f_{X}(x) dx \\
    &= \int_{0}^{\infty} x \left(f_{X}(x) - f_{X}(-x)\right) dx\\
    &= \int_{0}^{k_1} x\left(f_{X}(x) - f_{X}(-x)\right) dx + \int_{k_1}^{k_2} x\left(f_{X}(x) - f_{X}(-x)\right) dx + \int_{k_2}^{\infty} x\left(f_{X}(x) - f_{X}(-x)\right) dx.
\end{align*}
Note that $E[Y'] = \int_{0}^{k_1} x\left(f_{X}(x) - f_{X}(-x)\right) dx.$ Since $E[X] = 0$, then
\begin{align}
    \left|E[Y']\right| 
    &\leq \left|\int_{k_1}^{k_2} x\left(f_{X}(x) - f_{X}(-x)\right) dx\right| + \left|\int_{k_2}^{\infty} x\left(f_{X}(x) - f_{X}(-x)\right) dx\right|\\
    &\leq k_2 \int_{k_1}^{k_2} (f_X(x) + f_X(-x))dx + \int_{k_2}^{\infty} x\left(f_{X}(x) + f_{X}(-x)\right) dx\\
    &\leq k_2 \int_{k_1}^{\infty} (f_X(x) + f_X(-x))dx + \int_{k_2}^{\infty} (x-k_2)\left(f_{X}(x) + f_{X}(-x)\right) dx\\
    &\leq k_2 \Pr(|X| > k_1) + \int_{k_2}^{\infty} (x-k_2)\left(f_{X}(x) + f_{X}(-x)\right) dx. \label{eq:k1-part}
\end{align}
We need to use the subexponential property to bound $\int_{k_2}^{\infty} (x-k_2)\left(f_{X}(x) + f_{X}(-x)\right) dx$. In particular, 
\begin{align}
    \int_{k_2}^{\infty} (x-k_2)\left(f_{X}(x) + f_{X}(-x)\right) dx 
    &= \int_{x=k_2}^{\infty} \left(f_{X}(x) + f_{X}(-x)\right) \left(\int_{t=0}^{\infty} \1{t+k_2\leq x} dt\right) dx\\
    &= \int_{t=0}^{\infty} \left(\int_{x=k_2}^{\infty} \1{t+k_2\leq x} \left(f_{X}(x) + f_{X}(-x)\right) dx\right)dt\\
    &= \int_{t=0}^{\infty} \left(\int_{x=t+k_2}^{\infty} \left(f_{X}(x) + f_{X}(-x)\right) dx\right)dt\\
    &= \int_{t=0}^{\infty} \Pr(|X|>t+k_2) dt\\
    &\leq \int_{t=k_2}^{\infty} 2\exp(-tC/L) dt\\
    &= 2\frac{-L}{C}\exp(-tC/L) \Big \rvert_{t=k_2}^{\infty} \\
    &= 2\frac{L}{C} \exp(-k_2C/L). \label{eq:k2-part}
\end{align}
Combining \cref{eq:control-k1-Prob,eq:k1-part,eq:k2-part}, we have
\begin{align}
    |E[Y']|
    &\leq \frac{2k_2}{n^{10}} + 2\frac{L}{C} \exp(-k_2C/L) \\
    &\leq \frac{2k_2}{n^{10}} + 2\frac{L}{C} \frac{1}{n^{10}}\\
    &\leq \frac{3 k_2}{n^{10}}. \label{eq:p-label}
\end{align}

Let 
\begin{align}
    p \triangleq \frac{|E[Y']|}{3k_2} / \frac{1}{n^{10}}.
\end{align}
Then $p \in [0, 1]$ by \cref{eq:p-label}.

Now the zero-mean random variable $Y$ can be constructed. Let $Z\sim \Ber(p)$ be independent from $X$. Denote
\begin{align}
    Y \triangleq \begin{cases}X & |X| \leq k_1 \\ -\text{sign}(E[Y'])3k_2 & |X| > k_1 \text{~and~} Z = 1 \\ 0 & \text{otherwise}\end{cases}
\end{align}
Then 
\begin{align*}
    E[Y] &= \left(\int_{0}^{k_1} x\left(f_{X}(x) - f_{X}(-x)\right) dx\right) + \Pr(|X|>k_1 \text{~and~} Z = 1) (-3\text{sign}(E[Y'])k_2)\\
    &= E[Y'] - \frac{1}{n^{10}} p (3k_2) \text{sign}(E[Y'])\\
    &= 0.
\end{align*}
Also, $|Y| \leq 3k_2 = 30\frac{L}{C}\log(n)$ is bounded. Furthermore, with probability $1-\frac{2}{n^{10}}$, $|X| \leq k_1$ and $Y=X.$ 
\end{proof}

\begin{lemma}[Bernstein's inequality (Theorem 2.8.4, \cite{vershynin2018high})]\label{lem:Bernstein}
Let $X_1, X_2, \dotsc, X_N$ be independent zero-mean random variables, such that $|X_i| \leq K$ for all $i$. Then, for every $t \geq 0$, we have 
\begin{align}
    \Prob{\left|\sum_{i=1}^{N} X_i\right| \geq t} \leq 2\exp\left(-\frac{t^2/2}{\sigma^2 + Kt/3}\right)
\end{align}
here $\sigma^2 = \sum_{i=1}^{N} \E{X_i^2}$ is the variance of the sum.
\end{lemma}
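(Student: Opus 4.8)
The plan is to prove Bernstein's inequality by the classical exponential-moment (Chernoff) method: bound the moment generating function of each summand using only the boundedness $|X_i| \le K$ and the zero-mean assumption, then optimize the free parameter in the Chernoff bound. First I would fix $\lambda \in (0, 3/K)$ and apply Markov's inequality to $\exp(\lambda \sum_i X_i)$, which by independence gives $\Prob{\sum_{i=1}^N X_i \ge t} \le e^{-\lambda t}\prod_i \E{e^{\lambda X_i}}$. Everything then reduces to controlling a single factor $\E{e^{\lambda X_i}}$.

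The crux is the per-summand moment generating function bound. Expanding the exponential and using $\E{X_i}=0$ kills the linear term, so it suffices to bound the higher moments. Since $|X_i| \le K$, we have $\E{X_i^p} \le K^{p-2}\E{X_i^2}$ for every $p \ge 2$. Substituting this into the Taylor series and using the elementary estimate $p! \ge 2\cdot 3^{p-2}$ for $p \ge 2$, the tail becomes a geometric series that converges precisely when $\lambda K < 3$, yielding $\E{e^{\lambda X_i}} \le 1 + \frac{\lambda^2 \E{X_i^2}/2}{1-\lambda K/3}$. Applying $1+x \le e^x$ and multiplying over $i$ (so that $\sum_i \E{X_i^2} = \sigma^2$) gives $\prod_i \E{e^{\lambda X_i}} \le \exp\left(\frac{\lambda^2\sigma^2/2}{1-\lambda K/3}\right)$.

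Combining the two steps yields $\Prob{\sum_i X_i \ge t} \le \exp\left(-\lambda t + \frac{\lambda^2\sigma^2/2}{1-\lambda K/3}\right)$, and I would then minimize the exponent over $\lambda$. The choice $\lambda = t/(\sigma^2 + Kt/3)$, which one checks lies in $(0, 3/K)$ since $\lambda K < 3$, makes $1 - \lambda K/3 = \sigma^2/(\sigma^2 + Kt/3)$ and collapses the exponent exactly to $-\frac{t^2/2}{\sigma^2 + Kt/3}$, giving the one-sided tail bound. Finally, applying the identical argument to $-X_1,\dots,-X_N$ (which are again zero-mean, bounded by $K$, with the same total variance $\sigma^2$) and combining the two tails by a union bound produces the factor of $2$ and the stated two-sided inequality.

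The main obstacle is the moment-control step: obtaining the clean constant $1/3$ in the denominator requires the sharp combinatorial bound $p! \ge 2\cdot 3^{p-2}$ rather than a looser factorial estimate, and one must verify that the optimizing $\lambda$ stays strictly inside the admissible interval $(0, 3/K)$ so that the geometric series used for the moment generating function actually converges. Once this sub-exponential-type bound on $\E{e^{\lambda X_i}}$ is in hand, the remaining optimization and symmetrization are routine.
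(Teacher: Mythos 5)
Your proof is correct. The paper does not prove this lemma at all --- it is imported verbatim as Theorem 2.8.4 of \cite{vershynin2018high} --- and your argument (Chernoff bound, the moment estimate $\E{X_i^p}\le K^{p-2}\E{X_i^2}$ combined with $p!\ge 2\cdot 3^{p-2}$ to get the MGF bound $\exp\bigl(\tfrac{\lambda^2\E{X_i^2}/2}{1-\lambda K/3}\bigr)$ for $\lambda K<3$, the choice $\lambda=t/(\sigma^2+Kt/3)$, and symmetrization for the two-sided bound) is precisely the standard proof given in that reference, with all the key steps checked.
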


\begin{lemma}
Let $X_1, X_2, \dotsc, X_n$ be independent zero-mean random variables, such that $\norm{X_i}_{\psi_1} \leq L$ for all $i$. Then, with probability $1-O(n^{-c})$ for any constant $c$, we have
\begin{align}
     \left|\sum_{i=1}^{n}X_i \right| \lesssim \sigma\sqrt{\log(n)} + L\log^2(n)
\end{align}
here $\sigma^2 = \sum_{i=1}^{n} \E{X_i^2}$ is the variance of the sum.
\end{lemma}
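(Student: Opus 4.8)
The plan is to reduce the sub-exponential tails to bounded ones and then invoke the scalar Bernstein inequality (\cref{lem:Bernstein}). First I would apply \cref{lem:reduction-ExptoGaussian} to each $X_i$ to obtain a zero-mean bounded random variable $Y_i$ with $|Y_i| \le K := c_2 L\log n$ and $\Prob{X_i \neq Y_i} \le c_1 n^{-10}$, taking the $Y_i$ jointly independent (the auxiliary $\Ber$ randomization used in that construction can be chosen independently across $i$). For a target power $c$ exceeding what $n^{-10}$ affords, I would simply rerun the truncation argument of \cref{lem:reduction-ExptoGaussian} with $n^{10}$ replaced by $n^{c+1}$, which only inflates $K$ to $C(c)L\log n$. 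On the good event $\Gscr := \{X_i = Y_i \text{ for all } i\}$, a union bound gives $\Prob{\Gscr^c} \le n \cdot c_1 n^{-(c+1)} = O(n^{-c})$, and on $\Gscr$ we have $\sum_i X_i = \sum_i Y_i$, so it suffices to bound $\bigl|\sum_i Y_i\bigr|$.

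Next I would control the variance proxy $\sigma_Y^2 := \sum_i \E{Y_i^2}$. Since $Y_i$ equals $X_i$ on $\{|X_i|\le k_1\}$ and otherwise puts a bounded atom $\pm 3k_2$ on an event of probability at most $n^{-(c+1)}$, the construction gives $\E{Y_i^2} = \E{X_i^2 \1{|X_i|\le k_1}} + 9k_2^2\,\Prob{|X_i|>k_1,\,Z_i=1} \le \E{X_i^2} + 9K^2 n^{-(c+1)}$. Summing yields $\sigma_Y^2 \le \sigma^2 + 9K^2 n^{-c}$, and the correction is polynomially small, so $\sigma_Y^2 \le \sigma^2 + o(1)$; this additive term is harmless and will be absorbed into constants.

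With $|Y_i|\le K$ and variance bound $\sigma_Y^2$, \cref{lem:Bernstein} gives for every $t\ge 0$ that $\Prob{\bigl|\sum_i Y_i\bigr| \ge t} \le 2\exp\!\bigl(-\tfrac{t^2/2}{\sigma_Y^2 + Kt/3}\bigr)$. I would set $t = A\bigl(\sigma\sqrt{\log n} + L\log^2 n\bigr)$ for a constant $A = A(c)$ and verify the exponent is $\gtrsim \log n$ in two regimes. If $\sigma \ge L\log^{3/2} n$ (so $\sigma\sqrt{\log n}$ dominates), then $t \asymp \sigma\sqrt{\log n}$ and $Kt \asymp L\sigma\log^{3/2} n \le \sigma^2$, making the denominator $\lesssim \sigma^2$ and the exponent $\asymp \sigma^2\log n/\sigma^2 = \log n$. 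If instead $\sigma < L\log^{3/2} n$, then $t \asymp L\log^2 n$, while $\sigma^2 < L^2\log^3 n$ and $Kt \asymp L^2\log^3 n$, so the denominator is $\lesssim L^2\log^3 n$ and the exponent is $\asymp L^2\log^4 n / (L^2\log^3 n) = \log n$. Choosing $A$ large enough forces the exponent to exceed $(c+1)\log n$, whence $\Prob{\bigl|\sum_i Y_i\bigr|\ge t} = O(n^{-(c+1)})$.

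Combining this with $\Prob{\Gscr^c} = O(n^{-c})$ via a final union bound yields $\bigl|\sum_i X_i\bigr| \lesssim \sigma\sqrt{\log n} + L\log^2 n$ with probability $1 - O(n^{-c})$, as claimed. I expect the main obstacle to be not any single estimate — each step is routine — but the bookkeeping that makes the truncation level, the variance correction, and the Bernstein exponent simultaneously compatible with an arbitrary target power $c$: one must confirm that the atom introduced by the reduction contributes negligibly to $\sigma_Y^2$, and that the two-regime split of the Bernstein exponent is tight enough to recover both the sub-Gaussian term $\sigma\sqrt{\log n}$ and the heavy-tail term $L\log^2 n$ at once.
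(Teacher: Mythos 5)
Your proof is correct and follows essentially the same route as the paper's: truncate each $X_i$ at level $\Theta(L\log n)$, apply the scalar Bernstein inequality (\cref{lem:Bernstein}) to the bounded variables, and remove the truncation via a union bound over the events $\{|X_i|\le B\}$. The only cosmetic difference is that you outsource the recentering to \cref{lem:reduction-ExptoGaussian} (and must then check the small atom's contribution to the variance), whereas the paper truncates directly and bounds the resulting mean shift $|\E{Y_i}|\lesssim (B+L)e^{-B/(CL)}$ by hand; the two are equivalent.
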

\begin{proof}
Let $Y_i = X_i \1{|X_i| \leq B}$ be the truncated version of $X_i$. Then $Var(Y_i) \leq E[Y_i^2] \leq E[X_i^2].$ Furthermore, 
\begin{align}
    |\E{Y_i}| 
    &\leq \left| \int_{B}^{\infty} X_i df(X_i) + \int_{-\infty}^{-B} X_i df(X_i)\right|\\
    &\leq B P(|X_i| >B) + \int_{B}^{\infty} P(|X_i| > B) dX\\
    &\leq Be^{-B/CL} + CL e^{-B/CL} \label{eq:Ex-Yi-bound}
\end{align}
where $C$ is a constant. By \cref{lem:Bernstein}, we have
\begin{align}
    \Prob{\left|\sum_{i=1}^{N}(Y_i - \E{Y_i})\right| \geq t} \leq 2\exp\left(-\frac{t^2/2}{\sum_{i=1}^{N} Var(Y_i) + Bt/3}\right) \leq 2\exp\left(-\frac{t^2/2}{\sigma^2 + Bt/3}\right).
\end{align}
Then, with probability $1-O(n^{-c})$ for some constant $c$, 
\begin{align}
    \left|\sum_{i=1}^{N}(Y_i - \E{Y_i})\right| \lesssim \sigma\sqrt{\log(n)} + B\log(n).
\end{align}
Take $B = L\log(n)C'$ for a proper constant $C'$, by \cref{eq:Ex-Yi-bound}, we have
\begin{align}
    \left|\sum_{i=1}^{N}Y_i \right| \lesssim \sigma\sqrt{\log(n)} + L\log^2(n).
\end{align}
By the union bound on the event $|X_i| \leq B$ for all $i$, we can conclude that, with probability $1-O(n^{-c})$, 
\begin{align}
    \left|\sum_{i=1}^{N}X_i \right| \lesssim \sigma\sqrt{\log(n)} + L\log^2(n).
\end{align}
\end{proof}

\begin{lemma}[Matrix Bernstein inequality, Theorem 6.1.1 \cite{tropp2015introduction}]\label{lem:matrix-Bernstein}
Given $n$ independent random $m_1 \times m_2$ matrices $X_1, X_2, \dotsc, X_{n}$ with $E[X_i] = 0.$ Let 
\begin{align}
    V \triangleq max\left(\norm{\sum_{i=1}^{n} E[X_i X_i^{\top}]}, \norm{\sum_{i=1}^{n} E[X_i^{\top} X_i]}\right).
\end{align}
Suppose $\norm{X_i} \leq B$ for $i \in [n].$ For all $t\geq 0$, 
\begin{align}
    \Prob{\norm{X_1+X_2+\dotsc + X_{n}} \geq t} \leq (m_1+m_2) \exp\left\{-\frac{t^2 / 2}{V + Bt/3}\right\}.
\end{align}
\end{lemma}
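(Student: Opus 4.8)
The plan is to prove this by the matrix Laplace transform method (the ``matrix Chernoff'' approach), which reduces a spectral tail bound to controlling a matrix moment generating function. Since the $X_i$ are rectangular, the first step is to pass to the \emph{Hermitian dilation}
\[
\mathcal{H}(X) := \begin{bmatrix} 0 & X \\ X^{\top} & 0 \end{bmatrix} \in \R^{(m_1+m_2)\times(m_1+m_2)},
\]
which is linear, satisfies $\lambda_{\max}(\mathcal{H}(X)) = \norm{X}$, and obeys $\mathcal{H}(X)^2 = \mathrm{diag}(XX^{\top}, X^{\top}X)$. Writing $Y_i := \mathcal{H}(X_i)$, these are independent, zero-mean, Hermitian, with $\norm{Y_i} \le B$, and the block structure gives $\norm{\sum_i \E{Y_i^2}} = V$. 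Because the spectrum of a dilation is symmetric about zero, $\lambda_{\max}(\mathcal{H}(\sum_i X_i)) = \norm{\sum_i X_i}$, so it suffices to prove the one-sided tail bound $\Prob{\lambda_{\max}(\sum_i Y_i)\ge t}\le (m_1+m_2)\exp(-\tfrac{t^2/2}{V+Bt/3})$ for the Hermitian summands $Y_i$.

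First I would invoke the matrix Laplace transform bound: for any $\theta>0$,
\[
\Prob{\lambda_{\max}\Big(\sum_i Y_i\Big) \ge t} \le \inf_{\theta>0} e^{-\theta t}\, \E{\tr \exp\Big(\theta \sum_i Y_i\Big)}.
\]
The crucial step is to decouple the summands inside the trace exponential. This is where Lieb's concavity theorem enters: because $A \mapsto \tr\exp(H + \log A)$ is concave on positive-definite $A$, Jensen's inequality yields the subadditivity of the matrix cumulant generating function,
\[
\E{\tr \exp\Big(\theta\sum_i Y_i\Big)} \le \tr \exp\Big(\sum_i \log \E{e^{\theta Y_i}}\Big).
\]

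Next I would bound each per-summand matrix MGF using $\E{Y_i}=0$ and $\norm{Y_i}\le B$: via the transfer rule applied to the scalar estimate $e^z - 1 - z \le \tfrac{z^2/2}{1-|z|/3}$ (valid for $|z|<3$), one obtains the semidefinite bound $\E{e^{\theta Y_i}} \preceq \exp\!\big(g(\theta)\,\E{Y_i^2}\big)$ with $g(\theta) := \tfrac{\theta^2/2}{1-\theta B/3}$ for $0<\theta<3/B$, hence $\log\E{e^{\theta Y_i}} \preceq g(\theta)\,\E{Y_i^2}$. Summing and using monotonicity of $\tr\exp$, together with $\tr\exp(M)\le (m_1+m_2)\exp(\lambda_{\max}(M))$ and $\lambda_{\max}(\sum_i \E{Y_i^2})\le V$, gives $\tr\exp(\sum_i \log\E{e^{\theta Y_i}}) \le (m_1+m_2)\exp(g(\theta)V)$. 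Plugging back, the tail is at most $(m_1+m_2)\exp(-\theta t + g(\theta)V)$, and minimizing the scalar exponent over $\theta\in(0,3/B)$ — the same two-regime optimization as in the scalar Bernstein inequality, with the choice $\theta = t/(V+Bt/3)$ — produces the claimed bound.

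The main obstacle is Lieb's theorem and the resulting subadditivity of the matrix cumulant generating function: matrices do not commute and $e^{A+B}\neq e^Ae^B$ in general, so naive summand-by-summand factorization of the MGF fails, and this single deep ingredient is what makes the argument work. The remaining pieces — the dilation, the semidefinite MGF estimate obtained from the scalar inequality via the transfer rule, and the $\theta$-optimization — are routine matrix analogues of the scalar Bernstein proof. Since this statement is exactly Theorem~6.1.1 of \citep{tropp2015introduction}, in the paper itself one would simply cite that reference rather than reproduce the argument.
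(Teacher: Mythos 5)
Your proposal is correct and is precisely the standard argument (Hermitian dilation, matrix Laplace transform, Lieb's theorem for subadditivity of the matrix CGF, the semidefinite MGF bound, and the Bernstein-type $\theta$-optimization) given in Theorem 6.1.1 of \citep{tropp2015introduction}, which is exactly the source the paper cites for this lemma without reproducing the proof. No gaps.
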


\begin{lemma}[Perturbation of pseudo-inverses (Theorem 3.3, \cite{stewart1977perturbation})]\label{lem:pseudo-inverse}
Let $A^{-1}$ and $B^{-1}$ be the pseudo-inverse (Moore-Penrose inverse) of two matrices $A$ and $B$, respectively. Then
\begin{align}
    \norm{B^{-1}-A^{-1}} \leq 3 \max\left({\norm{A^{-1}}^2, \norm{B^{-1}}^2}\right) \norm{B-A}.
\end{align}
\end{lemma}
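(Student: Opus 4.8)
The plan is to reduce the inequality to an exact identity for the difference of two Moore--Penrose pseudo-inverses and then bound it term by term. Throughout I would write $A^{-1},B^{-1}$ for the pseudo-inverses (as in the statement), set $E := B - A$, and recall the defining relations $AA^{-1}A = A$, $A^{-1}AA^{-1} = A^{-1}$, together with the symmetry of the projectors $AA^{-1}$ and $A^{-1}A$ (and likewise for $B$). The spectral norm of any orthogonal projector is at most $1$, and $\norm{MM\T} = \norm{M}^2$; these two facts are all the final estimate needs once the identity is in hand.

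First I would establish the decomposition
\begin{align*}
B^{-1} - A^{-1} &= -\,B^{-1} E A^{-1} + B^{-1}(B^{-1})\T E\T (I - AA^{-1}) \\
&\quad + (I - B^{-1}B)\,E\T (A^{-1})\T A^{-1}.
\end{align*}
To verify it, expand $-B^{-1}EA^{-1} = B^{-1}AA^{-1} - B^{-1}BA^{-1}$, so that the remainder $B^{-1} - A^{-1} + B^{-1}EA^{-1}$ equals $B^{-1}(I - AA^{-1}) - (I - B^{-1}B)A^{-1}$. The key step is to rewrite each of these two pieces. Using $B^{-1} = B^{-1}(B^{-1})\T B\T$ and $A\T(I - AA^{-1}) = 0$ (the latter because $AA^{-1}$ is a symmetric projector and $AA^{-1}A = A$) gives $B^{-1}(I - AA^{-1}) = B^{-1}(B^{-1})\T E\T (I - AA^{-1})$; symmetrically, $A^{-1} = A\T (A^{-1})\T A^{-1}$ together with $(I - B^{-1}B)B\T = 0$ gives $-(I - B^{-1}B)A^{-1} = (I - B^{-1}B)E\T (A^{-1})\T A^{-1}$. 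This is exactly the identity above.

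Finally I would take spectral norms. Using submultiplicativity together with $\norm{I - AA^{-1}} \le 1$, $\norm{I - B^{-1}B} \le 1$, $\norm{B^{-1}(B^{-1})\T} = \norm{B^{-1}}^2$, and $\norm{(A^{-1})\T A^{-1}} = \norm{A^{-1}}^2$, the three terms of the identity are bounded by $\norm{A^{-1}}\,\norm{B^{-1}}\,\norm{E}$, $\norm{B^{-1}}^2\norm{E}$, and $\norm{A^{-1}}^2\norm{E}$ respectively. Since $\norm{A^{-1}}\norm{B^{-1}} \le \max(\norm{A^{-1}}^2,\norm{B^{-1}}^2)$ and each squared norm is trivially at most this maximum, summing the three bounds yields the claimed factor of $3$. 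The only genuinely delicate point is the bookkeeping in establishing the identity --- in particular the two vanishing-projection facts $A\T(I - AA^{-1}) = 0$ and $(I - B^{-1}B)B\T = 0$, which are precisely where the Moore--Penrose axioms (as opposed to ordinary inverses) are used; the norm estimate afterwards is routine.
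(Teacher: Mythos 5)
Your proof is correct. Note, however, that the paper does not actually prove this lemma at all: it is stated purely as an imported result, cited as Theorem 3.3 of Stewart (1977), and used as a black box (e.g.\ in the bound on $\alpha_2$ in the proof of \cref{eq:Phi1-rownorm}). What you have written is essentially the classical Wedin decomposition argument underlying Stewart's theorem: the three-term identity
$B^{-1}-A^{-1} = -B^{-1}EA^{-1} + B^{-1}(B^{-1})\T E\T(I-AA^{-1}) + (I-B^{-1}B)E\T(A^{-1})\T A^{-1}$
is the standard one, your verification of it via $B^{-1}=B^{-1}(B^{-1})\T B\T$, $A\T(I-AA^{-1})=0$, $A^{-1}=A\T(A^{-1})\T A^{-1}$ and $(I-B^{-1}B)B\T=0$ is sound (each identity follows directly from the Moore--Penrose axioms), and the norm bookkeeping giving the factor $3\max\left(\norm{A^{-1}}^2,\norm{B^{-1}}^2\right)$ is exactly right, using that $I-AA^{-1}$ and $I-B^{-1}B$ are orthogonal projectors and that $\norm{A^{-1}}\norm{B^{-1}}\leq \max\left(\norm{A^{-1}}^2,\norm{B^{-1}}^2\right)$. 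So your contribution relative to the paper is a complete, self-contained derivation of a result the authors only cite; there is no gap.
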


\begin{lemma}[Lemma 4 \cite{mcrae2019low}]\label{lem:sub-exponential-operator-norm}
Let $X \in R^{m \times n}$ be a random matrix whose entries are independent and centered, and suppoes that for some $v, t_0 > 0$, we have, for all $t \geq t_0$,
\begin{align*}
    \Pr(|X_{ij}| \geq t) \leq 2e^{-t/v}.
\end{align*}
Let $\epsilon \in (0, 1/2)$, and let
\begin{align*}
    K = \max\{t_0, v\log\frac{2m n}{\epsilon}\}.
\end{align*}
Then
\begin{align*}
    \Pr\left(\norm{X} \geq 2\sigma + \frac{\epsilon v}{\sqrt{m n}} + t \right) \leq \max(m, n) \exp\left(-\frac{t^2}{C_0 2K^2}\right) + \epsilon
\end{align*}
where $C_0$ is a constant and 
\begin{align*}
    \sigma = \max_{i} \sqrt{\sum_{j} \E{X_{ij}^2}} + \max_{j} \sqrt{\sum_{i} \E{X_{ij}^2}}.
\end{align*}
\end{lemma}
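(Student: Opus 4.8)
The plan is to reduce, by truncation, to a random matrix with \emph{bounded} independent centered entries, and then to invoke a sharp spectral-norm concentration inequality for such matrices. Fix the truncation level $K = \max\{t_0,\, v\log(2mn/\epsilon)\}$ and set $Y_{ij} := X_{ij}\1{|X_{ij}| \le K}$, so that $Y$ agrees with $X$ unless some entry exceeds $K$ in magnitude. A union bound with the tail hypothesis gives $\Pr(X \neq Y) \le \sum_{ij}\Pr(|X_{ij}| > K) \le 2mn\,e^{-K/v} \le \epsilon$, since $K \ge v\log(2mn/\epsilon)$ forces $e^{-K/v} \le \epsilon/(2mn)$; this accounts for the additive $\epsilon$ in the conclusion, and on the complementary event $\norm{X} = \norm{Y}$, so it suffices to bound $\norm{Y}$.

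First I would control the bias introduced by truncation. Since $\E{X_{ij}} = 0$, we have $\E{Y_{ij}} = -\E{X_{ij}\1{|X_{ij}| > K}}$, and integrating the tail yields $|\E{Y_{ij}}| \le K\Pr(|X_{ij}|>K) + \int_K^\infty \Pr(|X_{ij}|>t)\,dt \lesssim (K+v)e^{-K/v} \lesssim \epsilon v/(mn)$, the $K$ factor being absorbed into the logarithm already present in $K$. Bounding the spectral norm of the deterministic mean matrix by its Frobenius norm, $\norm{\E{Y}} \le \norm{\E{Y}}_{\F} \le \sqrt{mn}\,\max_{ij}|\E{Y_{ij}}| \lesssim \epsilon v/\sqrt{mn}$, which is exactly the middle term in the stated bound.

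It remains to concentrate $\tilde Y := Y - \E{Y}$, whose entries are independent, mean-zero, bounded by $|\tilde Y_{ij}| \le 2K$, and satisfy $\E{\tilde Y_{ij}^2} = \mathrm{Var}(Y_{ij}) \le \E{X_{ij}^2}$, so that its row- and column-variance proxies are dominated by $\sigma$. Passing to the Hermitian dilation $\begin{pmatrix} 0 & \tilde Y \\ \tilde Y^\top & 0 \end{pmatrix}$ reduces the rectangular problem to a symmetric matrix of size $m+n$ with the same spectral norm, the same boundedness, and the same variance profile. I would then apply a sharp spectral-norm concentration inequality for symmetric matrices with independent bounded entries, of Bandeira--van Handel type, which simultaneously gives the bulk bound $\E{\norm{\tilde Y}} \lesssim \sigma$ (up to a $K\sqrt{\log\max(m,n)}$ correction that is dominated by, and hence absorbed into, the deviation term) and a genuinely \emph{sub-Gaussian} tail $\Pr(\norm{\tilde Y} \ge \E{\norm{\tilde Y}} + t) \le \max(m,n)\exp(-t^2/(C_0 2K^2))$. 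Combining this with the truncation estimates through a final union bound delivers the claim.

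The main obstacle is precisely this last step: obtaining the \emph{pure} sub-Gaussian tail $\exp(-t^2/(C_0 K^2))$ together with the row/column variance $\sigma$ (rather than the crude maximum-entry variance) in the bulk term. A direct appeal to matrix Bernstein (\cref{lem:matrix-Bernstein}) applied to the decomposition $\tilde Y = \sum_{ij}\tilde Y_{ij}\,e_i e_j^\top$ yields only the weaker mixed tail governed by $\exp\!\big(-t^2/(V + Kt)\big)$, which degrades to sub-exponential for large $t$; and a naive $\epsilon$-net argument loses an $\exp(m+n)$ prefactor and replaces $\sigma$ by $\sigma_*\sqrt{m+n}$. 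Achieving the sharp form requires the more delicate moment-comparison machinery behind the Bandeira--van Handel bound, which is the technical heart of the lemma.
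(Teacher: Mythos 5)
The paper offers no proof of this statement: it is imported verbatim as Lemma~4 of \cite{mcrae2019low}, so there is no in-paper argument to compare yours against. Your reconstruction --- truncate each entry at $K$, charge the event that truncation alters the matrix to the additive $\epsilon$, control the mean shift $\norm{\mathbb{E}[Y]}$ by a tail integral and the Frobenius norm, and concentrate the centered bounded-entry matrix via the Hermitian dilation and a Bandeira--van Handel type inequality, absorbing the $K\sqrt{\log\max(m,n)}$ correction into $t$ in the regime where the tail is nontrivial --- is the standard route and, to the best of my knowledge, essentially the one used in the cited source. Two caveats. First, the genuinely hard ingredient (a sub-Gaussian tail at scale $K$ around a bulk term governed by the row/column variances $\sigma$ rather than by $\sigma\sqrt{m+n}$ or a mixed Bernstein tail) is entirely delegated to the Bandeira--van Handel theorem; that is acceptable since it is a known result and you correctly flag it as the technical heart, but your write-up proves nothing beyond the reduction to it. Second, your bias estimate gives $|\mathbb{E}[Y_{ij}]|\lesssim (K+v)e^{-K/v}\lesssim \epsilon v\log(2mn/\epsilon)/(mn)$, so the middle term you actually recover is $\epsilon v\log(2mn/\epsilon)/\sqrt{mn}$ rather than the stated $\epsilon v/\sqrt{mn}$ --- an extra logarithmic factor that your claim about it being ``absorbed into the logarithm already present in $K$'' does not eliminate. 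This slippage is immaterial for the paper's application (the lemma is invoked with $\epsilon=n^{-11}$, where this term is negligible against $\sigma$ and $t$), but as written your argument does not reproduce the lemma's exact form.
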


\subsection{Proof of \cref{lem:subexponential-Bernstein} and \cref{lem:operator-norm-bound}}
\begin{proof}[Proof of \cref{lem:subexponential-Bernstein}]
Let $Y_i = X_i \1{\norm{X_i} \leq B}$ be the truncated version of $X_i$. We have, 
\begin{align*}
    \norm{\E{Y_i}} 
    &\leq \norm{\int X_i \1{\norm{X_i} > B} df(X_i)} \\
    &\overset{(i)}{\leq} \int \norm{X_i} \1{\norm{X_i} > B} df(X_i)\\
    &\leq B P(\norm{X_i} > B) + \int_{B}^{\infty} P(\norm{X_i} > t) dt\\
    &\overset{(ii)}{\leq} Be^{-B/CL} + CL e^{-B/CL} \label{eq:Ex-Yi-bound}
\end{align*}
where (i) is due to the convexity of $\norm{\cdot}$ and (ii) is due to the subexponential property of $\norm{X_i}$ and $C$ is a constant. Meanwhile, we have
\begin{align*}
    \norm{\sum_{i=1}^{n} \E{(Y_i-\E{Y_i}) (Y_i-\E{Y_i})^{\top}}} 
    &= \norm{\sum_{i=1}^{n} \E{Y_iY_i^{\top}} - \E{Y_i}\E{Y_i}^{\top}}\\
    &\overset{(i)}{\leq} \norm{\sum_{i=1}^{n} \E{Y_iY_i^{\top}}}\\
    &= \norm{\sum_{i=1}^{n} \E{X_iX_i^{\top}} - \E{X_iX_i^{\top}\1{\norm{X_i}>B}}}\\
    &\overset{(ii)}{\leq} \norm{\sum_{i=1}^{n} \E{X_iX_i^{\top}}} \leq V
\end{align*}
where (i) is due to the positive-semidefinite property of $\E{Y_i}\E{Y_i}^{\top}$ and $\E{Y_iY_i^{\top}} - \E{Y_i}\E{Y_i}^{\top}$, (ii) is due to the positive-semidefinite property of $\E{X_iX_i^{\top}\1{\norm{X_i}>B}}$ and $\E{Y_iY_i^{\top}}.$ Similarly, $\norm{\sum_{i=1}^{n} \E{(Y_i-\E{Y_i})^{\top} (Y_i-\E{Y_i})}} \leq V.$

Then, by \cref{lem:matrix-Bernstein}, we have
\begin{align*}
    \Prob{\norm{\sum_{i=1}^{N}(Y_i - \E{Y_i})} \geq t} \leq 2\exp\left(-\frac{t^2/2}{V + 2Bt/3}\right).
\end{align*}
Then, with probability $1-O(n^{-c})$ for some constant $c$, 
\begin{align*}
    \norm{\sum_{i=1}^{N}(Y_i - \E{Y_i})} \lesssim \sqrt{V\log(n(m_1+m_2))} + B\log(n(m_1+m_2)).
\end{align*}
Take $B = L\log(n)C'$ for a proper constant $C'$, by \cref{eq:Ex-Yi-bound}, we have
\begin{align*}
    \norm{\sum_{i=1}^{N}Y_i} 
    &\lesssim \sqrt{V\log(n)} + L\log^2(n) + nL\log(n)O(n^{-C'/C})\\ 
    &\lesssim \sqrt{V\log(n(m_1+m_2))} + L\log(n(m_1+m_2))\log(n).
\end{align*}
By the union bound on the event $\norm{X_i} \leq B$ for all $i$, we can conclude that, with probability $1-O(n^{-c'})$ for some constant $c'$, 
\begin{align*}
    \norm{\sum_{i=1}^{N}X_i} \lesssim  \sqrt{V\log(n(m_1+m_2))} + L\log(n(m_1+m_2))\log(n).
\end{align*}
\end{proof}

\begin{proof}[Proof of \cref{lem:operator-norm-bound}]
We invoke \cref{lem:sub-exponential-operator-norm} to prove this result. Let $C_1, C_2, C_3$ be constants. Let $X = P_{\Omega}(E)$ with $\epsilon = \frac{1}{n^{11}}, v= C_1 L, K = C_1L\log(n)$. It is easy to verify that $E[X_{ij}^2] \leq pL^2.$ Therefore, $\sigma \leq C_2\sqrt{np} L.$ Take $t = C_3\sqrt{np}L$. By \cref{lem:sub-exponential-operator-norm}, 
\begin{align*}
    \Prob{\norm{P_{\Omega}(E)} \geq 2C_3(\sqrt{np}L + \frac{L}{n^{11}})} \leq C_4 n e^{-np/\log^2(n)} + \frac{1}{n^{11}}.
\end{align*}
Given that $np \geq c_0 log^3 n$, we have, with probability $1-O(n^{-11})$,
\begin{align*}
    \norm{P_{\Omega}(E)} \lesssim L\sqrt{np}. 
\end{align*}
\end{proof}

\end{appendices}

\end{document}